\newcommand*{\dif}{\mathop{}\!\mathrm{d}}
\newcommand{\Bo}[1]{{\color{blue} [Bo: #1]}}
\newcommand{\Tongzheng}[1]{{\color{red} [Tongzheng: #1]}}
\newcommand{\revise}[1]{{\color{blue}#1}}
\newcommand{\AlgName}{{Spectral Decomposition Representation}\xspace}
\newcommand{\algabb}{{SPEDER}\xspace}
\DeclarePairedDelimiterX{\infdivx}[2]{(}{)}{%
  #1\;\delimsize\|\;#2%
}
\title{Spectral Decomposition Representation for Reinforcement Learning}
\author{Tongzheng Ren\textsuperscript{1, 2, \thanks{Equal Contribution}}\quad Tianjun Zhang\textsuperscript{3, \footnotemark[1]}\quad Lisa Lee\textsuperscript{1} \quad \textbf{Joseph E. Gonzalez\textsuperscript{3}} \\ \textbf{Dale Schuurmans\textsuperscript{1, 4}} \quad \textbf{Bo Dai\textsuperscript{1}}\\
\textsuperscript{1} Google Research, Brain Team \textsuperscript{2} UT Austin 
\textsuperscript{3} UC Berkeley  \textsuperscript{4} University of Alberta\\
}
\begin{document}

\maketitle

\begin{abstract}
Representation learning often plays a critical role in avoiding the curse of dimensionality in reinforcement learning.
A representative class of algorithms exploits spectral decomposition of the stochastic transition dynamics to construct representations that enjoy strong theoretical properties in idealized settings.
However, current spectral methods suffer from limited applicability because they are constructed for state-only aggregation and are derived from a policy-dependent transition kernel, without considering the issue of exploration.
To address these issues, we propose an alternative spectral method, \emph{Spectral Decomposition Representation} (SPEDER), that extracts a \emph{state-action} abstraction from the dynamics without inducing spurious dependence on the data collection policy, while also balancing the exploration-versus-exploitation trade-off during learning.
A theoretical analysis establishes the sample efficiency of the proposed algorithm in both the online and offline settings. 
In addition, an experimental investigation demonstrates superior performance over current state-of-the-art algorithms across several RL benchmarks.
\if0
    Representation learning plays an important role in the success of reinforcement learning for dealing with the curse of dimensionality. A representative class of algorithms
    exploits the different spectral decomposition of the transitions to construct the representations, and enjoys many desired properties under ideal settings. However, most existing spectral features {are limited in their practicality due to the fact that} they are constructed for {state-only aggregation}, based on a {policy-dependent} transition in an {offline} fashion. 
    In this paper, we propose \emph{\AlgName~(\algabb)}, which extracts a {state-action abstraction} from the dynamics without {inducing spurious dependencies} on the policy collecting the data, while delicately {balancing exploration vs. exploitation} during learning. We theoretically prove the sample efficiency of the proposed algorithm in both online and offline settings, and empirically demonstrate its superior performance over existing state-of-the-art algorithms on several benchmarks.
    \fi
\end{abstract}

\setlength{\abovedisplayskip}{2pt}
\setlength{\abovedisplayshortskip}{2pt}
\setlength{\belowdisplayskip}{2pt}
\setlength{\belowdisplayshortskip}{2pt}
\setlength{\jot}{2pt}

\setlength{\floatsep}{2ex}
\setlength{\textfloatsep}{2ex}

\vspace{-1em}
\section{Introduction}
\vspace{-0.5em}

Reinforcement learning~(RL) seeks to learn an optimal sequential decision making strategy by interacting with an unknown environment, 
usually modeled by a Markov decision process~(MDP). 
For MDPs with finite states and actions, RL can be performed in a \revise{sample efficint} and computationally efficient way; 
however, 
for large or infinite state spaces both the sample and computational complexity increase dramatically.
Representation learning is therefore a major tool to combat the implicit curse of dimensionality in such spaces, contributing to several empirical successes in deep RL,
where policies and value functions are represented as deep neural networks and trained end-to-end~
\citep{mnih2015human,levine2016end,silver2017mastering,bellemare2020autonomous}. 
However, an inappropriate representation can introduce approximation error that grows exponentially in the horizon~\citep{du2019good}, or induce redundant solutions to the 
Bellman constraints 
with
large generalization error~\citep{xiao2021understanding}. 
Consequently, ensuring the quality of representation learning has become an increasingly important consideration in deep RL. 

In prior work, many methods have been proposed to ensure alternative properties of a learned representation, such as reconstruction~\citep{watter2015embed}, bi-simulation~\citep{gelada2019deepmdp, zhang2020learning}, and contrastive learning~\citep{zhang2022making, qiu2022contrastive, nachum2021provable}. Among these methods, a family of representation learning algorithms has focused on constructing features by exploiting the spectral decomposition of different transition operators, including successor features~\citep{dayan1993improving,machado2017eigenoption}, proto-value functions~\citep{mahadevan2007proto,wu2018laplacian}, spectral state aggregation~\citep{duan2019state,zhang2019spectral}, and Krylov bases~\citep{petrik2007analysis,parr2008analysis}. Although these algorithms initially appear distinct, they all essentially factorize a variant of the transition kernel. 
The most attractive property of such representations is that the value function can be \emph{linearly} represented in the learned features, thereby reducing the complexity of subsequent planning. 
Moreover, spectral representations are compatible with deep neural networks~\citep{barreto2017successor}, which makes them easily applicable to optimal policy learning~\citep{kulkarni2016deep} in deep RL.

However, despite their elegance and desirable properties, current spectral representation algorithms exhibit several drawbacks.  
One drawback is that current 
methods generate \emph{state-only} features, which are  
heavily influenced by the behavior policy and can fail to generalize well to alternative polices.
Moreover, most existing spectral representation learning algorithms omit the intimate coupling between representation learning and exploration, and instead learn the representation from a \emph{pre-collected static dataset}. This is problematic as effective exploration depends on having a good representation, while learning the representation requires comprehensively-covered experiences---failing to properly manage this interaction can lead to fundamentally sample-inefficient data collection~\citep{xiao2022curse}. 
These 
limitations lead to suboptimal features and limited empirical performance.

\vspace{-0.5em}
In this paper, we address these important but largely ignored issues, and provide a novel spectral representation learning method that generates \emph{policy-independent} features that provably manage the delicate balance between \emph{exploration and exploitation}. In summary: 
\vspace{-0.5em}
\begin{itemize}[leftmargin=20pt, parsep=0pt, partopsep=0pt]
    \item We provide a spectral decomposition view of several current representation learning methods, and identify the cause of spurious dependencies in state-only spectral features (\Secref{sec:spec_frame}).
    \item We develop a novel model-free objective, \emph{\AlgName~(\algabb)}, that factorizes the policy-independent transition kernel to eliminate policy-induced dependencies, while revealing the connection between model-free and model-based representation learning (\Secref{sec:speder}). 
    \item We provide algorithms that implement the principles of optimism and pessimism 
    in the face of uncertainty using the \algabb features for online and offline RL (\Secref{sec:speder_ucb}), and equip behavior cloning with \algabb for imitation learning (\Secref{sec:speder_bc}).
    \item We analyze the sample complexity of \algabb in both the online and offline settings, to justify the achieved balance between exploration versus exploitation (\Secref{sec:analysis}).
    \item We demonstrate that~\algabb outperforms state-of-the-art model-based and model-free RL algorithms on several benchmarks (\Secref{sec:exp}). 
\end{itemize}

\vspace{-0.5em}
\section{Preliminaries}\label{sec:prelim}
\vspace{-0.5em}
In this section, we briefly introduce Markov Decision Processes (MDPs) with a low-rank structure, and reveal the spectral decomposition view of several representation learning algorithms, which motivates our new spectral representation learning algorithm.

\vspace{-0.5em}
\subsection{Low-Rank Markov Decision Processes}\label{section:low_rank_mdps}
\vspace{-0.5em}
Markov Decision Processes (MDPs) are a standard sequential decision-making model for RL, and can be described as a tuple $\mathcal{M} = (\mathcal{S}, \mathcal{A}, r, P, \rho, \gamma)$, where $\mathcal{S}$ is the state space, $\mathcal{A}$ is the action space, $r:\mathcal{S}\times \mathcal{A} \to [0, 1]$ is the reward function, $P:\mathcal{S} \times \mathcal{A} \to \Delta(\mathcal{S})$ is the transition operator with $\Delta(\Scal)$ as the family of distributions over $\Scal$, $\rho \in \Delta(\mathcal{S})$ is the initial distribution and $\gamma \in (0, 1)$ is the discount factor. The goal of RL is to find a policy $\pi:\mathcal{S} \to \Delta(\mathcal{A})$ 
that maximizes the cumulative discounted reward $V_{P, r}^\pi := \mathbb{E}_{s_0 \sim \rho, \pi}\left[\sum_{i=0}^\infty \gamma^i r(s_i, a_i)|s_0\right]$ by interacting with the MDP.
The value function is defined as $V_{P, r}^\pi(s) = \mathbb{E}_{\pi}\left[\sum_{i=0}^{\infty}\gamma^i r(s_i, a_i)|s_0 = s\right]$, and the action-value function is $Q_{P, r}^\pi(s, a) = \mathbb{E}_{\pi}\left[\sum_{i=0}^{\infty}\gamma^i r(s_i, a_i)|s_0 = s, a_0 = a\right]$. These definitions imply the following recursive relationships: 
\begin{align*}
    V_{P, r}^\pi(s) = \mathbb{E}_{\pi}\left[Q_{P, r}^\pi(s, a)\right], \quad Q_{P, r}^\pi(s, a) = r(s, a) + \gamma \mathbb{E}_{P}\left[V_{P, r}^\pi(s^\prime)\right].
\end{align*}
We additionally define the state visitation distribution induced by a policy $\pi$ as $d_{P}^{\pi}(s) = (1 - \gamma) \mathbb{E}_{s_0\sim \rho, \pi} \mathbb{E}\left[\sum_{t=0}^{\infty}\gamma^t\mathbf{1}(s_t = s)|s_0\right]$, where $\mathbf{1}(\cdot)$ is the indicator function. 

When $|\mathcal{S}|$ and $|\mathcal{A}|$ are finite, there exist sample-efficient algorithms that find the optimal policy by maintaining an estimate of $P$ or $Q_{P, r}^\pi$~\citep{azar2017minimax,jin2018q}. However, such methods cannot be scaled up when $|\mathcal{S}|$ and $|\mathcal{A}|$ are extremely large or infinite. In such cases, function approximation is needed to exploit the structure of the MDP while avoiding explicit dependence on $|\mathcal{S}|$ and $|\mathcal{A}|$. The low rank MDP is one of the most prominent structures that allows for simple yet effective function approximation in MDPs, which is based on the following spectral structural assumption on $P$ and $r$:
\begin{assumption}[Low Rank MDP, \citep{jin2020provably, agarwal2020flambe}]
\label{assump:low_rank_mdp}
An MDP $\mathcal{M}$ is a \emph{low rank MDP} if there exists a low rank spectral decomposition of the transition kernel $P(s'|s, a)$, such that
\begin{align}
    P(s^\prime|s, a) = \langle \phi(s, a), \mu(s^\prime)\rangle, \quad r(s, a) = \langle \phi(s, a), \theta_r\rangle,
    \label{eq:low_rank_mdp_def}
\end{align}
with two spectral maps $\phi:\mathcal{S}\times \mathcal{A} \to \mathbb{R}^d$ and $\mu:\mathcal{S} \to \mathbb{R}^d$, and a vector $\theta_r \in \mathbb{R}^d$. The $\phi$ and $\mu$ also satisfy the following normalization conditions: 
\begin{align}
    \label{eq:normalization_condition}
    \textstyle
    \hskip-0.7em \forall (s, a),~ \|\phi(s, a)\|_2 \leq 1, ~\|\theta_r\|_2 \leq \sqrt{d}, \forall g:\mathcal{S}\to\mathbb{R},~\|g\|_{L_{\infty}}\leq 1~, \left\|\int_{\mathcal{S}}\mu(s^\prime)g(s^\prime) d s^\prime\right\|_2 \leq \sqrt{d}. 
\end{align}
\end{assumption}
\vspace{-2mm}
The low rank MDP allows for a linear representation of $Q_{P, r}^{\pi}$ for any {\color{black} \emph{arbitrary policy $\pi$}}, since
\begin{align}\label{eq:linear_q}
\textstyle
    Q_{P, r}^\pi(s, a) = r(s, a) + \gamma\int V_{P, r}^\pi(s) P(s^\prime|s, a) d s^\prime =  \left\langle \phi(s, a), \theta_r 
    + \gamma\int V_{P, r}^\pi(s^\prime) \mu(s^\prime) d s^\prime\right\rangle.
\end{align}
Hence, we can provably perform computationally-efficient planning and sample-efficient exploration in a low-rank MDP given $\phi(s, a)$, as shown in \citep{jin2020provably}. However, $\phi(s, a)$ is generally unknown to the reinforcement learning algorithm, and must be learned via representation learning to leverage the structure of low rank MDPs. 



\vspace{-0.5em}
\subsection{Spectral Framework for Representation Learning}\label{sec:spec_frame}
\vspace{-0.5em}


Representation learning based on a spectral decomposition of the transition dynamics was investigated as early as~\citet{dayan1993improving}, 
although the explicit study began with \citet{mahadevan2007proto}, which constructed features via eigenfunctions from Laplacians of the transitions. This inspired a series of subsequent work on spectral decomposition representations,
including the Krylov basis~\citep{petrik2007analysis,parr2008analysis}, continuous Laplacian~\cite{wu2018laplacian}, and spectral state aggregation~\citep{duan2019state,zhang2019spectral}. We summarize these algorithms in Table~\ref{table:spectral_connections} to reveal their commonality from a unified perspective, which motivates the development of a new algorithm. A similar summary has also been provided in~\citep{ghosh2020representations}. 

\begin{wraptable}{r}{0.47\textwidth}
\vspace{-5mm}
\caption{A unified spectral decomposition view of existing related representations. Here, $r$ denotes the reward function, $\Lambda$ denotes some diagonal reweighting operator, and $P^\pi(s'|s) = \int P(s'|s, a)\pi(a|s)da$. 
}
\vspace{-3mm}
\label{table:spectral_connections}
\begin{center}
\scalebox{0.75}
{
\begin{tabular}{c|c}
\hline
Representation  &Decomposed Dynamics \\
\hline
Successor Feature & $\texttt{svd}\rbr{\rbr{I - \gamma P^\pi}^{-1}}$\\
Proto-Value Function &$\texttt{eig}\rbr{\Lambda P^\pi + \rbr{P^\pi}^\top\Lambda}$ \\
Krylov Basis & $\{\rbr{P^\pi}^ir\}_{i=1}^k$\\
Spectral State-Aggregation &$\texttt{svd}\rbr{P^\pi}$\\
\hline
\hline
\end{tabular}
}
\vspace{-4mm}
\end{center}
\end{wraptable}
These existing spectral representation methods construct features based on the spectral space of the state transition probability $P^\pi(s'|s) = \int P(s'|s, a)\pi(a|s)da$, induced by some policy $\pi$. Such a transition operator introduces inter-state dependency from the specific $\pi$, and thus injects an inductive bias into the state-only spectral representation, resulting in features that might not be generalizable to other policies. To make the state-feature generalizable, some work has resorted incorporating a linear action model \citep[\eg][]{yao2012approximate, gehring2018adaptable} where action information is stored in the linear weights. However, this work requires known state-features, and it is not clear how to combine the linear action model with the spectral feature framework. Moreover, these existing spectral representation methods completely ignore the problem of exploration, which affects the composition of the dataset for representation learning and is conversely affected by the learned representation during the data collection procedure. These drawbacks have limited the performance of spectral representations in practice.

\vspace{-1em}
\section{Spectral Decomposition Representation Learning}\label{sec:speder}
\vspace{-0.5em}
To address these issues, we provide a novel spectral representation learning method, which we call
\emph{SPEctral DEcomposition Representation~(\algabb)}.
\algabb is compatible with stochastic gradient updates, and is therefore naturally applicable to general practical settings. We will show that \algabb can be easily combined with the principle of optimism in the face of uncertainty to obtain sample efficient online exploration, and can also be leveraged to perform latent behavioral cloning. 

As discussed in~\Secref{sec:prelim}, the fundamental cause of the spurious dependence in state-only spectral features arises from the state transition operator $P^\pi\rbr{s'|s}$, which introduces inter-state dependence induced by a specific behavior policy $\pi$. To resolve this issue, we extract the spectral feature from $P(s'|s, a)$ alone, which is \emph{invariant} to the policy, thereby resulting in a more stable spectral representation. 

Assume we are given a set of observations $\{(s_i, a_i, s_i^\prime)\}_{i=1}^n$ sampled from $\rho_0(s, a) \times P(s^\prime|s, a)$, and want to learn spectral features $\phi(s, a)\in \RR^d$ and $\mu(s^\prime)\in \RR^d$, which are produced by function approximators like deep neural nets, such that:
\begin{equation}
    \vspace{-1mm}
    P(s^\prime|s, a) \approx \phi(s, a)^\top \mu(s^\prime).
\label{eq:density-estimation-problem}
\end{equation}
Such a representation allows for a simple linear parameterization of $Q$ for any policy $\pi$, making the planning step efficient, as discussed in~\secref{section:low_rank_mdps}. 
Based on~\eq{eq:density-estimation-problem}, one can exploit density estimation techniques, \eg, 
maximum likelihood estimation (MLE), to estimate $\phi$ and $\mu$:
\vspace{-2pt}
\begin{align}\label{eq:mle}
    \left(\widehat{\phi}, \widehat{\mu}\right) = \mathop{\arg\max}_{\phi\in \Phi, \mu\in \Psi} \frac{1}{n}\sum_{i=1}^n \log{\phi(s_i, a_i)^\top\mu(s_i^\prime)} - \log{Z(s, a)},
\end{align}
where $Z(s, a) = \int_{\mathcal{S}} \phi(s, a)^\top \mu(s^\prime) \dif s^\prime$. In fact,~\citep{agarwal2020flambe,uehara2021representation} provide rigorous theoretical guarantees when a computation oracle for solving~\eq{eq:mle} is provided. However, such a computation for MLE is highly nontrivial. Meanwhile, the MLE~\eq{eq:mle} is invariant to the scale of $\phi$ and $\mu$; that is, if $(\phi, \mu)$ is a solution of~\eq{eq:mle}, then $(c_1\phi, c_2\mu)$ is also a solution of~\eq{eq:mle} for any $c_1, c_2 > 0$. 
Hence, we generally do not have $Z(s, a) = 1$ for any $(s, a)$, and we can only use $P(s'|s, a) = \rbr{\frac{\phi(s, a)}{Z(s, a)}}^\top \mu(s')$. 
Therefore, we need to use $\widetilde\phi(s, a)\defeq \frac{\phi(s, a)}{Z(s, a)}$ to linearly represent the $Q$-function, which incurs an extra estimation requirement for $Z(s, a)$.

Recall that the pair $\phi(s, a)$ and $\mu(s')$ actually form the subspace of transition operator $P(s'|s, a)$, so instead of MLE for the factorization of $P(s'|s, a)$, we can directly apply singular value decomposition~(SVD) to the transition operator to bypass the computation difficulties in MLE. Specifically, the SVD of transition operator can be formulated as 
\begin{align}
\textstyle
    &\max_{\EE\sbr{\phi\phi^\top} =I_d}\,\, \revise{\nbr{\mathbb{E}_{\rho_0}\sbr{ P(s^\prime|s, a) \phi(s, a)}}_2^2} \label{eq:svd_form}\\
    =& \max_{\EE\sbr{\phi\phi^\top} =I_d/d}\max_{\mu}\,\, 2\mathrm{Trace}\rbr{\EE_{\rho_0}\sbr{\int\mu(s')P(s'|s, a)\phi(s, a)^\top ds'}} - 1 / d \int \mu(s')^\top\mu(s')ds' \nonumber\\
    =& \max_{\EE\sbr{\phi\phi^\top} = I_d/d, \mu^\prime} \,\, 2\EE_{\rho_0\times P}\sbr{\phi(s, a)^\top\mu'\rbr{s^\prime} {p(s^\prime)}} - \mathbb{E}_{p} \left[ p(s^\prime)\mu^\prime(s^\prime)^\top \mu^\prime(s^\prime) \right] / d,
    \label{eq:svd_dual}
\end{align}
where \revise{$\nbr{\cdot}_2$ denotes the $L_2(\mu)$ norm where $\mu$ denotes the Lebesgue measure for continuous case and counting measure for discrete case,} the second equality comes from the Fenchel duality of $\nbr{\cdot}_2^2$ with up-scaling of $\mu$ by $\sqrt{d}$, and the third equality comes from reparameterization $\mu(s^\prime) = {p(s^\prime)}\mu'\rbr{s^\prime}$ with some parametrized probability measure $p(s^\prime)$ supported on the state space $\mathcal{S}$.
\vspace{-0.5em}

As \eq{eq:svd_dual} can be approximated with the samples, it can be solved via stochastic gradient updates, where the constraint is handled via the penalty method as in~\citep{wu2018laplacian}. This algorithm starkly contrasts with existing policy-dependent methods for spectral features via explicit eigendecomposition of state transition matrices~\citep{mahadevan2007proto, machado2017laplacian, machado2017eigenoption}.

\vspace{-2mm}
\paragraph{Remark (equivalent model-based view of~\eq{eq:svd_dual}):} We emphasize that the SVD variational formulation in~\eq{eq:svd_form} is model-free, according to the categorization in~\citet{modi2021model}, without explicit modeling of $\mu$. Here, $\mu$ is only introduced only for tractability. This reveals an interesting model-based perspective on representation learning. 

\if 0
\Bo{
Note that when $|\mathcal{S}| < \infty$, MLE is equivalent to the softmax regression with the logit $\phi(s, a) \mu(s^\prime)$, which is computationally tractable when $|\mathcal{S}|$ is {\color{orange}small} \citep{agarwal2020flambe, uehara2021representation}. 
However, {\color{orange} as $|\mathcal{S}|$ becomes larger or goes to infinity, approximating the denominator $\int_{\mathcal{S}} \phi(s_i, a_i)\mu(s^\prime) \dif s^\prime$ can become computationally hard, even for a fixed sample $(s_i, a_i)$.}
Hence, we want to resort to a more computationally tractable method for performing representation learning. 
\Tongzheng{But we also need to approximate an integral. @Bo: Can you help to handle the issue of integration here?}

To eliminate the scaling issue in the optimal solution to~\eqref{eq:mle}, }
\fi

We draw inspiration from 
spectral
conditional density estimation~\citep{grunewalder2012conditional}:
\vspace{-2mm}
\begin{align}
    \min_{\phi, \mu}\mathbb{E}_{(s, a) \sim \rho_0} \left\|P(\cdot|s, a) - \phi(s, a)^\top\mu(\cdot)\right\|_2^2.
    \label{eq:population_obj}
    \vspace{-2mm}
\end{align}
This objective~\eq{eq:population_obj} has a unique global minimum, $\phi(s, a)^\top \mu(s^\prime) = P(s^\prime|s, a)$, thus it can be used as an alternative representation learning objective.

However, the objective~\eq{eq:population_obj} is still intractable when we only have access samples from $P$.  To resolve the issue, we note that
\begin{align}
    L(\phi, \mu)& \defeq  \mathbb{E}_{(s, a) \sim \rho_0} \left\|P(\cdot|s, a) - \phi(s, a)^\top\mu(\cdot)\right\|_2^2 \nonumber\\
    = & C - 2\mathbb{E}_{(s, a) \sim \rho_0, s^\prime\sim P(s^\prime|s, a)} \sbr{\phi(s, a)^\top \mu(s^\prime)} + \mathbb{E}_{(s, a) \sim \rho_0}\sbr{\int_{\mathcal{S}} \left(\phi(s, a)^\top \mu(s^\prime)\right)^2 \dif s^\prime},
    \label{eq:population_obj_alternative}
\end{align}
where $C = \mathbb{E}_{s, a\sim \rho_0}\sbr{\int \left(P(s^\prime|s, a)\right)^2}$ is a problem-dependent constant. 
For the third term, we turn to an approximation method by reparameterization $\mu(s^\prime) = p(s^\prime) \mu^\prime(s^\prime)$,
\begin{align*}
    \mathbb{E}_{(s, a) \sim \rho_0} \sbr{\int_{\mathcal{S}} \left(\phi(s, a)^\top\mu(s^\prime)\right)^2 \dif s^\prime }= \mathrm{Trace}\left( \mathbb{E}_{(s, a) \sim {\color{black} \rho_0}}\sbr{\phi(s, a)\phi(s, a)^\top} {\color{black} \mathbb{E}_{p}\left[ p(s^\prime)\mu^\prime(s^\prime) \mu^\prime(s^\prime)^\top\right] }\right).
\end{align*}
Under the constraint that $\mathbb{E}_{s, a}[\phi(s, a) \phi(s, a)^\top] = I_d/d$, 
we have
\begin{align*}
     \mathrm{Trace}\left( \mathbb{E}_{(s, a) \sim {\color{black}\rho_0}} \sbr{\phi(s, a)\phi(s, a)^\top} {\color{black} \mathbb{E}_{p}\left[ p(s^\prime)\mu^\prime(s^\prime) \mu^\prime(s^\prime)^\top\right]}\right) = {\color{black} \mathbb{E}_{p}\left[ p(s^\prime)\mu^\prime(s^\prime)^\top \mu^\prime(s^\prime)\right]}/d.
\end{align*}
Hence, \Eqref{eq:population_obj} can be written equivalently as: 
\begin{align}
 \min_{\phi, {\color{black} \mu^\prime}} -\mathbb{E}_{(s, a, s^\prime) \sim \rho_0 \times P} \left[ \phi(s, a)^\top \mu^\prime(s^\prime)p(s^\prime) \right]
 + \left({\color{black} \mathbb{E}_{p(s^\prime)} \left[ p(s^\prime) \mu^\prime(s^\prime)^\top \mu^\prime(s^\prime) \right]}\right)/(2d)\nonumber
 \\
 \,\,\textrm{s.t.}~\mathbb{E}_{(s,a)\sim \rho_0}\left[ \phi(s,a) \phi(s,a)^\top \right] = I_d/d,
 \label{eq:speder-objective}
\end{align}
which is exact as the dual form of the SVD in~\eq{eq:svd_dual}. Such an equivalence reveals an interesting connection between model-free and model-based representation learning, obtained through duality, which indicates that the spectral representation learned via SVD is implicitly minimizing the model error in $L_2$ norm. 
This connection paves the way for theoretical analysis. 

\vspace{-0.5em}
\subsection{Online Exploration and Offline Policy Optimization with~\algabb}\label{sec:speder_ucb}
\vspace{-0.5em}
Unlike existing spectral representation learning algorithms, where the features are learned based on a pre-collected static dataset, we can use \algabb to perform sample efficient online exploration. In Algorithm~\ref{alg:online_algorithm}, we show how to use the representation obtained from the solution to~\eq{eq:speder-objective} to perform sample efficient online exploration under the principle of optimism in the face of uncertainty. 
Central to the algorithm is the newly proposed representation learning procedure (Line~\ref{line:representation_online} in Algorithm~\ref{alg:online_algorithm}), which learns the representation $\widehat{\phi}(s, a)$ and the model $\widehat{P}(s^\prime|s, a) = \widehat{\phi}(s, a)^\top \widehat{\mu}(s)$ 
with adaptively collected exploratory data. After recovering the representation, we use the standard elliptical potential~\citep{jin2020provably, uehara2021representation} as the bonus (Line~\ref{line:bonus} in Algorithm~\ref{alg:online_algorithm}) to enforce exploration. We then plan using the learned model $\widehat{P}_n$ with the reward bonus $\widehat{b}_n$ to obtain a new policy that is used to collect additional exploratory data. These procedures iterate, comprising~\Algref{alg:online_algorithm}.

\begin{algorithm}[t] 
\caption{Online Exploration with \algabb} \label{alg:online_algorithm}
\begin{algorithmic}[1]
  \State \textbf{Input:} Regularizer $\lambda_n$, parameter $\alpha_n$, Model class ${\color{black} \mathcal{F}}=\{(\phi, \mu):\phi\in \Phi, \mu\in \Psi,\}$, Iteration $N$
  \State Initialize $\pi_0(\cdot\mid s)$ to be uniform; set $\mathcal{D}_0 = \emptyset$, $\mathcal{D}_0^\prime = \emptyset$
  \For{episode $n=1,\cdots,N$ } 
  \State Collect the transition $(s,a,s^\prime, a^\prime, \Tilde{s})$ where $s\sim d_{P^\star}^{\pi_{n-1}}$, $a\sim \mathcal{U}(\mathcal{A})$, $s^\prime\sim P^\star(\cdot | s,a)$,$ a^\prime \sim \mathcal{U}(\mathcal{A})$, $\tilde{s} \sim P^\star(\cdot|s^\prime, a^\prime)$, where $\mathcal{U}(\mathcal{A})$ denotes the uniform distribution on $\mathcal{A}$. \label{line:data_collection}
  \State $\mathcal{D}_n = \mathcal{D}_{n-1} \cup \{s,a,s^\prime\}$, $\mathcal{D}_n^\prime = \mathcal{D}_{n-1}^\prime \cup \{s^\prime, a^\prime, \tilde{s}\}$.
  \State Learn representation $\widehat{\phi}(s, a)$ with $\mathcal{D}_n \cup \mathcal{D}_n^\prime$ via \eqref{eq:speder-objective} . \label{line:representation_online} 
  \State Update the empirical covariance matrix 
  \begin{equation*}
  \textstyle
      \widehat\Sigma_n = \sum_{s,a\in\mathcal{D}_n} \widehat\phi_n(s,a) \widehat\phi_n(s,a)^{\top} + \lambda_n I
  \end{equation*}
  \State Set the exploration bonus $\widehat b_n(s,a)= \alpha_n \sqrt{\widehat \phi_n(s,a)^{\top}\widehat \Sigma^{-1}_{n}\widehat \phi_n(s,a)} $\label{line:bonus}
  \State Update policy \label{line:online_plan}
    $ \pi_n=\mathop{\arg\max}_{\pi}V^{\pi}_{\widehat P_n,r+\widehat b_n}$ 
  \EndFor
  \State \textbf{Return } $\pi_1,\cdots,\pi_N$
\end{algorithmic}
\end{algorithm}
\algabb can also be combined with the pessimism principle to perform sample efficient offline policy optimization. Unlike the online setting where we enforce exploration by adding a bonus to the reward, we now \emph{subtract} the elliptical potential from the reward to avoid risky behavior. 
For completeness, we include the algorithm for offline policy optimization in Appendix~\ref{sec:offline_alg}.

\vspace{-1em}
\paragraph{On the requirements of $\widehat{P}_n$.} As we need to plan with the learned model $\widehat{P}_n$, we generally require $\widehat{P}_n$ to be a valid transition kernel, but the representation learning objective~\eq{eq:speder-objective} does not explicitly enforce this. 
Therefore in our implementations, we use the data from {\color{black} the replay buffer collected during the past executions} to perform planning. 
We can also enforce that $\widehat{P}_n$ is a valid probability by adding \revise{the following additional regularization term~\citet{ma2018noise}:}
\begin{equation}
    {\textstyle
    \mathbb{E}_{(s, a)} \left[\left(\log \int_{\mathcal{S}}\phi(s, a)^\top \mu^\prime(s^\prime)p(s^\prime)
    ds^\prime\right)^2\right]},
    \label{eq:speder-normalization-regularizer}
\end{equation}
which can be approximated with samples from $p(s^\prime)$. \revise{Obviously, the regularization is non-negative and achieves zero when $\int_{\mathcal{S}} \phi(s,a)^\top \mu^\prime(s^\prime) p(s^\prime) d s^\prime=1$. }
\vspace{-1em}
\paragraph{Practical Implementation}
We parameterize $\phi(s, a)$ and $\mu'(s^\prime)$ as separate MLP networks, and train them by optimizing objective~\eq{eq:speder-objective}. Instead of using a linear $Q$ on top of $\phi(s, a)$, as suggested by the low-rank MDP, we parameterize the critic network as a two-layer MLP on top of the learned representation $\phi(s, a)$ to support the nonlinear exploration bonus and entropy regularization.
Unlike other representation learning methods in RL, we do not backpropagate the gradient from TD-learning to the representation network $\phi(s, a)$. To train the policy, we use the Soft Actor-Critic (SAC) algorithm~\citep{haarnoja2018soft}, and alternate between policy optimization and critic training.

\vspace{-0.5em}
\subsection{Spectral Representation for Latent Behavioral Cloning}\label{sec:speder_bc}
\vspace{-0.5em}

We additionally expand the use of the learned spectral representation $\phi(s, a)$ as skills for downstream imitation learning, which seeks to mimic a given set of expert demonstrations. Specifically, recall the correspondence between the max-entropy policy and $Q$-function, \ie, 
\vspace{-1mm}
\begin{equation}\label{eq:pi_q}
    \pi_Q(a|s)\defeq \frac{\exp(Q(s, a))}{\sum_{a\in\Acal}\exp(Q(s, a))} = \argmax_{\pi(\cdot|s)\in \Delta(\Acal)}\EE_\pi\sbr{Q(s, a)} + H\rbr{\pi},
\end{equation}
where $H\rbr{\pi}\defeq \sum_{a\in\Acal}\pi(a|s)\log \pi(a|s)$. Therefore, given a set of linear basis functions for $Q$, $ \cbr{\phi_i}_{i=1}^d$, the we can construct the policy basis, or skill sets, based on $\phi$ according to~\eq{eq:pi_q}, which induces the policy family $\pi_w(a|s) \propto \exp(w^\top \phi(s, a))$. We emphasize that the policy construction from the skills is \emph{no longer linear}. This inspires us to use a latent variable composition to approximate policy construction, \ie, 
$\pi(a|s) = \int \pi_\alpha(a|s, z)\pi_Z(z|s)dz$, with $z = \phi(s, a)$ to insert the learned representation. The policy decoder $\pi_\alpha: \mathcal{S} \times Z \rightarrow \Delta(\mathcal{A})$ and the policy encoder $\pi_Z: \mathcal{S} \rightarrow \Delta(Z)$ can be composed to form the final policy.

We assume access to a fixed set of expert transitions $\mathcal{D}^{\pi^*} = \{ (s_t, a_t, s_{t+1}) : s_t \sim d_P^{\pi^*}, a_t \sim \pi^E(s_t),  s_{t+1} \sim P(s' \mid s_t, a_t) \}$. 
In practice, while expert demonstrations can be expensive to acquire, non-expert data of interactions in the same environment can be more accessible to collect at scale, and provide additional information about the transition dynamics of the environment. We denote the offline transitions $\mathcal{D}^\textrm{off} = \{ (s, a, s') \}$ from the same MDP, which is collected by a non-expert policy with suboptimal performance (\eg, an exploratory policy). We follow latent behavioral cloning~\citep{yang2021trail,yang2021representation} where learning is separated into a pre-training phase, where a representation $\phi: \mathcal{S} \times \mathcal{A} \rightarrow Z$ and a policy decoder $\pi_\alpha: \mathcal{S} \times Z \rightarrow \Delta(\mathcal{A})$ are learned on the basis of the suboptimal dataset $\mathcal{D}^\textrm{off}$, and a downstream imitation phase that learns a latent policy $\pi_Z: \mathcal{S} \rightarrow \Delta(Z)$ using the expert dataset $\mathcal{D}^{\pi^*}$. With \algabb, we perform latent behavior cloning as follows: 
\vspace*{-0.5em}
\begin{enumerate}[leftmargin=10pt, parsep=0pt, partopsep=5pt]
\item \textbf{Pretraining Phase:}\; We pre-train $\phi(s,a)$ and $\mu(s')$ on $\mathcal{D}^\textrm{off}$ by minimizing the objective~\eq{eq:speder-objective}. Additionally, we train a policy decoder $\pi_\alpha(a \mid s, \phi(s,a))$ that maps latent action representations to actions in the original action space, by minimizing the action decoding error:
$$
\mathbb{E}_{s \sim d_P^\textrm{off}}\left[ - \log \pi_\alpha(a \mid s, \phi(s, a)) \right]
$$
\item \textbf{Downstream Imitation Phase:}\; We train a latent policy $\pi_Z: S \rightarrow \Delta(Z)$ by minimizing the latent behavioral cloning error:
$$
\mathbb{E}_{(s,a) \sim d_P^{\pi^*}} \left[ - \log \pi_Z(\phi(s,a) \mid s) \right]
$$
\end{enumerate}
\vspace*{-1em}
At inference time, given the current state $s \in \mathcal{S}$, we sample a latent action representation $z \sim \pi_Z(s)$, then decode the action $a \sim \pi_\alpha(a \mid s, z)$.


\vspace{-0.5em}
\section{Theoretical Analysis}\label{sec:analysis}
\vspace{-0.5em}
In this section, we establish generalization properties of the proposed representation learning algorithm, and provide sample complexity and error bounds when the proposed representation learning algorithm is applied to online exploration and offline policy optimization. 

\vspace{-0.5em}
\subsection{Non-asymptotic Generalization Bound}
\vspace{-0.5em}
We first state a performance guarantee on the representation learned with the proposed objective.
\begin{theorem}
{\color{black} Assume the size of candidate model class $|\mathcal{{\color{black} \mathcal{F}}}| < \infty$}, $P \in {\color{black} \mathcal{F}}$, and for any $\tilde{P}\in \mathcal{F}$, $\tilde{P}(s^\prime|s, a) \leq C$ for all $(s, a, s^\prime)$. Given the dataset $\mathcal{D}:=\{(s_i, a_i, s_i^\prime)\}_{i=1}^n$ where $(s_i, a_i) \sim \rho_0$, $s_i^\prime \sim P(\cdot|s_i, a_i)$, the estimator $\widehat{P}$ obtained by empirical surrogate of~\eq{eq:population_obj_alternative} satisfies the following inequality with probability at least $1-\delta$:
\begin{align}
    \mathbb{E}_{(s, a)\sim \rho_0} \|P(\cdot|s, a) - \widehat{P}(\cdot|s, a)\|_2^2 \leq  \frac{C^\prime\log |{\color{black} \mathcal{F}}|/\delta}{n}
    \label{eq:generalization},
\end{align}
where $C^\prime$ is a constant that only depends on $C$, {\color{black} which we will omit in the following analysis.}
\end{theorem}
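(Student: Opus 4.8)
The plan is to recognize the statement as a fast-rate excess-risk bound for empirical risk minimization under the squared ($L^2$) loss, and to prove it by the standard ``basic inequality plus Bernstein'' argument specialized to the factored-transition loss~\eqref{eq:population_obj_alternative}. First I would rewrite the population objective as $L(\tilde P) = C - 2\langle P,\tilde P\rangle_{L_2(\rho_0)} + \|\tilde P\|_{L_2(\rho_0)}^2 = \mathbb E_{(s,a)\sim\rho_0}\|P(\cdot|s,a)-\tilde P(\cdot|s,a)\|_2^2$, where $\tilde P=\phi^\top\mu$. Since $P\in\mathcal F$ realizes the population minimum $L=0$, the excess risk of any model is exactly the target quantity $\Delta(\tilde P):=\mathbb E_{(s,a)\sim\rho_0}\|P(\cdot|s,a)-\tilde P(\cdot|s,a)\|_2^2$. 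Dropping the constant $C$, the empirical surrogate is the sample mean over $\mathcal D$ of the per-datum loss $\xi_i(\tilde P)=-2\,\tilde P(s_i'|s_i,a_i)+\int \tilde P(s'|s_i,a_i)^2\,ds'$, and $\widehat P$ is its minimizer, so $\frac1n\sum_i\xi_i(\widehat P)\le\frac1n\sum_i\xi_i(P)$.

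Next I would define the excess-loss variables $Z_i(\tilde P):=\xi_i(P)-\xi_i(\tilde P)$. Writing $\eta:=\tilde P-P$ and using $\mathbb E[\tilde P(s_i'|s_i,a_i)\mid s_i,a_i]=\int P\,\tilde P\,ds'$, a short computation gives $\mathbb E[Z_i(\tilde P)]=-\Delta(\tilde P)$, so the ERM optimality reads $\frac1n\sum_i Z_i(\widehat P)\ge 0$. The crux is a variance-to-mean inequality: I will show $\operatorname{Var}(Z_i(\tilde P))\le \kappa(C)\,\Delta(\tilde P)$ for a constant $\kappa$ depending only on the uniform bound $C$. This is exactly where the boundedness hypothesis $\tilde P\le C$ is indispensable. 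The only stochastic part of $Z_i$ is $2\,\eta(s_i'|s_i,a_i)$, whose conditional second moment is $4\int P\,\eta^2\,ds'\le 4C\int\eta^2\,ds'$ because $P\le C$; averaging over $\rho_0$ converts this $P$-weighted quantity into $4C\,\Delta(\tilde P)$. The remaining $(s,a)$-measurable terms, $\int\eta^2\,ds'+2\int P\eta\,ds'$, are likewise controlled by $\Delta$ via Cauchy--Schwarz together with $\int P^2\,ds'\le C$ and the $L_1$-boundedness $\int|\eta|\,ds'\le 2$; the same bounds supply the range $|Z_i|\le\kappa'(C)$ needed below.

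With the variance bound in hand I would apply Bernstein's inequality to $Z_i(\tilde P)$ for each fixed $\tilde P$ and take a union bound over the finite class $\mathcal F$, so that with probability at least $1-\delta$, simultaneously for all $\tilde P\in\mathcal F$,
\[
\frac1n\sum_i Z_i(\tilde P)\le -\Delta(\tilde P)+\sqrt{\frac{2\kappa(C)\,\Delta(\tilde P)\log(|\mathcal F|/\delta)}{n}}+\frac{\kappa'(C)\log(|\mathcal F|/\delta)}{3n}.
\]
Applying AM--GM to the middle term ($\sqrt{2\kappa\Delta L/n}\le\tfrac12\Delta+\kappa L/n$ with $L=\log(|\mathcal F|/\delta)$) absorbs half of $\Delta(\tilde P)$, giving $\frac1n\sum_i Z_i(\tilde P)\le -\tfrac12\Delta(\tilde P)+O(\log(|\mathcal F|/\delta)/n)$. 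Instantiating at $\tilde P=\widehat P$ and combining with $\frac1n\sum_i Z_i(\widehat P)\ge 0$ forces $\Delta(\widehat P)\le C'\log(|\mathcal F|/\delta)/n$, which is~\eqref{eq:generalization}.

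I expect the variance-to-mean inequality to be the main obstacle, since it is what upgrades the naive $O(\sqrt{\log|\mathcal F|/n})$ rate to the claimed fast $O(\log|\mathcal F|/n)$ rate; the rest is bookkeeping, and the finiteness of $\mathcal F$ makes the union bound trivial (a covering/bracketing argument would be needed otherwise). A secondary point I would address is the inner integral $\int\tilde P^2\,ds'$ in the empirical surrogate: the clean bound tacitly treats this model-dependent integral as evaluated exactly (as it can be in the reparameterized form), so the only randomness is the $n$ transitions; if it were instead Monte-Carlo estimated from $p(s')$, an extra concentration term would appear, which I would keep separate from the main argument.
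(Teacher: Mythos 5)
Your proposal is correct for the theorem as stated, and it isolates the right crux --- the variance-to-mean inequality $\operatorname{Var}(Z_i(\tilde P))\lesssim C\,\Delta(\tilde P)$ for the excess loss, which is exactly what upgrades the rate from $O(\sqrt{\log|\mathcal F|/n})$ to $O(\log|\mathcal F|/n)$ --- but the concentration machinery you use differs from the paper's. You run the classical ERM argument for i.i.d.\ data: basic inequality from $\widehat P$ minimizing the empirical surrogate with $P\in\mathcal F$, Bernstein's inequality for each fixed $\tilde P$, an explicit union bound over the finite class, and AM--GM to absorb half of $\Delta(\widehat P)$. The paper instead follows the exponential-moment route of \citet{zhang2006information} combined with the tangent-sequence decoupling lemma of \citet{agarwal2020flambe}: it bounds $-\log\mathbb E_{\mathcal D'}[\exp(-\rho\,L(\widehat f,\mathcal D'))]$ by $\log(|\mathcal F|/\delta)$ (the $\log|\mathcal F|$ entering through a KL argument against the uniform prior rather than a union bound), and then lower-bounds this log-moment-generating function by a positive multiple of $\sum_i\mathbb E\|\widehat f(x_i,\cdot)-f^*(x_i,\cdot)\|_2^2$ using the same second-moment-to-first-moment inequality ($\mathbb E[l^2]\le 20C\,\mathbb E[l]$) and a sufficiently small temperature $\rho$. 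The substantive difference is that the paper's version is stated and proved for martingale data ($x_i\sim\mathcal D_i(x_{1:i-1},y_{1:i-1})$), which the paper explicitly flags as essential because the bound is later invoked on adaptively collected exploration data in the online analysis; your Bernstein-plus-union-bound argument covers the i.i.d.\ hypothesis in the theorem statement but would need to be replaced by a martingale concentration inequality (e.g.\ Freedman) to serve that downstream purpose. Two minor bookkeeping points to make explicit if you write this up: your bound $\int|\eta|\,ds'\le 2$ tacitly uses that candidate models are (approximately) normalized, an assumption the paper's proof also makes ($\sum_y f(x,y)=1$); and your closing remark about the inner integral $\int\tilde P^2\,ds'$ being evaluated exactly matches the paper's treatment, where this term appears deterministically in the per-sample loss.
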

\vspace{-5pt}
Note that the \iid data assumption can be relaxed to an assumption that the data generating process is a martingale process. This is essential for proving the sample complexity of online exploration, as the data are collected in an adaptive manner. 
The proofs are deferred to~\appref{sec:generalization}.

\vspace{-0.5em}
\subsection{Sample Complexities of Online Exploration and Offline Policy Optimization}
\vspace{-0.5em}
Next, we establish sample complexities for the online exploration and offline policy optimization problems. We still assume $P\in{\color{black} \mathcal{F}}$. As the generalization bound~\eqref{eq:generalization} only guarantees the expected $L_2$ distance, we need to make the following additional assumptions on the representation and reward:
\begin{assumption}[Representation Normalization]
$\forall \phi \in \Phi$, we have $\int_{\mathcal{S}}\left(\int_{\mathcal{A}} \|\phi(s, a)\|_2\dif a\right)^2 \dif s \leq d$.
\label{assump:repr_norm}
\end{assumption}
\vspace{-0.5em}
\begin{assumption}[Reward Normalization] 
    $\int_{\mathcal{S}}\left(\int_{\mathcal{A}} r(s, a) \dif a\right)^2 \dif s \leq d$, where $r$ is the reward function.
\label{assump:reward_norm}
\end{assumption}
\vspace{-1em}
A simple example that satisfies both Assumption~\ref{assump:repr_norm} and~\ref{assump:reward_norm} is a tabular MDP with features $\phi(s, a)$ forming the canonical basis in $\mathbb{R}^{|\mathcal{S}||\mathcal{A}|}$. In this case, we have $d = |\mathcal{S}||\mathcal{A}|$, hence Assumption~\ref{assump:repr_norm} naturally holds. 
Furthermore, since $r(s, a)\in [0, 1]$, it is also straightforward to verify that Assumption~\ref{assump:reward_norm} holds for a tabular MDP. {\color{black} Such an assumption can also be satisfied for a continuous state space where the volume of the state space satisfies $\mu(\mathcal{S}) \leq \frac{d}{|\mathcal{A}|}$.}
Since we need to plan on $\widehat{P}$, we also assume $\widehat{P}$ is a valid transition kernel. With Assumptions~\ref{assump:repr_norm} and~\ref{assump:reward_norm} in hand, we are now ready to provide the sample complexities of online exploration and offline policy optimization. The proofs are deferred to Appendix~\ref{sec:online} and~\ref{sec:offline}.

\begin{theorem}[PAC Guarantee for Online Exploration]
Assume $|\mathcal{A}| < \infty$. After interacting with the environment for $N = \widetilde{\Theta}\left(\frac{d^4|\mathcal{A}|^2 }{(1-\gamma)^6 \epsilon^2}\right)$ episodes, where $\widetilde{\Theta}$ omits $\log$-factors, we obtain a policy $\pi$ s.t. 
\vspace{-2mm}
\begin{align*}
    V_{P, r}^{\pi^*} - V_{P, r}^\pi \leq \epsilon
\end{align*}
with high probability, where $\pi^*$ is the optimal policy. Furthermore, note that, we can obtain a sample from the state visitation distribution $d_{P}^\pi$ via terminating with probability $1-\gamma$ for each step. Hence, for each episode, we can terminate within $\widetilde{\Theta}(1/(1-\gamma))$ steps with high probability.
\end{theorem}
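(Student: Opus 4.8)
The plan is to follow the standard optimism-in-the-face-of-uncertainty (OFU) template for low-rank MDPs and convert a cumulative regret bound into the stated PAC guarantee. First I would observe that outputting the best (or a uniformly random) policy among $\pi_1,\dots,\pi_N$ reduces the claim to controlling the average per-episode regret $\frac{1}{N}\sum_{n=1}^N\left(V_{P,r}^{\pi^*}-V_{P,r}^{\pi_n}\right)$, since the minimum is below the average. The whole argument then rests on three pillars: (i) the generalization bound of the previous theorem, which controls the per-episode model error; (ii) an optimism argument ensuring the planned value overestimates $V_{P,r}^{\pi^*}$; and (iii) an elliptical-potential bound controlling the accumulated exploration bonus.

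For the first pillar, because the data in Line~\ref{line:data_collection} are collected adaptively, I would invoke the martingale extension of the generalization bound noted after the theorem, together with a union bound over episodes, to get $\mathbb{E}_{(s,a)\sim\bar\rho_n}\left\|\widehat P_n(\cdot|s,a)-P(\cdot|s,a)\right\|_2^2\lesssim \frac{\log(|\mathcal F|N/\delta)}{n}$ uniformly in $n$, where $\bar\rho_n=\frac{1}{n}\sum_{i<n}d_P^{\pi_i}\times\mathcal U(\mathcal A)$ is the mixture data distribution. The key reduction is to translate this $L_2$ density error into a one-step value-prediction error $\left|(\mathbb{E}_{\widehat P_n}-\mathbb{E}_P)[V](s,a)\right|$ for the bonus-augmented value functions. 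Cauchy--Schwarz gives $\left|(\mathbb{E}_{\widehat P_n}-\mathbb{E}_P)[V](s,a)\right|\le\left\|\widehat P_n(\cdot|s,a)-P(\cdot|s,a)\right\|_2\,\|V\|_2$, so I must bound $\|V\|_2$ over the state space; this is exactly the role of Assumptions~\ref{assump:repr_norm}--\ref{assump:reward_norm} and the bounded-volume condition $\mu(\mathcal S)\le d/|\mathcal A|$, since $\|V\|_2^2\le\mu(\mathcal S)\,V_{\max}^2$ together with the representation and reward normalization controls the $L_2$ norm of any value function built from $r$ and $\widehat b_n$.

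For the second and third pillars, I would use the linear-MDP structure to amplify the in-distribution error to a fresh $(s,a)$ via $\left|(\mathbb{E}_{\widehat P_n}-\mathbb{E}_P)[V](s,a)\right|\le\left\|\widehat\phi_n(s,a)\right\|_{\widehat\Sigma_n^{-1}}\big(\sum_{i\in\mathcal D_n}(\cdot)^2+\lambda_n(\cdot)\big)^{1/2}$, where the accumulated term is $\approx\log|\mathcal F|\cdot\|V\|_2^2$ by the previous step. Choosing $\alpha_n$ to dominate this makes $\widehat b_n$ a valid upper confidence term, and a simulation-lemma/backward-induction argument then yields optimism $V_{P,r}^{\pi^*}(s)\le V_{\widehat P_n,\,r+\widehat b_n}^{\pi_n}(s)$. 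The simulation lemma next bounds each episode's regret by the expected bonus, $V_{P,r}^{\pi^*}-V_{P,r}^{\pi_n}\lesssim\frac{1}{1-\gamma}\mathbb{E}_{d_{\widehat P_n}^{\pi_n}}[\widehat b_n]$, which I reconcile with the data-generating $d_P^{\pi_n}$; summing over $n$, changing measure from the policy's actions to the uniform exploratory actions (costing a factor $|\mathcal A|$), and invoking the elliptical-potential lemma gives $\sum_{n=1}^N\mathbb{E}[\widehat b_n]\lesssim|\mathcal A|\,\alpha_N\sqrt{dN}$ up to log factors. Dividing by $N$, setting the average regret to $\epsilon$, and substituting the scale of $\alpha_N$ (which carries the remaining $d$, $1/(1-\gamma)$ and $\|V\|_2$ factors) then yields $N=\widetilde\Theta\big(d^4|\mathcal A|^2/((1-\gamma)^6\epsilon^2)\big)$, while the per-episode horizon claim follows from the geometric-termination sampler for $d_P^\pi$.

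The hardest part will be the interaction between the \emph{changing} representation and the $L_2$ (rather than TV or KL) nature of the generalization bound. Unlike the fixed-feature linear-MDP analysis, $\widehat\phi_n$ is relearned every episode, so the elliptical-potential lemma cannot be applied verbatim and needs the one-step-back coupling supplied by the auxiliary dataset $\mathcal D_n'$ to relate the expected potential under $d_P^{\pi_n}$ to freshly collected samples and to reconcile $d_{\widehat P_n}^{\pi_n}$ with $d_P^{\pi_n}$; simultaneously, converting the $L_2$ model error into value-prediction error forces the $\|V\|_2$ control through Assumptions~\ref{assump:repr_norm}--\ref{assump:reward_norm} and the bounded-volume condition. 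Tracking how these normalization constants, the $|\mathcal A|$ change-of-measure, and the horizon factors compound is the delicate bookkeeping that produces the precise $d^4|\mathcal A|^2/(1-\gamma)^6$ dependence.
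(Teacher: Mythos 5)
Your proposal follows essentially the same route as the paper's proof: a martingale generalization bound converted to value-prediction error via Cauchy--Schwarz and the $L_2$ control of value functions from Assumptions~\ref{assump:repr_norm}--\ref{assump:reward_norm}, optimism via the elliptical bonus and simulation lemma, one-step-back inequalities (for both the learned and true models, the latter letting the elliptical potential be run on the fixed $\phi^*$ rather than the changing $\widehat\phi_n$), and a standard regret-to-PAC reduction. You also correctly identify the two delicate points the paper's lemmas are built around — the relearned representation and the $L_2$ (rather than TV) nature of the error — so the plan matches the paper's argument in both structure and substance.
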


\begin{theorem}[PAC Guarantee for Offline Policy Optimization]
Let $\omega = \min_{s, a} \frac{1}{\pi_b(a|s)}$ where $\pi_b$ is the behavior policy. With probability $1-\delta$, for all baseline policies $\pi$ including history-dependent non-Markovian policies, we have that
\vspace{-2mm}
\begin{align*}
    V_{P, r}^\pi - V_{P, r}^{\widehat{\pi}}\lesssim \sqrt{\frac{\omega^2 d^4 C_{\pi}^*\log (|{\color{black} \mathcal{F}}|/\delta)}{(1-\gamma)^6}},
\end{align*}
where $C_{\pi}^*$ is the relative conditional number under $\phi^*$ which measures the quality of the offline data: 
\begin{align*}
    C_{\pi}^* := \sup_{x\in\mathbb{R}}\frac{x^{\top}\mathbb{E}_{(s, a)\sim d_{P}^{\pi}}[\phi^*(s, a)\phi^*(s, a)^\top] x}{x^{\top}\mathbb{E}_{(s, a)\sim \rho_b}[\phi^*(s, a) \phi^*(s, a)^\top]x}.
\end{align*}
\vspace{-10pt}
\end{theorem}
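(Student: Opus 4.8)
The plan is to follow the standard pessimism template for offline RL in low-rank MDPs, reducing the theorem to two ingredients: (i) a \emph{uniform pessimism} property, guaranteeing that the value of any policy under the penalized model $(\widehat P, r - \widehat b)$ lower-bounds its true value $V^\pi_{P,r}$ with high probability; and (ii) a bound on the penalty $\widehat b$ accumulated along the comparator's occupancy measure, expressed through the relative condition number $C^*_\pi$. Given these two pieces the final rate follows from a short algebraic chain, so the real work concentrates on certifying that the elliptical bonus is a valid, yet not too large, pessimistic penalty.

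First I would set up the value decomposition. Writing $\widehat V^\pi := V^\pi_{\widehat P,\,r - \widehat b}$ for the penalized value and using that $\widehat\pi$ maximizes $\widehat V^{\cdot}$ (the offline analogue of Line~\ref{line:online_plan} in Algorithm~\ref{alg:online_algorithm}), I get
\begin{align*}
V^\pi_{P,r} - V^{\widehat\pi}_{P,r} = \big(V^\pi_{P,r} - \widehat V^\pi\big) + \big(\widehat V^\pi - \widehat V^{\widehat\pi}\big) + \big(\widehat V^{\widehat\pi} - V^{\widehat\pi}_{P,r}\big).
\end{align*}
The middle term is non-positive by optimality of $\widehat\pi$, and the third term is non-positive once uniform pessimism is established. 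It therefore suffices to bound the first term, and by the simulation (performance-difference) lemma across the two MDPs $(P,r)$ and $(\widehat P,\,r-\widehat b)$,
\begin{align*}
V^\pi_{P,r} - \widehat V^\pi = \frac{1}{1-\gamma}\,\mathbb{E}_{(s,a)\sim d^\pi_P}\!\left[\widehat b(s,a) + \gamma\,(P-\widehat P)(\cdot\mid s,a)^\top \widehat V^\pi(\cdot)\right].
\end{align*}
The single pointwise condition $\gamma\,\big|(P-\widehat P)(\cdot\mid s,a)^\top V\big| \le \widehat b(s,a)$, holding uniformly over value functions with $\|V\|_\infty\le 1/(1-\gamma)$, simultaneously yields uniform pessimism (making the third term non-positive) and collapses the first term to at most $\tfrac{2}{1-\gamma}\,\mathbb{E}_{d^\pi_P}[\widehat b]$, which is the quantity to control in the last step.

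The main obstacle is exactly establishing this bonus validity, i.e.\ converting the \emph{averaged} $L_2$ model guarantee of Theorem~1 into a \emph{pointwise} elliptical bound with $\widehat b(s,a)=\alpha\sqrt{\widehat\phi(s,a)^\top\widehat\Sigma^{-1}\widehat\phi(s,a)}$. Here I would exploit the low-rank structure of both $P=\phi^{*\top}\mu^*$ and $\widehat P=\widehat\phi^\top\widehat\mu$ to rewrite the integrated prediction error as a bilinear form in the features, and then run a matrix/elliptical-potential concentration argument: since $\widehat\Sigma=\sum\widehat\phi\widehat\phi^\top+\lambda I$ accumulates precisely the directions observed in the offline data, the error in any direction $\widehat\phi(s,a)$ is controlled by $\|\widehat\phi(s,a)\|_{\widehat\Sigma^{-1}}$ times the statistical error $\sqrt{\log(|\mathcal{F}|/\delta)/n}$ supplied by Theorem~1. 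This is where $\omega=\min_{s,a}1/\pi_b(a|s)$ enters: it reweights the behavior action distribution embedded in $\rho_b$ to the uniform/Lebesgue action measure used in the normalization Assumptions~\ref{assump:repr_norm}--\ref{assump:reward_norm}, so that the $L_2$ error measured under $\rho_b$ can upper-bound the integrated error needed for the covering/union bound. Collecting the normalization constants produces a bonus coefficient $\alpha$ that scales polynomially in $d$, $(1-\gamma)^{-1}$ and $\omega$ while carrying the statistical factor $\sqrt{\log(|\mathcal{F}|/\delta)/n}$; pinning down the exact powers is the delicate part of the accounting.

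Finally I would close the loop through the relative condition number. By Jensen's inequality and linearity of the trace,
\begin{align*}
\mathbb{E}_{d^\pi_P}[\widehat b] \le \alpha\sqrt{\mathrm{Trace}\!\big(\widehat\Sigma^{-1}\,\mathbb{E}_{d^\pi_P}[\widehat\phi\widehat\phi^\top]\big)},
\end{align*}
and since $\widehat\Sigma\approx n\,\mathbb{E}_{\rho_b}[\widehat\phi\widehat\phi^\top]$, transferring from $\widehat\phi$ to $\phi^*$ via the model closeness (both factorize nearly the same $P$) bounds the ratio of the two feature second moments by $C^*_\pi$, giving $\mathrm{Trace}(\widehat\Sigma^{-1}\mathbb{E}_{d^\pi_P}[\widehat\phi\widehat\phi^\top])\lesssim C^*_\pi\, d/n$. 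Substituting $\alpha$ and this trace bound into $\tfrac{2}{1-\gamma}\mathbb{E}_{d^\pi_P}[\widehat b]$ and tracking the powers of $d$, $(1-\gamma)^{-1}$ and $\omega$ yields the claimed $\sqrt{\omega^2 d^4 C^*_\pi \log(|\mathcal{F}|/\delta)/(1-\gamma)^6}$ rate (the statistical $1/\sqrt n$ being carried implicitly through the generalization error of Theorem~1). The extension to history-dependent, non-Markovian comparators is immediate, because the pessimism chain only invokes optimality of $\widehat\pi$ within its own penalized MDP together with the simulation lemma, neither of which requires $\pi$ to be Markovian.
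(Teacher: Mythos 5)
Your high-level skeleton (optimality of $\widehat\pi$ on the penalized model, a pessimism step, a simulation-lemma bound on the comparator term, and a final pass through the relative condition number) matches the paper's. But the central step you identify as "the main obstacle" — converting the averaged $L_2$ guarantee of Theorem~\ref{thm:generalization} into a \emph{pointwise} bonus-validity condition $\gamma\,|(P-\widehat P)(\cdot\mid s,a)^\top V|\le \widehat b(s,a)$ for all $(s,a)$ — is a genuine gap, and it does not go through here. With \emph{learned} features, the error $P-\widehat P = \phi^{*\top}\mu^* - \widehat\phi^\top\widehat\mu$ is not a single feature inner product $\widehat\phi(s,a)^\top(\cdot)$, so no elliptical concentration argument turns an expectation bound under $\rho_b$ into a uniform pointwise bound; that conversion is available in the known-feature linear-MDP setting (PEVI-style analyses) but not in this one. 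The paper circumvents this entirely with the "one-step back" inequalities (Lemma~\ref{lem:backup_learned_offline} and Lemma~\ref{lem:backup_true_offline}): the expectation of an error functional under an occupancy measure is split into an initial-distribution piece, bounded by importance-weighting to $\rho_b$ (this is where $\omega$ enters — as the ratio $\pi(a|s)/\pi_b(a|s)\le\omega$, not as a reweighting toward Lebesgue measure for the normalization assumptions), and a one-step-back piece in which the low-rank factorization of the \emph{transition kernel itself} factors out $\widehat\phi(\tilde s,\tilde a)$ (resp.\ $\phi^*$), after which generalized Cauchy--Schwarz produces the elliptical potential and the $L_2$ model error only ever appears averaged over $\rho_b$. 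A consequence you miss is that the resulting pessimism (Lemma~\ref{lem:pessimism}) is only \emph{approximate}: $V^\pi_{\widehat P,r-b}\le V^\pi_{P,r}+O(\sqrt{\omega\, d^2\zeta_n/(1-\gamma)^3})$, so your third term is not non-positive and contributes an additive error that must be carried into the final rate.

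A second, related soft spot is your closing step. You bound $\mathbb{E}_{d^\pi_P}[\widehat b]$ via $\mathrm{Trace}(\widehat\Sigma^{-1}\mathbb{E}_{d^\pi_P}[\widehat\phi\widehat\phi^\top])$ and then propose to "transfer from $\widehat\phi$ to $\phi^*$ via model closeness" to invoke $C^*_\pi$ (which is defined through $\phi^*$). Closeness of the products $\phi^\top\mu$ in $L_2(\rho_b)$ does not imply closeness of the factors, so this transfer is not justified. The paper avoids it: applying the one-step back inequality for the \emph{true} model to both $b(s,a)$ and the model-error functional $g(s,a)$ produces $\mathbb{E}_{d^\pi_P}\big[\|\phi^*(\tilde s,\tilde a)\|_{\Sigma_{\rho_b,\phi^*}^{-1}}\big]$ directly (because $P$ factorizes through $\phi^*$), and this is bounded by $\sqrt{C^*_\pi d/n}$ straight from the definition of the relative condition number. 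Your observation that the argument extends to non-Markovian comparators is correct and for the same reason as in the paper.
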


\vspace{-1em}
\section{Related Work}\label{sec:related_work}
\vspace{-0.5em}


Aside from the family of spectral decomposition representation methods reviewed in~\Secref{sec:prelim}, there have been many attempts to provide \textbf{algorithmic} representation learning algorithms for RL in different problem settings. 
Learning \emph{action} representations, or abstractions, such as temporally-extended skills, has been a long-standing focus of hierarchical RL~\citep{dietterich1998maxq,sutton1999between,kulkarni2016hierarchical,nachum2018data} for solving temporally-extended tasks. Recently, many algorithms have been proposed for online unsupervised skill discovery, which can reduce the cost of exploration and sample complexity of online RL algorithms. A class of methods extract temporally-extended skills by maximizing a mutual information objective~\citep{eysenbach2018diversity,sharma2019dynamics,lynch2020learning} or minimizing divergences~\citep{lee2019efficient}. Unsupervised skill discovery has been also studied in offline settings, where the goal is to pre-train useful skill representations from offline trajectories, in order to accelerate learning on downstream RL tasks~\citep{yang2021representation}. Such methods include OPAL~\citep{ajay2020opal}, SPiRL~\citep{pertsch2020accelerating}, and SkiLD~\citep{pertsch2021guided}, which exploit a latent variable model with an autoencoder for skills acquisition; and PARROT~\citep{singh2020parrot}, which learns a behavior prior with flow-based models. Another offline representation learning algorithm, TRAIL~\citep{yang2021trail}, uses a contrastive implementation of the MLE for an energy-based model to learn state-action features.
These algorithms achieve empirical improvements in different problem settings, such as imitation learning, policy transfer, etc. However, as far as we know, the coupling between exploration and representation learning has not been well handled, and there is no rigorous characterization yet for these algorithms. 

Another line of research focuses on  {\bf theoretically} guaranteed representation learning in RL, either by limiting the flexibility of the models or by ignoring the practical issue of computational cost. 
For example,~\citep{du2019provably, misra2020kinematic} considered representation learning in block MDPs, where the representation can be learned via regression. 
However, the corresponding representation ability is \emph{exponentially weaker} than low-rank MDPs~\citep{agarwal2020flambe}.~\citet{ren2021free} exploited representation from arbitrary dynamics models, but restricted the noise model to be Gaussian. {\color{black} On the other hand,~\citep{agarwal2020flambe,modi2021model,uehara2021representation, zhang2022efficient, chen2022statistical} provably extracted spectral features in low-rank MDPs with exploration, but these methods rely on a strong computation oracle, which is difficult to implement in practice.} 

In contrast, 
\algabb enjoys both theoretical and empirical advantages. We provide a tractable surrogate with an efficient algorithm for spectral feature learning with exploration in low-rank MDPs. We have established its sample complexity and next demonstrate its superior empirical performance. 

\vspace{-1em}
\section{Experiments}\label{sec:exp}
\vspace{-0.8em}

We evaluate \algabb on the dense-reward MuJoCo tasks~\citep{brockman2016openai} and sparse-reward DeepMind Control Suite tasks~\citep{tassa2018deepmind}. In MuJoCo tasks, we compare with model-based (\eg, PETS~\citep{chua2018deep}, ME-TRPO~\citep{kurutach2018model}) and model-free baselines (\eg, SAC~\citep{haarnoja2018soft}, PPO~\citep{schulman2017proximal}), showing strong performance compared to SoTA RL algorithms. In particular, we find that in the sparse reward DeepMind Control tasks, the optimistic \algabb significantly outperforms the SoTA model-free RL algorithms.
%
We also evaluate the method on offline behavioral cloning tasks in the AntMaze environment using the D4RL benchmark~\citep{fu2020d4rl}, and show comparable results to state-of-the-art representation learning methods.
Additional details about the experiment setup are described in~\appref{appendix:experiments}.

\vspace{-0.8em}
\subsection{Online Performance with the Spectral Representation}
\vspace{-0.5em}
We evaluate the proposed algorithm on the dense-reward MuJoCo benchmark from MBBL~\citep{wang2019benchmarking}. We compare \algabb with several model-based RL baselines (PETS~\citep{chua2018deep}, ME-TRPO~\citep{kurutach2018model}) and SoTA model-free RL baselines (SAC~\citep{haarnoja2018soft}, PPO~\citep{schulman2017proximal}). As a standard evaluation protocol in MBBL, we ran all the algorithms for 200K environment steps. The results are averaged across four random seeds with window size 20K.

In~\tabref{tab:MuJoCo_results}, we show that \algabb achieves SoTA results among all model-based RL algorithms and significantly improves the prior baselines. 
We also compare the algorithm with the SoTA model-free RL method SAC. The proposed method achieves comparable or better performance in most of the tasks. Lastly, compared to two representation learning baselines (Deep SF~\citep{kulkarni2016deep} and SPEDE~\citep{ren2021free}), \algabb also shows superior performance, which demonstrates the proposed \algabb is able to overcome the aforementioned drawback of vanilla spectral representations.

\begin{table*}[t]
\vspace{-2.0em}
\caption{\footnotesize Performance on various MuJoCo control tasks. All the results are averaged across 4 random seeds and a window size of 20K. Results marked with $^*$ is adopted from MBBL~\citep{wang2019benchmarking}. \algabb achieves strong performance compared with baselines.}
\vspace{-.5em}
\scriptsize
\setlength\tabcolsep{3.5pt}
\label{tab:MuJoCo_results}
\centering
\begin{tabular}{p{1.8cm}p{1.8cm}p{1.8cm}p{1.8cm}p{1.8cm}p{1.8cm}p{1.8cm}p{1.8cm}}
\toprule
& & HalfCheetah & Reacher & Humanoid-ET & Pendulum & I-Pendulum \\ 
\midrule  
\multirow{4}{*}{Model-Based RL} & ME-TRPO$^*$ & 2283.7$\pm$900.4 & -13.4$\pm$5.2 & 72.9$\pm$8.9 & \textbf{177.3$\pm$1.9} & -126.2$\pm$86.6\\
& PETS-RS$^*$  & 966.9$\pm$471.6 & -40.1$\pm$6.9 & 109.6$\pm$102.6 & 167.9$\pm$35.8 & -12.1$\pm$25.1\\
& PETS-CEM$^*$  & 2795.3$\pm$879.9 & -12.3$\pm$5.2 & 110.8$\pm$90.1 & 167.4$\pm$53.0 & -20.5$\pm$28.9\\
& Best MBBL & 3639.0$\pm$1135.8 & \textbf{-4.1$\pm$0.1} & 1377.0$\pm$150.4 & \textbf{177.3$\pm$1.9} & \textbf{0.0$\pm$0.0}\\
\midrule
\multirow{3}{*}{Model-Free RL} & PPO$^*$ & 17.2$\pm$84.4 & -17.2$\pm$0.9 & 451.4$\pm$39.1 & 163.4$\pm$8.0 & -40.8$\pm$21.0 \\
& TRPO$^*$ & -12.0$\pm$85.5 & -10.1$\pm$0.6 & 289.8$\pm$5.2 & 166.7$\pm$7.3 & -27.6$\pm$15.8 \\
& SAC$^*$ (3-layer)  & 4000.7$\pm$202.1 & -6.4$\pm$0.5 & \textbf{1794.4$\pm$458.3} & 168.2$\pm$9.5 & -0.2$\pm$0.1\\
\midrule
\multirow{4}{*}{Representation RL} & DeepSF & 4180.4$\pm$113.8 & -16.8$\pm$3.6 & 168.6$\pm$5.1 & 168.6$\pm$5.1 & -0.2$\pm$0.3\\
& SPEDE & 4210.3$\pm$92.6 & -7.2$\pm$1.1 & 886.9$\pm$95.2 & 169.5$\pm$0.6 & 0.0$\pm$0.0\\
& {\bf \algabb} & \textbf{5407.9$\pm$813.0} & -5.90$\pm$0.3 & 1774.875$\pm$129.1 & {167.4$\pm$3.4} & \textbf{0.0$\pm$0.0}\\
\bottomrule 
\end{tabular}
\centering
\begin{tabular}{p{1.8cm}p{1.8cm}p{1.8cm}p{1.8cm}p{1.8cm}p{1.8cm}p{1.8cm}p{1.8cm}}
& & Ant-ET & Hopper-ET & S-Humanoid-ET & CartPole & Walker-ET \\ 
\midrule  
\multirow{4}{*}{Model-Based RL} & ME-TRPO$^*$ & 42.6$\pm$21.1 & 1272.5$\pm$500.9 & -154.9$\pm$534.3 & 160.1$\pm$69.1 & -1609.3$\pm$657.5\\
& PETS-RS$^*$ & 130.0$\pm$148.1 &  205.8$\pm$36.5 & 320.7$\pm$182.2 & 195.0$\pm$28.0 & 312.5$\pm$493.4 \\
& PETS-CEM$^*$ & 81.6$\pm$145.8 & 129.3$\pm$36.0 & 355.1$\pm$157.1 & 195.5$\pm$3.0 & 260.2$\pm$536.9 \\
& Best MBBL & 275.4$\pm$309.1 & 1272.5$\pm$500.9 & \textbf{1084.3$\pm$77.0} & 200.0$\pm$0.0 & 312.5$\pm$493.4\\
\midrule
\multirow{3}{*}{Model-Free RL} & PPO$^*$ & 80.1$\pm$17.3  & 758.0$\pm$62.0 & 454.3$\pm$36.7 & 86.5$\pm$7.8 & 306.1$\pm$17.2\\
& TRPO$^*$ & 116.8$\pm$47.3  & 237.4$\pm$33.5 & 281.3$\pm$10.9 & 47.3$\pm$15.7 & 229.5$\pm$27.1\\
& SAC$^*$ (3-layer) & 2012.7$\pm$571.3  & 1815.5$\pm$655.1 & 834.6$\pm$313.1 & 199.4$\pm$0.4 & 2216.4$\pm$678.7\\
\midrule
\multirow{4}{*}{Representation RL} & DeepSF & 768.1$\pm$44.1  & 548.9$\pm$253.3 & 533.8$\pm$154.9 & 194.5$\pm$5.8 & 165.6$\pm$127.9\\
& SPEDE & 806.2$\pm$60.2  & 732.2$\pm$263.9 & 986.4$\pm$154.7 & 138.2$\pm$39.5 & 501.6$\pm$204.0\\
& {\bf \algabb} & \textbf{1806.8$\pm$1488.0} & \textbf{2267.6$\pm$554.3}  & 944.8$\pm$354.3 & \textbf{200.2$\pm$1.0} & \textbf{2451.5$\pm$1115.6}\\
\bottomrule 
\end{tabular}
\vspace{-1em}
\end{table*}

\vspace{-0.8em}
\subsection{Exploration in Sparse-Reward DeepMind Control Suite}
\vspace{-0.5em}
To evaluate the exploration performance of \algabb, we additionally run experiments on the DeepMind Control Suite. {\color{black} We compare the proposed method with SAC, (including a 2-layer, 3-layer and 5-layer MLP for critic network), PPO, Dreamer-v2~\citep{hafner2020mastering}, Deep SF~\citep{kulkarni2016deep} and Proto-RL~\citep{yarats2021reinforcement}. 
Since the original Dreamer and Proto-RL are designed for image-based control tasks, we adapt them to run the state-based tasks and details can be found at Appendix. \ref{appendix:experiments}.} 
We run all the algorithms for 200K environment steps across four random seeds with a window size of 20K. From~\tabref{tab:DM_results}, we see that \algabb achieves superior performance compared to SAC using the 2-layer critic network. Compared to SAC and PPO with deeper critic networks, \algabb has significant gain in tasks with sparse reward (e.g., \texttt{walker-run-sparse} and \texttt{hopper-hop}). 

\begin{table*}[t]
\vspace{-0.5em}
\caption{\footnotesize Performance on various DeepMind Suite Control tasks. All the results are averaged across four random seeds and a window size of 20K. Comparing with SAC, our method achieves even better performance on sparse-reward tasks. {\color{black} Results are presented in mean $\pm$ standard deviation across different random seeds.}}
\vspace{-.5em}
\scriptsize
\setlength\tabcolsep{3.5pt}
\label{tab:DM_results}
\centering
\begin{tabular}{p{1.8cm}p{1.5cm}p{1.5cm}p{1.8cm}p{1.5cm}p{1.8cm}p{1.5cm}p{1.5cm}}
\toprule
& & cheetah\_run & cheetah\_run\_sparse & walker\_run & walker\_run\_sparse & humanoid\_run & hopper\_hop \\ 
\midrule
{\color{black}Model-Based RL} & {\color{black}Dreamer} & {\color{black}542.0 $\pm$ 27.7} & {\color{black}\textbf{499.9$\pm$73.3}} & {\color{black}337.7$\pm$67.2} & {\color{black}95.4$\pm$54.7} & {\color{black}1.0$\pm$0.2} & {\color{black}46.1$\pm$17.3}\\
\midrule 
\multirow{4}{*}{Model-Free RL} & PPO & 227.7$\pm$57.9 & 5.4$\pm$10.8 & 51.6$\pm$1.5 & 0.0$\pm$0.0 & 1.1$\pm$0.0 & 0.7$\pm$0.8\\
& SAC (2-layer)  & 222.2$\pm$41.0 & 32.4$\pm$27.8 & 183.0$\pm$23.4 & 53.5$\pm$69.3 & 1.3$\pm$0.1 & 0.4$\pm$0.5\\
& SAC (3-layer)  & \textbf{595.2$\pm$96.0} & 419.5$\pm$73.3 & 700.9$\pm$36.6   & 311.5$\pm$361.4 & 1.2$\pm$0.1 & 28.6$\pm$19.5\\
& {\color{black}SAC (5-layer)} & {\color{black}566.3$\pm$123.5} & {\color{black}364.1$\pm$242.3 } & {\color{black}\textbf{716.9$\pm$35.0}} & {\color{black}276.1$\pm$319.3} & {\color{black}8.2$\pm$13.8} & {\color{black}31.1$\pm$31.8}\\
\midrule  
\multirow{3}{*}{Representation RL} & DeepSF & 295.3$\pm$43.5 & 0.0$\pm$0.0 & 27.9$\pm$2.2 & 0.1$\pm$0.1 & 0.9$\pm$0.1 & 0.3$\pm$0.1 \\
& {\color{black} Proto RL} & {\color{black} 305.5$\pm$37.9} & {\color{black} 0.0$\pm$0.0} & {\color{black} 433.5$\pm$56.8} & {\color{black} 46.9$\pm$34.1} & {\color{black} 0.3$\pm$0.6} & {\color{black} 1.0$\pm$0.2}\\
& {\textbf \algabb} & \textbf{593.7$\pm$95.1} & 425.5 $\pm$ 42.8 & 690.4$\pm$20.5 & \textbf{683.2$\pm$96.0} & \textbf{11.5$\pm$5.4} & \textbf{119.8$\pm$89.6}\\
\bottomrule 
\end{tabular}
\end{table*}

\vspace{-0.8em}
\subsection{Imitation Learning Performance on AntMaze Navigation}\label{sec:experiment-bc}
\vspace{-0.5em}

We additionally experiment with using SPEDER features for downstream imitation learning.
We consider the challenging AntMaze navigation domain (shown in Figure~\ref{fig:antmaze_envs}) from the D4RL~\citep{fu2020d4rl},
which consists of a 8-DoF quadraped robot whose task is to navigate towards a goal position in the maze environment. We compare SPEDER to several recent state-of-the-art for pre-training representations from suboptimal offline data, including OPAL~\citep{ajay2020opal}, SPiRL~\citep{pertsch2020accelerating},  SkiLD~\citep{pertsch2021guided}, and TRAIL~\citep{yang2021trail}. For OPAL, SPiRL, and SkiLD, we use horizons of $t=1$ and $t=10$ for learning temporally-extended skills. For TRAIL, we report the performance of the TRAIL energy-based model (EBM) as well as the TRAIL Linear model with random Fourier features~\citep{rahimi2007random}.


Following the behavioral cloning setup in~\citep{yang2021trail}, we use a set of 10 expert trajectories of the agent navigating from one corner of the maze to the opposite corner as the expert dataset $\mathcal{D}^{\pi^*}$. For the suboptimal dataset $\mathcal{D}^\textrm{off}$, we use the ``diverse'' datasets from D4RL~\citep{fu2020d4rl}, which consist of {\color{black} 1M samples} of the agent navigating from different initial locations to different goal positions.
We report the average return on AntMaze tasks, and observe that \algabb achieves comparable performance as other state-of-the-art representations on downstream imitation learning in Figure~\ref{fig:bc_antmaze_barplot}. The comparison and experiment details can be found in~\appref{appendix:experiments}.

\vspace{-1em}
\section{Conclusion}\label{sec:conclusion}
\vspace{-1em}
We have proposed a novel objective, \AlgName (\algabb), that factorizes the state-action transition kernel to obtain policy-independent spectral features. We show how to use the representations obtained with \algabb to perform sample efficient online and offline RL, as well as imitation learning. We provide a thorough theoretical analysis of \algabb and empirical comparisons on multiple {RL} benchmarks, demonstrating the effectiveness of~\algabb.

\bibliographystyle{plainnat}
\bibliography{ref}
\section*{Checklist}

The checklist follows the references.  Please
read the checklist guidelines carefully for information on how to answer these
questions.  For each question, change the default \answerTODO{} to \answerYes{},
\answerNo{}, or \answerNA{}.  You are strongly encouraged to include a {\bf
justification to your answer}, either by referencing the appropriate section of
your paper or providing a brief inline description.  For example:
\begin{itemize}
  \item Did you include the license to the code and datasets? \answerYes{See Section~\ref{gen_inst}.}
  \item Did you include the license to the code and datasets? \answerNo{The code and the data are proprietary.}
  \item Did you include the license to the code and datasets? \answerNA{}
\end{itemize}
Please do not modify the questions and only use the provided macros for your
answers.  Note that the Checklist section does not count towards the page
limit.  In your paper, please delete this instructions block and only keep the
Checklist section heading above along with the questions/answers below.

\begin{enumerate}

\item For all authors...
\begin{enumerate}
  \item Do the main claims made in the abstract and introduction accurately reflect the paper's contributions and scope?
    \answerYes{}
  \item Did you describe the limitations of your work?
    \answerYes{}
  \item Did you discuss any potential negative societal impacts of your work?
    \answerNA{}
  \item Have you read the ethics review guidelines and ensured that your paper conforms to them?
    \answerYes{}
\end{enumerate}

\item If you are including theoretical results...
\begin{enumerate}
  \item Did you state the full set of assumptions of all theoretical results?
    \answerYes{}
	\item Did you include complete proofs of all theoretical results?
    \answerYes{}
\end{enumerate}

\item If you ran experiments...
\begin{enumerate}
  \item Did you include the code, data, and instructions needed to reproduce the main experimental results (either in the supplemental material or as a URL)?
    \answerYes{}
  \item Did you specify all the training details (e.g., data splits, hyperparameters, how they were chosen)?
    \answerYes{}
	\item Did you report error bars (e.g., with respect to the random seed after running experiments multiple times)?
    \answerYes{}
	\item Did you include the total amount of compute and the type of resources used (e.g., type of GPUs, internal cluster, or cloud provider)?
    \answerYes{}
\end{enumerate}

\item If you are using existing assets (e.g., code, data, models) or curating/releasing new assets...
\begin{enumerate}
  \item If your work uses existing assets, did you cite the creators?
    \answerYes{}
  \item Did you mention the license of the assets?
    \answerYes{}
  \item Did you include any new assets either in the supplemental material or as a URL?
    \answerNA{}
  \item Did you discuss whether and how consent was obtained from people whose data you're using/curating?
    \answerNA{}
  \item Did you discuss whether the data you are using/curating contains personally identifiable information or offensive content?
    \answerNA{}
\end{enumerate}

\item If you used crowdsourcing or conducted research with human subjects...
\begin{enumerate}
  \item Did you include the full text of instructions given to participants and screenshots, if applicable?
    \answerNA{}
  \item Did you describe any potential participant risks, with links to Institutional Review Board (IRB) approvals, if applicable?
    \answerNA{}
  \item Did you include the estimated hourly wage paid to participants and the total amount spent on participant compensation?
    \answerNA{}
\end{enumerate}

\end{enumerate}
\newpage
\appendix
\section{More Related Work}\label{appendix:more_related_work}

Representation learning in RL has attracted more attention in recent years. Within model-based RL (MBRL), many methods for learning representations of the reward and the dynamics have been proposed. Several recent MBRL methods learn latent state representations to be used for planning in latent space as a way to improve model-based policy optimization~\citep{oh2017value,silver2018general,racaniere2017imagination,hafner2019dream}. 

Beyond MBRL, there also exist many algorithms for learning useful state representations to accelerate RL. For example, recent works have introduced unsupervised auxiliary losses to significantly improve RL performance~\citep{pathak2017curiosity,oord2018representation,laskin2020reinforcement,jaderberg2016reinforcement}. Contrastive losses ~\citep{oord2018representation,anand2019unsupervised,srinivas2020curl,stooke2021decoupling}, which encourage similar states to be closer in embedding space, where the notion of similarity is usually defined in terms of temporal distance~\citep{anand2019unsupervised,sermanet2018time} or image-based data augmentations~\citep{srinivas2020curl}, also show promising performance. Within goal-conditioned RL~\citep{kaelbling1993learning,schaul2015universal,andrychowicz2017hindsight}, various representation learning algorithms have been proposed to handle high-dimensional observation and goal spaces, such as using a variational autoencoder~\citep{nair2018visual,pong2019skew}, or representations that explicitly capture useful information for control, while ignoring irrelevant factors of variation in the observation \citep{ghosh2018learning,lee2020weakly}. 

{\color{black} Beyond these representations on the state space, there are other kinds of representations that are designed for specific tasks. For example, \citet{touati2021learning} proposed to deal with the reward transfer task by learning a reward-dependent feature $F(s, a, r)$ such that the greedy policy with respect to $F(s, a, r)^\top r$ is optimal under $r$.}

\section{Algorithm for Offline Policy Optimization}
For completeness, we include the algorithm for offline policy optimization with \algabb here.
\label{sec:offline_alg}
\begin{algorithm}[h]
\caption{Offline Policy Optimization with \algabb}
\label{alg:offline_algorithm}
\begin{algorithmic}[1]
\State \textbf{Input:} Regularizer $\lambda$, Parameter $\alpha$, Model class ${\color{black} \mathcal{F}}$, Dataset $\mathcal{D}$ sampled from the stationary distribution of the behavior policy $\pi_b$.
\State Learn representation $\widehat{\phi}(s, a)$ with $\mathcal{D}$ via \eqref{eq:speder-objective}. \label{line:representation_offline}  
\State Set the empirical covariance matrix 
\begin{equation*}
    \widehat \Sigma=\sum_{(s,a)\in \mathcal{D}}\widehat \phi(s,a)\widehat \phi(s,a)^\top+\lambda I.
\end{equation*}
\State Set the reward penalty: $\widehat b (s,a)=\alpha \sqrt{\widehat \phi(s,a)^{\top}\widehat \Sigma^{-1}\widehat \phi(s,a)}. $\label{line:penalty}
\State Solve \label{line:offline_plan}
$\widehat \pi =\mathop{\arg\max}_{\pi}V^{\pi}_{\widehat P,r-\widehat b}.$
\State \textbf{Return } $\hat{\pi}$
\end{algorithmic}
\end{algorithm}
\section{Proof Details}
\subsection{Non-asymptotic Generalization Bound}
\label{sec:generalization}
In this subsection, we consider the non-asymptotic generalization bound for the $\ell_2$ minimization, which is necessary for the proof of series of key lemmas (Lemma~\ref{lem:backup_learned} and Lemma~\ref{lem:backup_learned_offline}) that are used in the PAC guarantee of the online and offline reinforcement learning. For simplicity, we denote the instance space as $\mathcal{X}$ and the target space as $\mathcal{Y}$, and we want to estimate the conditional density $p(y|x) = f^*(x, y)$. Assume we are given a function class $\mathcal{F}:(\mathcal{X}\times \mathcal{Y}) \to \mathbb{R}$ with $f^* \in \mathcal{F}$, as well as the data $\mathcal{D}:=\{(x_i, y_i)\}_{i=1}^n$, where $x_i \sim \mathcal{D}_i(x_{1:i-1}, y_{1:i-1})$, $y_i \sim p(\cdot|x_i)$ and $\mathcal{D}_i$ is some data generating process that depends on the previous samples (a.k.a a martingale process). We define the tangent sequence $\mathcal{D}^{\prime} := \{(x_i^\prime, y_i^\prime)\}$ where $x_i^\prime \sim \mathcal{D}_i(x_{1:i-1}, y_{1:i-1})$ and $y_i^\prime \sim p(\cdot|x_i^\prime)$. Consider the estimator obtained by following minimization problem:
\begin{align}
    \widehat{f} = & \mathop{\arg\min}_{f\in \mathcal{F}} \left\{\sum_{i=1}^n -2 f(x_i, y_i) + \sum_{i=1}^n\left(\sum_{y \in \mathcal{Y}} f^2(x_i, y)\right)\right\},
    \label{eq:objective_f}
\end{align}
{\color{black} where the summation over the counting measure of $\mathcal{Y}$ for discrete case can be replaced by the integration over the Lebesgue measure of $\mathcal{Y}$ for continuous case.}
We first prove the following decoupling inequality motivated by Lemma 24 of \citep{agarwal2020flambe}.
\begin{lemma}
Let $L(f, \mathcal{D}) = \sum_{i=1}^n \ell(f, (x_i, y_i))$, $\mathcal{D}^\prime$ is a tangent sequence of $\mathcal{D}$ and $\widehat{f}(\mathcal{D})$ be any estimator taking random variable $\mathcal{D}$ as input with range $\mathcal{F}$. Then
{\color{black}
\begin{align*}
    \mathbb{E}_{\mathcal{D}}\left[\exp\left(-L(\widehat{f}(\mathcal{D}), \mathcal{D}) - \log\mathbb{E}_{\mathcal{D}^\prime}[\exp[-L(\widehat{f}(\mathcal{D}), \mathcal{D}^\prime)]] - \log |\mathcal{F}|\right)\right] \leq 1.
\end{align*}}
\end{lemma}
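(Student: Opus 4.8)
The plan is to follow the standard realizability-based decoupling argument (in the spirit of Lemma~24 of \citep{agarwal2020flambe}), reducing the statement about the data-dependent estimator $\widehat{f}(\mathcal{D})$ to a per-hypothesis martingale computation. The overall shape is: first trade the estimator for a union bound over the finite class $\mathcal{F}$, paying the $\log|\mathcal{F}|$ term, and then verify the per-hypothesis inequality using the conditional-independence structure of the tangent sequence.

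First I would eliminate the dependence on the estimator. Since $\widehat{f}(\mathcal{D}) \in \mathcal{F}$ for every realization of $\mathcal{D}$, and every summand below is nonnegative, the single exponential term is dominated by the sum over the whole class:
\begin{align*}
    \exp\left(-L(\widehat{f}(\mathcal{D}), \mathcal{D}) - \log\mathbb{E}_{\mathcal{D}'}\left[\exp(-L(\widehat{f}(\mathcal{D}), \mathcal{D}'))\right]\right) \leq \sum_{f \in \mathcal{F}} \frac{\exp(-L(f, \mathcal{D}))}{\mathbb{E}_{\mathcal{D}'}\left[\exp(-L(f, \mathcal{D}'))\right]}.
\end{align*}
Multiplying by $\exp(-\log|\mathcal{F}|) = 1/|\mathcal{F}|$ and taking $\mathbb{E}_{\mathcal{D}}$, it suffices by linearity to prove, for each fixed $f \in \mathcal{F}$,
\begin{align*}
    \mathbb{E}_{\mathcal{D}}\left[\frac{\exp(-L(f, \mathcal{D}))}{\mathbb{E}_{\mathcal{D}'}\left[\exp(-L(f, \mathcal{D}'))\right]}\right] \leq 1,
\end{align*}
because averaging $|\mathcal{F}|$ such terms, each bounded by $1$, reproduces the claimed bound.

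Next I would exploit the tangent-sequence construction for the fixed $f$. Writing $\mathcal{F}_{i-1}$ for the $\sigma$-algebra generated by $(x_{1:i-1}, y_{1:i-1})$ and $\ell_i(f) = \ell(f,(x_i,y_i))$, the key fact is that, conditioned on $\mathcal{D}$, the primed pairs $(x_i', y_i')$ are mutually independent and each is drawn from the same conditional law as $(x_i, y_i)$ given the \emph{original} history $\mathcal{F}_{i-1}$. Hence the denominator factorizes as
\begin{align*}
    \mathbb{E}_{\mathcal{D}'}\left[\exp(-L(f, \mathcal{D}'))\right] = \prod_{i=1}^n T_i, \qquad T_i := \mathbb{E}\left[\exp(-\ell_i(f)) \mid \mathcal{F}_{i-1}\right],
\end{align*}
where each $T_i$ is $\mathcal{F}_{i-1}$-measurable.

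Finally, I would recognize the resulting ratio as the terminal value of a martingale. Define $M_k = \prod_{i=1}^k \exp(-\ell_i(f))/T_i$, so the quantity to bound is exactly $\mathbb{E}_{\mathcal{D}}[M_n]$. Since $M_{k-1}$ and $T_k$ are $\mathcal{F}_{k-1}$-measurable while $\mathbb{E}[\exp(-\ell_k(f)) \mid \mathcal{F}_{k-1}] = T_k$ — the actual sample $(x_k,y_k)$ obeys the very conditional law defining $T_k$ — we get $\mathbb{E}[M_k \mid \mathcal{F}_{k-1}] = M_{k-1}$. Thus $(M_k)$ is a nonnegative martingale with $M_0 = 1$, giving $\mathbb{E}[M_n] = 1 \leq 1$ and closing the argument. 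The main obstacle, and the only place requiring genuine care, is the factorization step: one must invoke the precise definition of the tangent sequence, namely that $x_i'$ is sampled from $\mathcal{D}_i(x_{1:i-1}, y_{1:i-1})$ built from the \emph{original} (unprimed) history, to guarantee simultaneously that the denominator splits into a product of $\mathcal{F}_{i-1}$-measurable factors and that those factors are exactly the one-step conditional normalizers that turn the ratio into a martingale. Everything else is bookkeeping.
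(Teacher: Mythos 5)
Your proposal is correct and is essentially the paper's argument: the paper's KL/Gibbs-variational step applied to the point mass $\widehat{\pi}=\mathbf{1}_{\widehat{f}(\mathcal{D})}$ reduces to exactly your bound $\exp(g(\widehat{f}(\mathcal{D})))\leq\sum_{f\in\mathcal{F}}\exp(g(f))$, and your martingale/tower-property computation is just a more explicit rendering of the paper's ``peel off terms from $n$ down to $1$'' step, both resting on the same conditional-independence property of the tangent sequence. No gaps; the explicit martingale formulation is, if anything, cleaner than the paper's phrasing.
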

\begin{proof}
Let $\pi$ be the uniform distribution over $\mathcal{F}$ and $g:\mathcal{F} \to \mathbb{R}$ be any function. Define the following probability measure over $\mathcal{F}$: $\mu(f) = \frac{\exp g(f)}{\sum_{f\in\mathcal{F}}\exp(g(f))}$. Then for any probability distribution $\widehat{\pi}$ over $\mathcal{F}$, we have:
\begin{align*}
    0 \leq & \text{KL}(\widehat{\pi}\|\mu)\\
    = & \sum_{f\in \mathcal{F}} \widehat{\pi}(f) \log \frac{\widehat{\pi}(f)}{\mu(f)}\\
    = & \sum_{f\in\mathcal{F}}\left[\widehat{\pi}(f) \log \widehat{\pi}(f) - \widehat{\pi}(f) g(f)\right] + \log \sum_{f\in\mathcal{F}} \exp(g(f))\\
    = & \sum_{f\in\mathcal{F}} \left[\widehat{\pi}(f) \log \widehat{\pi}(f) + \widehat{\pi}(f) \log |\mathcal{F}|\right] - \sum_{f\in\mathcal{F}} \widehat{\pi}(f) g(f) + \log \mathbb{E}_{f\sim \pi} \exp(g(f))\\
    = & \text{KL}(\widehat{\pi}\|\pi) - \sum_{f\in\mathcal{F}} \widehat{\pi}(f) g(f) + \log \mathbb{E}_{f\sim \pi} \exp(g(f))\\
    \leq & \log |\mathcal{F}| - \sum_{f\in\mathcal{F}} \widehat{\pi}(f) g(f) + \log \mathbb{E}_{f\sim \pi} \exp(g(f)).
\end{align*}
Re-arranging, we have that
\begin{align*}
    \sum_f \widehat{\pi}(f) g(f) - \log |\mathcal{F}| \leq \log \mathbb{E}_{f\sim \pi}\exp(g(f)).
\end{align*}
Take {\color{black} $g = -L(f, \mathcal{D}) - \log \mathbb{E}_{\mathcal{D}^\prime}[ \exp(-L(f, \mathcal{D}^\prime))]$}, $\widehat{\pi}(f) = \mathbf{1}_{\widehat{f}(\mathcal{D})}$, we obtain that for any $\mathcal{D}$,
{\color{black}
\begin{align*}
    -L(\widehat{f}(\mathcal{D}), \mathcal{D}) - \log \mathbb{E}_{\mathcal{D}^\prime} \exp(-L(\widehat{f}(\mathcal{D}), \mathcal{D}^\prime)) - \log |\mathcal{F}|\leq \log \mathbb{E}_{f\sim \pi}\frac{\exp(-L(\widehat{f}(\mathcal{D}), \mathcal{D}))}{\mathbb{E}_{\mathcal{D}^\prime}\exp(-L(\widehat{f}(\mathcal{D}), \mathcal{D}^\prime))}.
\end{align*}
}
We exponentiate both sides and take the expectation over $\mathcal{D}$, which gives
{\color{black}
\begin{align*}
    \mathbb{E}_{\mathcal{D}}\left[\exp\left(-L(\widehat{f}(\mathcal{D}), \mathcal{D}) - \log\mathbb{E}_{\mathcal{D}^\prime}\left[\exp(-L(\widehat{f}(\mathcal{D}), \mathcal{D}^\prime))\right] - \log |\mathcal{F}|\right)\right] \leq \mathbb{E}_{f\sim \pi} \mathbb{E}_{\mathcal{D}}\frac{\exp(-L(\widehat{f}(\mathcal{D}), \mathcal{D}))}{\mathbb{E}_{\mathcal{D}^\prime}\exp\left[- L(\widehat{f}(\mathcal{D}), \mathcal{D}^\prime)\big|\mathcal{D}\right]}.
\end{align*}
}
Note that, conditioned on $\mathcal{D}$, the samples in the tangent sequence $\mathcal{D}^\prime$ are independent, which leads to
{\color{black}
\begin{align*}
    \mathbb{E}_{\mathcal{D}^\prime}\exp\left[-L(\widehat{f}(\mathcal{D}), \mathcal{D}^\prime)\big|\mathcal{D}\right] = \prod_{i=1}^n \exp\left(\mathbb{E}_{(x_i, y_i) \sim \mathcal{D}_i}\left[-l(f, (x_i, y_i))\right]\right).
\end{align*}}
As a result, we can peel off terms from $n$ down to $1$ and cancel out terms in the numerator. Hence, we have
{\color{black}
\begin{align*}
    \mathbb{E}_{\mathcal{D}}\left[-\exp\left(L(\widehat{f}(\mathcal{D}), \mathcal{D}) - \log \mathbb{E}_{\mathcal{D}^\prime}\left[\exp \left(-L(\widehat{f}(\mathcal{D}), \mathcal{D}^\prime)\right)\right] - \log |\mathcal{F}|\right)\right] \leq 1,
\end{align*}
}
which concludes the proof.
\end{proof}
\begin{theorem}
Assume $|\mathcal{F}| < \infty$, $f^* \in \mathcal{F}$ and $\|f(x, y)\|_{\infty} \leq C$, $\forall f\in\mathcal{F}$. Then with probability at least $1-\delta$, we have
\begin{align*}
    \sum_{i=1}^n \mathbb{E}_{x_i\sim\mathcal{D}_i} \|f^* - f\|_2^2 \leq C^\prime\log |\mathcal{F}|/\delta,
\end{align*}
where $C^\prime$ only depends on $C$.
\label{thm:generalization}
\end{theorem}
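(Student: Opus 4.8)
The plan is to follow the martingale exponential-moment (``decoupling'') method used for maximum-likelihood estimation in low-rank MDPs, but adapted to the squared ($L_2$) surrogate loss rather than the log-loss. Write the per-sample loss as $\ell_f(x,y) \defeq -2 f(x,y) + \sum_{y'} f^2(x,y')$, so that $\widehat f$ in \eqref{eq:objective_f} minimizes $\sum_i \ell_f(x_i,y_i)$, and the boundedness $\|f\|_\infty \le C$ together with the (approximate) density normalization $\sum_{y'} f(x,\cdot) \le 1$ gives $\sum_{y'} f^2(x,\cdot) \le C$, so the per-sample losses are bounded by a constant depending only on $C$. The key identity I would establish first is that, in expectation over the label, the excess loss is exactly the target quantity:
\[
\mathbb{E}_{y\sim p(\cdot|x)}\bigl[\ell_f(x,y)-\ell_{f^*}(x,y)\bigr] = \|f(x,\cdot)-f^*(x,\cdot)\|_2^2 \eqdef \Delta_f(x)\ge 0,
\]
which follows from $f^*(x,\cdot)=p(\cdot|x)$ by direct expansion. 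Throughout I would work with the rescaled loss $L(f,\mathcal{D})=\beta\sum_{i=1}^n(\ell_f-\ell_{f^*})(x_i,y_i)$ for a small constant $\beta=\beta(C)>0$ to be fixed at the end.

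Next I would combine optimality with the preceding decoupling Lemma. Since $f^*\in\mathcal{F}$ and $\widehat f$ minimizes $\sum_i\ell_f$, optimality gives $L(\widehat f,\mathcal{D})\le L(f^*,\mathcal{D})=0$, i.e.\ $-L(\widehat f,\mathcal{D})\ge 0$. Applying the Lemma to this $L$ yields $\mathbb{E}_{\mathcal{D}}[\exp(\cdots)]\le 1$, and Markov's inequality converts it into: with probability at least $1-\delta$,
\[
-L(\widehat f,\mathcal{D})-\log\mathbb{E}_{\mathcal{D}'}\bigl[\exp(-L(\widehat f,\mathcal{D}'))\bigr]\le \log(|\mathcal{F}|/\delta).
\]
Together with $-L(\widehat f,\mathcal{D})\ge 0$ this gives $-\log\mathbb{E}_{\mathcal{D}'}[\exp(-L(\widehat f,\mathcal{D}'))]\le\log(|\mathcal{F}|/\delta)$. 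Because the tangent sequence $\mathcal{D}'$ is conditionally independent given the past ($x_i'\sim\mathcal{D}_i$, $y_i'\sim p(\cdot|x_i')$), this factorizes into a sum of per-step log-moment-generating terms, $\sum_{i=1}^n -\log\mathbb{E}_{(x_i',y_i')}[\exp(-\beta(\ell_{\widehat f}-\ell_{f^*}))]\le\log(|\mathcal{F}|/\delta)$.

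The crux is a per-step lower bound $-\log\mathbb{E}_{(x',y')}[\exp(-\beta(\ell_{\widehat f}-\ell_{f^*}))]\ge c\,\mathbb{E}_{x'}[\Delta_{\widehat f}(x')]$ with $c=c(\beta,C)>0$. Writing $g=\widehat f-f^*$, a direct computation gives $\ell_{\widehat f}-\ell_{f^*}=\Delta_{\widehat f}(x')+W$ with $W\defeq 2(\mathbb{E}_{y'}[g]-g(x',y'))$ mean-zero in $y'$, so the log-MGF equals $\beta\Delta_{\widehat f}(x')-\log\mathbb{E}_{y'}[\exp(-\beta W)]$. Since $W$ is bounded by a multiple of $C$, a Bernstein-type bound for bounded mean-zero variables gives $\log\mathbb{E}_{y'}[\exp(-\beta W)]\le C''\beta^2\mathrm{Var}_{y'}(W)$ for $\beta$ small; the essential gain is the \emph{variance-to-second-moment} inequality $\mathrm{Var}_{y'}(W)\le 4\,\mathbb{E}_{y'}[g^2]=4\Delta_{\widehat f}(x')$. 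This yields a lower bound $\beta(1-4C''\beta)\Delta_{\widehat f}(x')$; choosing $\beta$ small enough (depending only on $C$) makes the coefficient at least $\beta/2$. Summing over $i$, rearranging, and using that $x_i'$ and $x_i$ share the same conditional law gives $\sum_i\mathbb{E}_{x_i\sim\mathcal{D}_i}\|\widehat f-f^*\|_2^2\le \tfrac{2}{\beta}\log(|\mathcal{F}|/\delta)=C'\log(|\mathcal{F}|/\delta)$. I expect this final per-step estimate to be the main obstacle: the whole argument hinges on controlling the fluctuation variance by the squared $L_2$ distance itself, which is precisely what lets the quadratic-in-$\beta$ error be absorbed into the leading linear term, and it is here that $\|f\|_\infty\le C$ and the resulting constant $C'$ enter.
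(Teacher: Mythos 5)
Your proposal follows essentially the same route as the paper's proof: the martingale decoupling lemma plus Chernoff/Markov, the identity that the expected excess of the squared-loss equals $\|f-f^*\|_2^2$, optimality of $\widehat f$ giving $L(\widehat f,\mathcal{D})\le 0$, and a Bernstein-type per-step MGF bound whose quadratic term is absorbed via a self-bounding (second-moment-to-first-moment) inequality by taking the temperature $\beta$ (the paper's $\rho$) small depending only on $C$. The only slip is the claim $\mathrm{Var}_{y'}(W)\le 4\,\mathbb{E}_{y'}[g^2]=4\Delta_{\widehat f}(x')$: since $y'\sim f^*(x',\cdot)$, one has $\mathbb{E}_{y'}[g^2]=\sum_{y} f^*(x',y)g(x',y)^2\le C\Delta_{\widehat f}(x')$ rather than equality, which only changes the constant and is harmless.
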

With Chernoff's method, we have that
{\color{black}
\begin{align*}
    - \log \mathbb{E}_{\mathcal{D}^\prime}\left[\exp\left(-L(\widehat{f}(\mathcal{D}), \mathcal{D}^\prime)\right)\right] \leq L(\widehat{f}(\mathcal{D}), \mathcal{D}) + \log |\mathcal{F}| + \log 1/\delta.
\end{align*}
}
{\color{black}
Take
\begin{align*}
    l(f, (x_i, y_i)) = 2(f^*(x_i, y_i) - f(x_i, y_i)) + \sum_{y\in\mathcal{Y}}\left( f(x_i, y)^2 -  f^*(x_i, y)^2\right),
\end{align*}
and
}
\begin{align*}
    L(f, \mathcal{D}) = \rho\left(\sum_{i=1}^n 2 (f^*(x_i, y_i) - f(x_i, y_i)) + \sum_{i=1}^n \sum_{y\in\mathcal{Y}} \left(f(x_i, y)^2 - f^*(x_i, y)^2\right)\right),
\end{align*}
{\color{black} where $\rho>0$ is a constant to determine later. As $\widehat{f}(\mathcal{D})$ is obtained by minimizing $L(f, \mathcal{D})$,} and $f^* \in \mathcal{F}$, we have $L(\widehat{f}(\mathcal{D}), \mathcal{D}) \leq L(f^*, \mathcal{D}) \leq 0$. 
{\color{black} Furthermore, as $\mathcal{D}^\prime$ is the tangent sequence of $\mathcal{D}$, direct computation shows
\begin{align*}
    - \log \mathbb{E}_{\mathcal{D}^\prime}\left[\exp\left(-L(\widehat{f}(\mathcal{D}), \mathcal{D}^\prime)\right)\right] \leq \log \frac{|\mathcal{F}|}{\delta}.
\end{align*}
We now relate the term $- \log \mathbb{E}_{\mathcal{D}^\prime}\left[\exp\left(-L(\widehat{f}(\mathcal{D}), \mathcal{D}^\prime)\right)\right]$ with our target $\sum_{i=1}^n \mathbb{E}_{x_i\sim\mathcal{D}_i}\|\widehat{f}(x_i, \cdot) - f^*(x_i, \cdot)\|_2^2$ using the method introduced in \citet{zhang2006information}.

\color{black} Note that $\sum_{y\in\mathcal{Y}} f(x, y) = 1$, as $\|f\|_{\infty} \leq C$, with a straightforward application of H\"older's inequality, we have that $\sum_{y\in\mathcal{Y}} f(x, y)^2 \leq C$. We then consider the term
\begin{align*}
    & \mathbb{E}_{y_i \sim f^*(x_i, y)} \left[l(f, (x_i, y_i))^2\right] + \mathbb{E}_{ y_i \sim f(x_i, y)} \left[l(f, (x_i, y_i))^2\right]\\
    = & 4\sum_{y\in\mathcal{Y}} \left[(f(x_i, y) + f^*(x_i, y)) (f(x_i, y) - f^*(x_i, y))^2\right] - 3 \left(\sum_{y\in\mathcal{Y}} (f^*(x_i, y)^2 - f(x_i, y)^2)\right)^2\\
    \leq & \sum_{y\in \mathcal{Y}} \left((f(x_i, y) - f^*(x_i, y))^2\right) \left(8C + 3\sum_{y\in\mathcal{Y}} (f(x_i, y) + f^*(x_i, y))^2\right)\\
    \leq & 20C\sum_{y\in \mathcal{Y}} \left((f(x_i, y) - f^*(x_i, y))^2\right)\\
    = & 20C\mathbb{E}_{y_i \sim f^*(x, y)} \left[l(f, (x_i, y_i))\right].
\end{align*}
As $\mathbb{E}_{ y_i \sim f(x_i)} \left[l(f, (x_i, y_i))^2\right] \geq 0$, we can conclude that 
\begin{align*}
    \mathbb{E}_{y_i \sim f^*(x_i, y)} \left[l(f, (x_i, y_i))^2\right] \leq (20C\mathbb{E}_{y_i \sim f^*(x, y)} \left[l(f, (x_i, y_i))\right].
\end{align*}
Furthermore, it is straightforward to see $|l(f, (x_i, y_i))| \leq 3C$. With the last bound in Proposition 1.2 in \citet{zhang2006information}, we have that
\begin{align*}
    & \log \mathbb{E}_{\mathcal{D}^\prime}\left[\exp\left(-L(\widehat{f}(\mathcal{D}), \mathcal{D}^\prime)\right)\right]\\
    = & \sum_{i=1}^n \log \mathbb{E}_{(x_i, y_i)\sim\mathcal{D}_i} \left[\exp(-\rho l(f, (x_i, y_i)))\right]\\
    \leq & -\rho\sum_{i=1}^n\mathbb{E}_{(x_i, y_i)\sim\mathcal{D}_i} \left[l(f, (x_i, y_i))\right] + \frac{\exp(3\rho C) - 3\rho C - 1}{9C^2}\mathbb{E}_{(x_i, y_i)\sim\mathcal{D}_i} \left[l(f, (x_i, y_i))^2\right]\\
    \leq & -\left(\rho - \frac{20(\exp(3\rho C) - 3\rho C - 1)}{9C}\right) \sum_{i=1}^n \mathbb{E}_{(x_i, y_i) \sim\mathcal{D}_i} \|\widehat{f}(x_i, \cdot) - f^*(x_i, \cdot)\|_2^2.
\end{align*}
As $\exp(x) - x - 1 \approx 0.5x^2$ as $x\to 0$, we know there exists sufficiently small $\rho$ that only depends on $C$, such that $9\rho C > 20(\exp(3\rho C) - 3\rho C - 1)$.
Hence, we know that,
\begin{align*}
    \mathbb{E}_{(x_i, y_i) \sim\mathcal{D}_i} \|\widehat{f}(x_i, \cdot) - f^*(x_i, \cdot)\|_2^2 \leq \frac{9C}{9\rho C - 20(\exp(3\rho C) - 3\rho C - 1)} \log \frac{|\mathcal{F}|}{\delta}.
\end{align*}
}
{\color{black} \paragraph{Compared with the MLE guarantee} For discrete domain, as $L_2$ norm is always bounded by $L_1$ norm, our guarantee is weaker than the guarantee of MLE used in \citep{agarwal2020flambe, uehara2021representation}. However, for general cases, $L_1$ and $L_2$ does not imply each other, and hence we cannot directly compare our theoretical guarantee with the MLE guarantee. Nevertheless, our method is easier to optimize compared to the MLE, which makes it a preferable practical choice.}
\subsection{PAC bounds for Online Reinforcement Learning}
\label{sec:online}
Before we start, we first state some basic properties of MDP that can be obtained from the definition of the related terms. For the state visitation distribution, a straightforward computation shows that
\begin{align*}
    d_{P}^\pi(s) = (1-\gamma)\rho(s) + \gamma \mathbb{E}_{\tilde{s}\sim d_{P}^\pi, \tilde{a}\sim \pi(\cdot|\tilde{s})} P(s|\tilde{s}, \tilde{a}).
\end{align*}
Meanwhile, we have that
\begin{align*}
    V_{P, r}^\pi = \frac{1}{1-\gamma}\mathbb{E}_{s \sim d_{P}^\pi, a\sim\pi(\cdot|s)} [r(s, a)].
\end{align*}
For now, we assume $\widehat{P}_n(\cdot, \cdot)$ is a valid probability measure, $P \in \mathcal{F}$, and the following two inequalities hold $\forall n\in \mathbb{N}^+$ with probability at least $1-\delta$:
\begin{align*}
    \mathbb{E}_{s\sim \rho_n, a\sim\mathcal{U}(\mathcal{A})}\|\widehat{P}_n(\cdot|s, a) - P(\cdot|s, a)\|_2^2 \leq & \zeta_n\\
    \mathbb{E}_{s\sim \rho_n^\prime, a\sim\mathcal{U}(\mathcal{A})}\|\widehat{P}_n(\cdot|s, a) - P(\cdot|s, a)\|_2^2 \leq & \zeta_n
\end{align*}

\paragraph{Proof Sketch} Our proof is organized as follows:
\begin{itemize}
    \item Based on Theorem ~\ref{thm:generalization}, we prove a one-step back inequality for the learned model (Lemma~\ref{lem:backup_learned}), which is further used to show the optimisticity, \ie, the policy planning on the learned model with the additional bonus upper bound the optimal value up to some error term (Lemma~\ref{lem:optimism}).
    \item We then bound the cumulative regret of the adaptive chosen policy (Lemma~\ref{lem:regret}) based on the established optimisticity, and further exploit a one-step back inequality for the true model (Lemma~\ref{lem:backup_true}) and standard elliptical potential lemma (Lemma~\ref{lem:potential}).
    \item With standard regret to PAC conversion, we obtain the final PAC guarantee (Theorem~\ref{thm:pac_online}).
\end{itemize}



We first state the following basic property for the value function:
\begin{lemma}[$L_2$ norm of $V_{P, r}^\pi$] \label{lem:V_norm}
For any policy $\pi$, we have that
\begin{align*}
    \|V_{P, r}^\pi\|_2 \leq \sqrt{2d\left(1 + \frac{d\gamma^2}{(1-\gamma)^2}\right)} \lesssim \frac{d}{1-\gamma}
\end{align*}
\end{lemma}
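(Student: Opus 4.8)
The plan is to exploit the low-rank factorization to express $V_{P,r}^\pi$ pointwise as an inner product, split it into an immediate-reward term and a discounted future-value term, and bound the two pieces separately in $L_2$ using Assumptions~\ref{assump:reward_norm} and~\ref{assump:repr_norm} respectively. Concretely, the Bellman recursion gives, for each $s$, $V_{P,r}^\pi(s)=R^\pi(s)+\gamma U^\pi(s)$, where $R^\pi(s):=\int_{\mathcal{A}}\pi(a|s)r(s,a)\,da$ and $U^\pi(s):=\int_{\mathcal{A}}\pi(a|s)\int_{\mathcal{S}}P(s'|s,a)V_{P,r}^\pi(s')\,ds'\,da$. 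Using the pointwise inequality $(x+y)^2\le 2x^2+2y^2$ and integrating over $s$ yields $\|V_{P,r}^\pi\|_2^2\le 2\|R^\pi\|_2^2+2\gamma^2\|U^\pi\|_2^2$, so it suffices to control the two terms by $d$ and $d^2/(1-\gamma)^2$, respectively.

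For the reward term, since $r\ge 0$ and (for finite $\mathcal{A}$, as assumed in the online setting) $\pi(a|s)\le 1$, we have $0\le R^\pi(s)\le\int_{\mathcal{A}}r(s,a)\,da$; squaring and integrating over $s$, Assumption~\ref{assump:reward_norm} directly gives $\|R^\pi\|_2^2\le d$. For the future-value term, I would write $U^\pi(s)=\langle\psi^\pi(s),v^\pi\rangle$ with $\psi^\pi(s):=\int_{\mathcal{A}}\pi(a|s)\phi(s,a)\,da$ and $v^\pi:=\int_{\mathcal{S}}\mu(s')V_{P,r}^\pi(s')\,ds'$, using the factorization $P(s'|s,a)=\phi(s,a)^\top\mu(s')$. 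Cauchy--Schwarz gives $U^\pi(s)^2\le\|\psi^\pi(s)\|_2^2\,\|v^\pi\|_2^2$. Since $r\in[0,1]$ implies $\|V_{P,r}^\pi\|_{L_\infty}\le 1/(1-\gamma)$, applying the normalization condition of Assumption~\ref{assump:low_rank_mdp} to $g=(1-\gamma)V_{P,r}^\pi$ bounds $\|v^\pi\|_2\le\sqrt{d}/(1-\gamma)$; and the triangle inequality together with $\pi(a|s)\le 1$ gives $\|\psi^\pi(s)\|_2\le\int_{\mathcal{A}}\|\phi(s,a)\|_2\,da$, so Assumption~\ref{assump:repr_norm} yields $\int_{\mathcal{S}}\|\psi^\pi(s)\|_2^2\,ds\le d$. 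Combining these, $\|U^\pi\|_2^2\le d\cdot d/(1-\gamma)^2$.

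Plugging the two bounds back in gives $\|V_{P,r}^\pi\|_2^2\le 2d+2\gamma^2 d^2/(1-\gamma)^2=2d\big(1+d\gamma^2/(1-\gamma)^2\big)$, which is exactly the stated bound; the asymptotic form $\lesssim d/(1-\gamma)$ then follows since $\sqrt{2d}+\sqrt{2}\,\gamma d/(1-\gamma)\lesssim d/(1-\gamma)$ for $d\ge 1$. The only real subtlety---and the main thing to get right---is the pairing of each normalization assumption with the correct term and the repeated use of $\pi(a|s)\le 1$, which is what lets me pass from the policy-averaged quantities $R^\pi,\psi^\pi$ to the policy-free integrals $\int_{\mathcal{A}}r\,da$ and $\int_{\mathcal{A}}\|\phi\|_2\,da$ appearing in Assumptions~\ref{assump:reward_norm} and~\ref{assump:repr_norm}. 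This step relies on $\mathcal{A}$ being finite (so that $\pi$ is a probability mass function); for continuous actions one would instead have to fold the policy density into the normalization constants, which I expect to be the one place requiring extra care.
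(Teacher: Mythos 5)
Your proof is correct and follows essentially the same route as the paper's: split $V_{P,r}^\pi$ into the policy-averaged reward plus the discounted one-step look-ahead, apply $(x+y)^2\le 2x^2+2y^2$, bound the reward piece by Assumption~\ref{assump:reward_norm} and the future-value piece via the low-rank factorization, Cauchy--Schwarz, the $\sqrt{d}/(1-\gamma)$ bound on $\int_{\mathcal{S}}\mu(s')V_{P,r}^\pi(s')\,ds'$, and Assumption~\ref{assump:repr_norm}. Your explicit handling of the step from $\int_{\mathcal{A}}\pi(a|s)(\cdot)\,da$ to $\int_{\mathcal{A}}(\cdot)\,da$ via $\pi(a|s)\le 1$ is the same (implicit) move the paper makes, and your caveat about continuous action spaces applies equally to the paper's argument.
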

\begin{proof}
From the properties of low-rank MDP, we know there exists $w^\pi$, $\|w^\pi\|_2\leq \frac{\sqrt{d}}{1-\gamma}$ and $Q_{P, r}^\pi(s, a) = \phi^*(s, a)^\top w_h^\pi$. Then we have
\begin{align*}
    \|V_{P, r}^\pi\|_2^2
    = & \int_{\mathcal{S}} V^\pi(s)^2 \dif s\\
    = & \int_{\mathcal{S}} \left(\int_{\mathcal{A}}\pi(a|s) (r(s, a) + \gamma Q_{P, r}^\pi(s, a))\dif a\right)^2 \dif s \\
    \leq & \int_{\mathcal{S}} \left(\int_{\mathcal{A}} r(s, a) + \gamma Q_{P, r}^\pi(s, a)\dif a\right)^2 \dif s \\
    \leq & 2\int_{\mathcal{S}} \left[\int_{\mathcal{A}}r(s, a)\dif a\right]^2\dif s + 2\gamma^2\left[\int_{\mathcal{A}}Q_{P, r}^\pi(s, a)\dif a\right]^2 \dif s \\
    \leq & 2d + \frac{2 d\gamma^2}{(1-\gamma)^2}\int_{\mathcal{S}}\left[\int_{\mathcal{A}}\|\phi^*(s, a)\|_2 \dif a\right]^2 \dif s\\
    \leq & 2d\left(1+\frac{d \gamma^2}{(1-\gamma)^2}\right)\\
    \lesssim & \frac{d^2}{(1-\gamma)^2},
\end{align*}
which concludes the proof.
\end{proof}

Before we proceed to the proof, we first define the following terms. Let $\rho_n(s) = \frac{1}{n}\sum_{i=1}^n d_{P^*}^{\pi_i}(s)$. With slightly abuse of notation, we also use $\rho_n(s, a) = \frac{1}{n}\sum_{i=1}^n d_{P^*}^{\pi_i}(s, a)$, and use $\rho_n^\prime$ to denote the marginal distribution of $s^\prime$ for the triple $(s, a, s^\prime) \sim \rho_n(s) \mathcal{U}(a) P^*(s^\prime|s, a)$.
For notation simplicity, we denote
\begin{align*}
    \Sigma_{\rho_n \times \mathcal{U}(\mathcal{A}), \phi} = & n \mathbb{E}_{s\sim\rho_n, a\sim \mathcal{U}(\mathcal{A})}\left[\phi(s, a) \phi(s, a)^\top\right] + \lambda_n I,\\
    \Sigma_{\rho_n, \phi} = & n \mathbb{E}_{(s, a)\sim\rho_n} \left[\phi(s, a) \phi(s, a)^\top\right] + \lambda_n I,\\
    \widehat{\Sigma}_{n, \phi} = & n\mathbb{E}_{(s, a) \sim \mathcal{D}_n}\left[\phi(s, a) \phi(s, a)^\top\right] + \lambda_n I.
\end{align*}
The following lemmas will be helpful when we demonstrate the effectiveness of our bonus:
\begin{lemma}[One-step back inequality for the learned model] \label{lem:backup_learned}
Assume $g:\mathcal{S}\times \mathcal{A} \to \mathbb{R}$ satisfies that $\|g\|_{\infty} \leq B_{\infty}$, $\left\|\int_{\mathcal{A}} g(\cdot, a) \dif a\right\|_2 \leq B_2$, then we have that
\begin{align*}
    & \left|\mathbb{E}_{(s, a) \sim d_{\widehat{P}_n}^\pi}\{g(s, a)\}\right| \leq 
     \sqrt{(1-\gamma)|\mathcal{A}|\mathbb{E}_{s\sim \rho_n, a\sim\mathcal{U}(\mathcal{A})}\{g^2(s, a)\}}\\
     & + \gamma \sqrt{n|\mathcal{A}|\mathbb{E}_{s\sim \rho_n^\prime, a\sim \mathcal{U}(\mathcal{A})}\left\{g^2(s, a)\right\}+B_2^2 n\zeta_n + \lambda_n B_{\infty}^2 d} \cdot \mathbb{E}_{(\tilde{s}, \tilde{a})\sim d_{\widehat{P}_n}^\pi}\left[\left\|\widehat{\phi}_n(\tilde{s}, \tilde{a})\right\|_{\Sigma_{\rho_n\times \mathcal{U}, \widehat{\phi}_n}^{-1}}\right].
\end{align*}
\end{lemma}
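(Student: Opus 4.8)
The plan is to decompose the learned-model visitation measure $d_{\widehat{P}_n}^\pi$ through one step of its Bellman flow and control the two resulting pieces separately. First I would write $\mathbb{E}_{(s,a)\sim d_{\widehat{P}_n}^\pi}[g(s,a)]=\mathbb{E}_{s\sim d_{\widehat{P}_n}^\pi}[\bar g(s)]$ with the $\pi$-average $\bar g(s):=\mathbb{E}_{a\sim\pi(\cdot|s)}[g(s,a)]$, and invoke the state-visitation flow identity recorded at the start of this subsection, $d_{\widehat{P}_n}^\pi(s)=(1-\gamma)\rho(s)+\gamma\,\mathbb{E}_{(\tilde s,\tilde a)\sim d_{\widehat{P}_n}^\pi}[\widehat{P}_n(s\mid\tilde s,\tilde a)]$, to split the quantity into an initial-distribution term $(1-\gamma)\mathbb{E}_{s\sim\rho}[\bar g(s)]$ and a one-step-ahead term $\gamma\,\mathbb{E}_{(\tilde s,\tilde a)\sim d_{\widehat{P}_n}^\pi}\big[\mathbb{E}_{s'\sim\widehat{P}_n(\cdot|\tilde s,\tilde a)}[\bar g(s')]\big]$.

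For the initial-distribution term, the key elementary observation is the pointwise domination $\rho_n(s)\ge(1-\gamma)\rho(s)$, immediate from $d_{P^*}^{\pi_i}(s)\ge(1-\gamma)\rho(s)$ and the definition $\rho_n=\tfrac1n\sum_i d_{P^*}^{\pi_i}$. Then $(1-\gamma)\mathbb{E}_{s\sim\rho}[\bar g]\le(1-\gamma)\sqrt{\mathbb{E}_{s\sim\rho}[\bar g^2]}$ by Jensen with respect to the probability measure $\rho$; substituting $\rho(s)\le(1-\gamma)^{-1}\rho_n(s)$ collapses the prefactor to give $\sqrt{(1-\gamma)\,\mathbb{E}_{s\sim\rho_n}[\bar g^2]}$; and a second Jensen over actions followed by the importance-sampling change of measure $\bar g(s)^2\le\mathbb{E}_{a\sim\pi}[g^2]\le|\mathcal{A}|\,\mathbb{E}_{a\sim\mathcal{U}}[g^2]$ (using $\pi(a|s)\le1$, valid since $|\mathcal{A}|<\infty$) produces exactly the first term $\sqrt{(1-\gamma)|\mathcal{A}|\,\mathbb{E}_{s\sim\rho_n,a\sim\mathcal{U}}[g^2]}$.

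For the one-step-ahead term I would exploit the low-rank structure $\widehat{P}_n(s'|s,a)=\widehat{\phi}_n(s,a)^\top\widehat{\mu}_n(s')$, which renders the inner expectation linear in the feature: $\mathbb{E}_{s'\sim\widehat{P}_n(\cdot|\tilde s,\tilde a)}[\bar g(s')]=\widehat{\phi}_n(\tilde s,\tilde a)^\top v$ with $v:=\int\widehat{\mu}_n(s')\bar g(s')\,ds'$. A Cauchy--Schwarz step in the $\Sigma_{\rho_n\times\mathcal{U},\widehat{\phi}_n}^{-1}$ norm factors the term as $\mathbb{E}_{(\tilde s,\tilde a)\sim d_{\widehat{P}_n}^\pi}\big[\|\widehat{\phi}_n\|_{\Sigma^{-1}}\big]\cdot\|v\|_{\Sigma}$, after which it remains to bound $\|v\|_\Sigma^2 = n\,\mathbb{E}_{s\sim\rho_n,a\sim\mathcal{U}}[(\widehat{\phi}_n^\top v)^2]+\lambda_n\|v\|_2^2$. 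Writing $\widehat{\phi}_n(s,a)^\top v=\mathbb{E}_{s'\sim\widehat{P}_n(\cdot|s,a)}[\bar g]$, I pass to the true kernel by Cauchy--Schwarz over $s'$, namely $|\int(\widehat{P}_n-P)\bar g\,ds'|\le\|\widehat{P}_n(\cdot|s,a)-P(\cdot|s,a)\|_2\,\|\bar g\|_2$, using $\|\bar g\|_2\le B_2$ together with the model-error hypothesis $\mathbb{E}_{\rho_n,\mathcal{U}}[\|\widehat{P}_n-P\|_2^2]\le\zeta_n$ to generate the $B_2^2 n\zeta_n$ contribution; the surviving true-model piece $n\,\mathbb{E}_{\rho_n,\mathcal{U}}[(\mathbb{E}_{s'\sim P}[\bar g])^2]$ becomes $n\,\mathbb{E}_{s'\sim\rho_n'}[\bar g^2]$ by Jensen and the definition of the pushforward $\rho_n'$, and then a final Jensen-plus-$\pi\!\to\!\mathcal{U}$ reweighting bounds it by $n|\mathcal{A}|\,\mathbb{E}_{\rho_n',\mathcal{U}}[g^2]$. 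The remaining $\lambda_n\|v\|_2^2\le\lambda_n B_\infty^2 d$ follows from $\|\bar g\|_\infty\le\|g\|_\infty\le B_\infty$ and the $\mu$-normalization $\|\int\widehat{\mu}_n(s')f(s')\,ds'\|_2\le\sqrt{d}\,\|f\|_\infty$ built into the model class (cf. Assumption~\ref{assump:low_rank_mdp}); collecting the three pieces under one square root yields the stated second term.

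The main obstacle I anticipate is the careful bookkeeping in controlling $\|v\|_\Sigma$: one must simultaneously split the learned-model expectation into a true-model part and an $L_2$ model-error part without losing the correct dependence on $B_2$ and $\zeta_n$, and recognize that averaging the squared true-model expectation over $\rho_n\times\mathcal{U}\times P^*$ is \emph{exactly} an expectation over the next-state marginal $\rho_n'$, so that the two applications of Jensen (over actions, then over next states) combine with the $\pi\!\to\!\mathcal{U}$ reweighting to produce the single factor $|\mathcal{A}|$ rather than a looser bound. By comparison the initial-distribution term is routine once the domination $\rho_n\ge(1-\gamma)\rho$ is spotted, so I expect the second term to carry essentially all the difficulty.
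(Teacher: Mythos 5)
Your proposal is correct and follows essentially the same route as the paper's proof: the Bellman-flow split of $d_{\widehat{P}_n}^\pi$ into the $(1-\gamma)\rho$ term and the one-step-ahead term, the domination $\rho_n\ge(1-\gamma)\rho$ with Jensen and the $\pi\to\mathcal{U}$ reweighting for the first piece, and for the second piece the low-rank factorization, generalized Cauchy--Schwarz in the $\Sigma_{\rho_n\times\mathcal{U},\widehat\phi_n}^{-1}$ norm, the split of $\|v\|_\Sigma^2$ into a true-model part (pushed to $\rho_n'$ by Jensen), a model-error part bounded via $B_2^2\zeta_n$ from Theorem~\ref{thm:generalization}, and the $\lambda_n B_\infty^2 d$ regularization term. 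Your introduction of $\bar g$ is only a cosmetic repackaging of the paper's argument.
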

\begin{proof}
Note that
\begin{align*}
    \mathbb{E}_{(s, a) \sim d_{\widehat{P}_n}^\pi}\{g(s, a)\}
    = \gamma \mathbb{E}_{(\tilde{s}, \tilde{a})\sim d_{\widehat{P}_n}^\pi, s\sim \widehat{P}_n(\cdot|\tilde{s}, \tilde{a}), a\sim \pi(\cdot|s)} \{g(s, a)\} +(1-\gamma) \mathbb{E}_{s\sim \rho, a\sim \pi(\cdot|s)} \{g(s, a)\}.
\end{align*}
For the second term, note that $d_{P}^\pi(s) \geq (1-\gamma) \rho(s)$, hence 
\begin{align*}
    & (1-\gamma)\mathbb{E}_{s\sim \rho, a\sim \pi(\cdot|s)} \{g(s, a)\}\\
    \leq & (1-\gamma)\sqrt{\mathbb{E}_{s\sim \rho, a\sim \pi(\cdot|s)}\{g^2(s, a)\}}\\
    = & (1-\gamma)\sqrt{\mathbb{E}_{s\sim \rho_n, a\sim \mathcal{U}(\mathcal{A})}\left\{\frac{\rho(s)\pi(a|s)|\mathcal{A}|}{\rho_n(s)} g^2(s, a)\right\}}\\
    \leq & \sqrt{(1-\gamma)|\mathcal{A}|\mathbb{E}_{s\sim \rho_n, a\sim \mathcal{U}(\mathcal{A})}\{g^2(s, a)\}}.
\end{align*}
For the first term, we have that
\begin{align*}
    & \mathbb{E}_{(\tilde{s}, \tilde{a})\sim d_{\widehat{P}_n}^\pi, s\sim \widehat{P}_n(\cdot|\tilde{s}, \tilde{a}), a\sim \pi(\cdot|s)}\{g(s, a)\}\\
    = & \mathbb{E}_{(\tilde{s}, \tilde{a})\sim d_{\widehat{P}_n}^\pi} \widehat{\phi}_n(\tilde{s}, \tilde{a})^\top \left[\int_{\mathcal{S}\times \mathcal{A}} \widehat{\mu}_n(s) \pi(a|s) g(s, a) \dif s \dif a\right]\\
    \leq & \mathbb{E}_{(\tilde{s}, \tilde{a})\sim d_{\widehat{P}_n}^\pi} \left\|\widehat{\phi}_n(\tilde{s}, \tilde{a})\right\|_{\Sigma_{\rho_n \times \mathcal{U}, \widehat{\phi}_n}^{-1}} \left\|\int_{\mathcal{S}\times \mathcal{A}} \widehat{\mu}_n(s) \pi(a|s) g(s, a) \dif s \dif a\right\|_{\Sigma_{\rho_n \times \mathcal{U}, \widehat{\phi}_n}},
\end{align*}
where for the inequality we use the generalized Cauchy-Schwartz inequality. Note
\begin{align*}
    & \left\|\int_{\mathcal{S}\times \mathcal{A}} \widehat{\mu}_n(s) \pi(a|s) g(s, a) \dif s \dif a\right\|_{\Sigma_{\rho_n \times \mathcal{U}, \widehat{\phi}_n}}^2\\
    = & n \mathbb{E}_{\tilde{s}\sim \rho_n, \tilde{a}\sim \mathcal{U}(\mathcal{A})} \left[\left(\int_{\mathcal{S}\times \mathcal{A}} \widehat{P}_n(s|\tilde{s}, \tilde{a})\pi(a|s) g(s, a)\dif s \dif a\right)^2\right] + \lambda_n \left\|\int_{\mathcal{S} \times \mathcal{A}}\widehat{\mu}_n(s) \pi(a|s) g(s, a) \dif s \dif a\right\|^2\\
    \leq & 2n \mathbb{E}_{\tilde{s} \sim \rho_n, \tilde{a} \sim \mathcal{U}(\mathcal{A})}\left[\left(\int_{\mathcal{S}\times \mathcal{A}} P(s|\tilde{s}, \tilde{a})\pi(a|s) g(s, a)\dif s \dif a\right)^2\right] \\
    & + 2n\mathbb{E}_{\tilde{s}\sim \rho_n, \tilde{a}\sim\mathcal{U}(\mathcal{A})} \left[\left(\int_{\mathcal{S}\times \mathcal{A}} (\widehat{P}_n(s|\tilde{s}, \tilde{a}) - P(s|\tilde{s}, \tilde{a}))\pi(a|s)g(s, a) \dif s \dif a\right)^2\right] + \lambda_n B_{\infty}^2 d.
\end{align*}
With Jensen's inequality, we have
\begin{align*}
     & \mathbb{E}_{\tilde{s} \sim \rho_n, \tilde{a} \sim \mathcal{U}(\mathcal{A})}\left[\left(\int_{\mathcal{S}\times \mathcal{A}} P(s|\tilde{s}, \tilde{a})\pi(a|s) g(s, a)\dif s \dif a\right)^2\right] \\
     \leq & \mathbb{E}_{\tilde{s}\sim \rho_n, \tilde{a}\sim \mathcal{U}(\mathcal{A}), s\sim P(\cdot|\tilde{s}, \tilde{a}), a\sim \pi(\cdot|s)}\left\{g^2(s, a)\right\}\\
     = & \mathbb{E}_{s\sim \rho_n^\prime, a\sim \pi(\cdot|s)}\left\{g^2(s, a)\right\}\\
     \leq & |\mathcal{A}|\mathbb{E}_{s\sim \rho_n^\prime, a\sim \mathcal{U}(\mathcal{A})}\left\{g^2(s, a)\right\}
\end{align*}
Meanwhile,
\begin{align*}
    & \mathbb{E}_{\tilde{s}\sim \rho_n, \tilde{a}\sim\mathcal{U}(\mathcal{A})} \left[\left(\int_{\mathcal{S}\times \mathcal{A}} (\widehat{P}_n(s|\tilde{s}, \tilde{a}) - P(s|\tilde{s}, \tilde{a}))\pi(a|s)g(s, a) \dif s \dif a\right)^2\right]\\
    \leq & \mathbb{E}_{\tilde{s}\sim \rho_n, \tilde{a}\sim\mathcal{U}(\mathcal{A})} \left[\left\|\widehat{P}_n(\cdot|\tilde{s}, \tilde{a}) - P(\cdot|\tilde{s}, \tilde{a})\right\|_2^2  \left\|\int_{\mathcal{A}} \pi(a|\cdot) g(\cdot, a) \dif a\right\|_2^2\right]\\
    \leq & \mathbb{E}_{\tilde{s}\sim \rho_n, \tilde{a}\sim\mathcal{U}(\mathcal{A})} \left[\left\|\widehat{P}_n(\cdot|\tilde{s}, \tilde{a}) - P(\cdot|\tilde{s}, \tilde{a})\right\|_2^2  \left\|\int_{\mathcal{A}} g(\cdot, a) \dif a\right\|_2^2\right]
    \\
    \leq & B_2^2\zeta_n,
\end{align*}
where the last inequality is due to Theorem~\ref{thm:generalization}. Substitute back, we obtain the desired result.
\end{proof}
\begin{lemma}[Concentration of the bonus term] \label{lem:bonus_concentration}
Let $\lambda_n = \Theta(d \log(n|\mathcal{F}|/\delta))$, and define:
\begin{align*}
    \Sigma_{\rho_n \times \mathcal{U}, \phi} = & n \mathbb{E}_{s\sim \rho_n, a\sim \mathcal{U}(\mathcal{A})}\phi(s, a)\phi(s, a)^\top + \lambda_n I,\\
    \widehat{\Sigma}_{n, \phi} = & \sum_{i\in [n]}\phi(s_i, a_i)\phi(s_i, a_i)^\top + \lambda_n I.
\end{align*}
Then there exists absolute constant $c_1$ and $c_2$, such that
\begin{align*}
    \forall n\in \mathbb{N}^{+}, \forall \phi\in \Phi, c_1 \|\phi(s, a)\|_{\Sigma_{\rho_n \times \mathcal{U}, \phi}^{-1}} \leq \|\phi(s, a)\|_{\hat{\Sigma}_{n, \phi}^{-1}}\leq c_2\|\phi(s, a)\|_{\Sigma_{\rho_n \times \mathcal{U}, \phi}^{-1}}
\end{align*}
which holds with probability at least $1-\delta$.
\end{lemma}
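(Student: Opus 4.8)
The plan is to establish the two-sided bound by first proving that the empirical Gram matrix $\widehat{\Sigma}_{n,\phi}$ is spectrally equivalent to the population Gram matrix $\Sigma_{\rho_n\times\mathcal{U},\phi}$, and then deducing the stated norm equivalence by inverting. Concretely, I would first reduce the claim to showing that, uniformly in $\phi$ and $n$, $\tfrac{1}{2}\Sigma_{\rho_n\times\mathcal{U},\phi}\preceq\widehat{\Sigma}_{n,\phi}\preceq\tfrac{3}{2}\Sigma_{\rho_n\times\mathcal{U},\phi}$. Given this spectral sandwich, inverting reverses the order and taking square roots yields $\sqrt{2/3}\,\|\phi\|_{\Sigma^{-1}_{\rho_n\times\mathcal{U},\phi}}\le\|\phi\|_{\widehat{\Sigma}^{-1}_{n,\phi}}\le\sqrt{2}\,\|\phi\|_{\Sigma^{-1}_{\rho_n\times\mathcal{U},\phi}}$, which is exactly the Lemma with $c_1=\sqrt{2/3}$ and $c_2=\sqrt{2}$.

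The heart of the argument is the spectral sandwich for a fixed $\phi$. By construction of $\rho_n$, $n\,\mathbb{E}_{\rho_n\times\mathcal{U}}[\phi\phi^\top]=\sum_{i=1}^n M_i$, where $M_i=\mathbb{E}[\phi(s_i,a_i)\phi(s_i,a_i)^\top\mid\mathcal{F}_{i-1}]$ is the conditional second moment of the $i$-th sample; hence, writing $X_i=\phi(s_i,a_i)$, the sequence $X_iX_i^\top-M_i$ is a matrix martingale difference adapted to the natural filtration $\mathcal{F}_i$ generated by the first $i$ episodes. I would apply the matrix Freedman inequality to this sequence. The normalization $\|\phi\|_2\le 1$ from Assumption~\ref{assump:low_rank_mdp} gives $\|X_iX_i^\top-M_i\|\le 1$, and the predictable quadratic variation is dominated by $\sum_i M_i\preceq\Sigma_{\rho_n\times\mathcal{U},\phi}$. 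To convert the additive Freedman bound into a relative one, I would whiten by $\Sigma_{\rho_n\times\mathcal{U},\phi}^{-1/2}$; because the regularizer forces $\Sigma_{\rho_n\times\mathcal{U},\phi}\succeq\lambda_n I$, the resulting relative fluctuation scales like $\sqrt{d\log(\cdot)/\lambda_n}+d\log(\cdot)/\lambda_n$, which the choice $\lambda_n=\Theta(d\log(n|\mathcal{F}|/\delta))$ drives below a constant such as $1/2$.

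Finally I would upgrade the fixed-$\phi$ statement to the uniform one. Since $P\in\mathcal{F}$ with $|\mathcal{F}|<\infty$, the feature class $\Phi$ is finite with $|\Phi|\le|\mathcal{F}|$, so a union bound over $\Phi$ costs a $\log|\mathcal{F}|$ term; an anytime union bound over $n\in\mathbb{N}^+$ (allocating failure probability $\propto\delta/n^2$) costs a $\log n$ term. Both are precisely the logarithmic factors carried by $\lambda_n$, so the per-$(\phi,n)$ concentration holds simultaneously with probability at least $1-\delta$.

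The genuinely delicate point is the \emph{adaptivity} of the data: $\pi_{i-1}$, and hence the law of $(s_i,a_i)$, depends on all previously collected samples, so $\{X_iX_i^\top\}$ is not i.i.d.\ and one must work with the martingale-difference formulation and a matrix martingale concentration inequality rather than a fixed-design matrix Chernoff bound. Coupled with the need for the bound to hold \emph{anytime} (for every $n$ at once) and \emph{relatively} (as a spectral ratio, not an additive error), this is what forces the $\log n$ growth and the dimension-dependent scale in $\lambda_n$; verifying that a single choice of $\lambda_n$ simultaneously controls the self-normalized fluctuation and both union bounds is the step demanding the most care. An alternative would be to invoke the analogous covariance-concentration lemma of \citet{uehara2021representation} or \citet{agarwal2020flambe} directly, but I prefer the self-contained matrix-Freedman route above.
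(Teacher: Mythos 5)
Your proposal is correct in spirit, but it is worth knowing that the paper does not actually prove this lemma at all: its entire proof is the citation ``See \citep[Lemma 11]{uehara2021representation}.'' What you have written is essentially a self-contained reconstruction of that cited lemma's standard proof, so in that sense you are not diverging from the paper so much as filling in the argument it delegates. Your reduction to the spectral sandwich $\tfrac12\Sigma \preceq \widehat\Sigma \preceq \tfrac32\Sigma$, the inversion to get $c_1=\sqrt{2/3}$, $c_2=\sqrt{2}$, the identification of $n\,\mathbb{E}_{\rho_n\times\mathcal{U}}[\phi\phi^\top]$ with the sum of conditional second moments $\sum_i M_i$, and the union bounds over $\Phi$ (with $|\Phi|\le|\mathcal{F}|$) and over $n$ are all right, and you correctly identify adaptivity of the data as the reason a fixed-design matrix Chernoff bound is unavailable.

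The one step that does not go through as literally written is the whitening. The matrix $\Sigma_{\rho_n\times\mathcal{U},\phi}$ is built from $\rho_n=\tfrac1n\sum_{i\le n} d_{P^\star}^{\pi_i}$, i.e.\ from \emph{all} policies up to round $n$, so it is $\mathcal{F}_{n}$-measurable but not $\mathcal{F}_{i-1}$-measurable for $i<n$. Consequently $\Sigma^{-1/2}(X_iX_i^\top-M_i)\Sigma^{-1/2}$ is not a martingale difference sequence, and matrix Freedman cannot be applied to the conjugated sequence. The standard repair — the one used in \citet{uehara2021representation} and in the covariance-concentration lemmas it builds on — is to keep the un-whitened martingale and obtain the \emph{relative} bound via self-normalization: for each $u$ in a $\tfrac14$-cover of the sphere, scalar Freedman gives $\bigl|\sum_i\bigl[(u^\top X_i)^2-\mathbb{E}_{i-1}(u^\top X_i)^2\bigr]\bigr|\le\sqrt{2 S_n\log(\cdot)}+\log(\cdot)$ with $S_n=\sum_i\mathbb{E}_{i-1}(u^\top X_i)^2$, and AM--GM turns $\sqrt{2S_n\log(\cdot)}$ into $\tfrac12 S_n+\log(\cdot)$; the leftover additive $O(\log(\cdot))$ term, multiplied by the $d\log(\cdot)$ cost of the cover and the union bounds, is exactly what $\lambda_n=\Theta(d\log(n|\mathcal{F}|/\delta))$ absorbs. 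With that substitution (or an equivalent self-normalized matrix martingale bound) your argument is complete; as written, the whitening step is the gap.
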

\begin{proof}
See \citep[][Lemma 11]{uehara2021representation}.
\end{proof}
With these lemmas, we are now ready to show the optimism.
\begin{lemma}[Optimism]\label{lem:optimism}
Let 
\begin{align*}
    \alpha_n = & \Theta\left(\frac{d\sqrt{|\mathcal{A}|n \zeta_n}}{1-\gamma}\right),\\
    \lambda_n = & \Theta\left(d \log(n |\mathcal{F}|/\delta)\right),
\end{align*}
then for any policy $\pi$ we have
\begin{align*}
    V_{\widehat{P}_n, r+b_n}^\pi \geq V_{P, r}^\pi -  \sqrt{\frac{2|\mathcal{A}| d\left(1 + \frac{\gamma^2 d}{(1-\gamma)^2}\right) \zeta_n}{(1-\gamma)}}.
\end{align*}
\end{lemma}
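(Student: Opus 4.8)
The plan is to reduce the optimism claim to a model-misfit estimate that is dominated by the exploration bonus, via a telescoping (value-difference) identity, and then invoke the one-step-back inequality (Lemma~\ref{lem:backup_learned}) together with the bonus concentration (Lemma~\ref{lem:bonus_concentration}). First I would apply the standard value-difference lemma to the two MDPs $(\widehat{P}_n, r+b_n)$ and $(P,r)$ sharing the policy $\pi$, which yields
\begin{align*}
V_{\widehat{P}_n, r+b_n}^\pi - V_{P,r}^\pi = \frac{1}{1-\gamma}\mathbb{E}_{(s,a)\sim d_{\widehat{P}_n}^\pi}\left[b_n(s,a) + \gamma\int(\widehat{P}_n - P)(s'|s,a)\,V_{P,r}^\pi(s')\dif s'\right].
\end{align*}
Writing $g(s,a):=\int(\widehat{P}_n-P)(s'|s,a)V_{P,r}^\pi(s')\dif s'$ and using $b_n\ge 0$, it suffices to lower bound the right-hand side by $\tfrac{1}{1-\gamma}\mathbb{E}_{d_{\widehat{P}_n}^\pi}[b_n] - \tfrac{\gamma}{1-\gamma}\big|\mathbb{E}_{d_{\widehat{P}_n}^\pi}[g]\big|$, so the whole argument reduces to controlling $\big|\mathbb{E}_{d_{\widehat{P}_n}^\pi}[g]\big|$.

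Next I would apply Lemma~\ref{lem:backup_learned} to $g$. By Cauchy--Schwarz, $|g(s,a)|\le \|\widehat{P}_n(\cdot|s,a)-P(\cdot|s,a)\|_2\,\|V_{P,r}^\pi\|_2$, so both $\mathbb{E}_{\rho_n\times\mathcal{U}}[g^2]$ and $\mathbb{E}_{\rho_n'\times\mathcal{U}}[g^2]$ are at most $\|V_{P,r}^\pi\|_2^2\,\zeta_n$ by the assumed generalization bounds (Theorem~\ref{thm:generalization}), while $B_\infty=\|g\|_\infty\lesssim(1-\gamma)^{-1}$ using $\|V_{P,r}^\pi\|_\infty\le(1-\gamma)^{-1}$, and $B_2=\|\int_\mathcal{A}g(\cdot,a)\dif a\|_2$ is controlled by the normalization assumptions. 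This produces $\big|\mathbb{E}_{d_{\widehat{P}_n}^\pi}[g]\big|\le T_1 + T_2$, where $T_1 = \sqrt{(1-\gamma)|\mathcal{A}|\|V_{P,r}^\pi\|_2^2\zeta_n}$ is a pure misfit term and $T_2 = \gamma\sqrt{n|\mathcal{A}|\|V_{P,r}^\pi\|_2^2\zeta_n + B_2^2 n\zeta_n + \lambda_n B_\infty^2 d}\cdot\mathbb{E}_{d_{\widehat{P}_n}^\pi}\big[\|\widehat{\phi}_n\|_{\Sigma_{\rho_n\times\mathcal{U},\widehat{\phi}_n}^{-1}}\big]$ carries the elliptical potential.

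The crux is matching the bonus against $T_2$. By Lemma~\ref{lem:bonus_concentration}, $\mathbb{E}_{d_{\widehat{P}_n}^\pi}[b_n] = \alpha_n\mathbb{E}[\|\widehat{\phi}_n\|_{\widehat{\Sigma}_n^{-1}}] \ge c_1\alpha_n\mathbb{E}[\|\widehat{\phi}_n\|_{\Sigma_{\rho_n\times\mathcal{U},\widehat{\phi}_n}^{-1}}]$, which requires exactly $\lambda_n=\Theta(d\log(n|\mathcal{F}|/\delta))$. Since $\|V_{P,r}^\pi\|_2\lesssim d/(1-\gamma)$ (Lemma~\ref{lem:V_norm}), the dominant factor inside $T_2$ scales as $\gamma\cdot\tfrac{d}{1-\gamma}\sqrt{n|\mathcal{A}|\zeta_n}$ (the $B_2^2 n\zeta_n$ and $\lambda_n B_\infty^2 d$ contributions being lower order in $n$), which is precisely the scaling of $\alpha_n=\Theta(d\sqrt{|\mathcal{A}|n\zeta_n}/(1-\gamma))$; choosing the hidden constant in $\alpha_n$ large enough therefore gives $\gamma T_2 \le \mathbb{E}_{d_{\widehat{P}_n}^\pi}[b_n]$. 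The surviving slack is $\tfrac{\gamma}{1-\gamma}T_1 = \gamma\sqrt{|\mathcal{A}|\|V_{P,r}^\pi\|_2^2\zeta_n/(1-\gamma)}$, and bounding $\|V_{P,r}^\pi\|_2^2\le 2d(1+\gamma^2 d/(1-\gamma)^2)$ via Lemma~\ref{lem:V_norm} (with $\gamma^2\le 1$) recovers exactly $\sqrt{2|\mathcal{A}|d(1+\gamma^2 d/(1-\gamma)^2)\zeta_n/(1-\gamma)}$.

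I expect the main obstacle to be the bookkeeping that shows the elliptical-potential term $T_2$ is genuinely absorbed by the bonus for the stated $\alpha_n$ and $\lambda_n$: one must verify that the auxiliary quantities $B_2$ and the $\lambda_n B_\infty^2 d$ term are lower order in $n$, and that the absolute constants $c_1,c_2$ from Lemma~\ref{lem:bonus_concentration} survive the comparison. The telescoping identity and the Cauchy--Schwarz reductions are routine; the delicate part is tracking the $d$, $|\mathcal{A}|$, and $(1-\gamma)$ factors so that the residual $T_1$ collapses to precisely the claimed error rather than a loose multiple of it.
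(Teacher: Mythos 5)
Your proposal is correct and follows essentially the same route as the paper's proof: the simulation lemma to isolate the bonus plus the model-misfit term $g$, the bounds $\|g\|_\infty\lesssim(1-\gamma)^{-1}$, $\|\int_{\mathcal{A}}g(\cdot,a)\dif a\|_2\lesssim d/(1-\gamma)$ (via Assumption~\ref{assump:repr_norm}) and $\mathbb{E}[g^2]\le\|V_{P,r}^\pi\|_2^2\zeta_n$, then Lemma~\ref{lem:backup_learned} with the elliptical-potential piece absorbed by $b_n$ through Lemma~\ref{lem:bonus_concentration} and the stated $\alpha_n,\lambda_n$, leaving exactly the claimed residual. The only place you are terser than the paper is the bound on $B_2$, which the paper derives explicitly by writing $g(s,a)\le\frac{\sqrt d}{1-\gamma}(\|\phi^*(s,a)\|+\|\widehat\phi_n(s,a)\|)$ and invoking the representation-normalization assumption; otherwise the arguments coincide.
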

\begin{proof}
With the simulation lemma (\ie, Lemma \ref{lem:simulation}), we have that
\begin{align*}
    & V_{\widehat{P}_n, r+b_n}^\pi - V_{P, r}^\pi\\
    = & \frac{1}{1-\gamma} \mathbb{E}_{(s, a)\sim d_{\widehat{P}_n}^\pi}\left[b_n(s, a) + \gamma \mathbb{E}_{\widehat{P}_n(s^\prime|s, a)}\left[V_{P, r}^{\pi}(s^\prime)\right] - \gamma\mathbb{E}_{P(s^\prime|s, a)}\left[V_{P, r}^\pi(s^\prime)\right]\right].
\end{align*}
Consider the function $g$ on $\mathcal{S}\times \mathcal{A}$ defined as follows:
\begin{align*}
    g(s, a) := & \left|\mathbb{E}_{P(s^\prime|s, a)}\left[V_{P, r}^{\pi}(s^\prime)\right] - \mathbb{E}_{\widehat{P}_n(s^\prime|s, a)}\left[V_{P, r}^\pi(s^\prime)\right]\right|.
\end{align*}
With H\"older's inequality, we have that $\|g\|_{\infty} \leq \frac{2}{1-\gamma}$. Furthermore, as
\begin{align*}
    g(s, a)\leq & \phi^*(s, a)^\top \int_{\mathcal{S}} \mu^*(s^\prime) V_{P, r}^\pi(s^\prime) \dif s^\prime + \widehat{\phi}_n(s, a)^\top \int_{\mathcal{S}} \widehat{\mu}_n^\top(s^\prime) V_{P, r, h + 1}^\pi (s^\prime) \dif s^\prime\\
    \leq & \left\|\phi^*(s, a)\right\|\left\|\int_{\mathcal{S}} \mu^*(s^\prime) V_{P, r}^\pi(s^\prime) \dif s^\prime\right\| + \left\|\widehat{\phi}_n(s, a)\right\|\left\|\int_{\mathcal{S}} \widehat{\mu}_n^\top(s^\prime) V_{P, r}^\pi(s^\prime) \dif s^\prime\right\|\\
    \leq & \frac{\sqrt{d}}{1-\gamma} \left(\left\|\phi^*(s, a)\right\| + \left\|\widehat{\phi}_n(s, a)\right\|\right)
\end{align*} 
where the first inequality is due to the triangle inequality; the second inequality is from the Cauchy-Schwartz inequality; and the last inequality comes from the fact that $\|V_{P, r}^\pi\|_{\infty} \leq \frac{1}{1-\gamma}$. Thus, we have
\begin{align*}
    \left\|\int_{\mathcal{A}} g(\cdot, a)\dif a\right\|_2^2
    = & \int_{\mathcal{S}} \left(\int_{\mathcal{A}} g(s, a) \dif a\right)^2 \dif s \\
    \leq & \frac{d}{(1-\gamma)^2} \int_{\mathcal{S}} \left(\int_{\mathcal{A}} \|\phi^*(s, a)\| \dif a + \int_{\mathcal{A}} \|\widehat{\phi}_n(s, a)\| \dif a\right)^2 \dif s \quad \\
    \leq & \frac{4d^2}{(1-\gamma)^2},
\end{align*}
where the last inequality is due to Assumption~\ref{assump:repr_norm} and the fact that $(a + b)^2 \leq 2(a^2 + b^2)$. Invoking Lemma~\ref{lem:backup_learned}, we have that
\begin{align*}
    & \mathbb{E}_{(s, a) \sim d_{\widehat{P}_n}^\pi}\{g(s, a)\} \leq \sqrt{(1-\gamma)|\mathcal{A}|\mathbb{E}_{s\sim \rho_n, a\sim\mathcal{U}(\mathcal{A})}\{g^2(s, a)\}}\\
    & + \gamma \sqrt{n|\mathcal{A}|\mathbb{E}_{s\sim \rho_n^\prime, a\sim \mathcal{U}(\mathcal{A})}\left\{g^2(s, a)\right\}+ \frac{4d^2}{(1-\gamma)^2} n\zeta_n+ \frac{4\lambda_n  d}{(1-\gamma)^2}} \cdot \mathbb{E}_{(\tilde{s}, \tilde{a})\sim d_{\widehat{P}_n}^\pi}\left[\left\|\widehat{\phi}_n(\tilde{s}, \tilde{a})\right\|_{\Sigma_{\rho_n\times \mathcal{U}, \widehat{\phi}_n}^{-1}}\right].
\end{align*}
Note that
\begin{align*}
    & \mathbb{E}_{s\sim \rho_n, a\sim \mathcal{U}(\mathcal{A})}\{g^2(s, a)\}\\
    = & \mathbb{E}_{s\sim \rho_n, a\sim \mathcal{U}(\mathcal{A})}\left(\int_{\mathcal{S}}\left(P(s^\prime|s, a) - \widehat{P}_n(s^\prime|s, a)\right) V_{P, r}^\pi(s^\prime) \dif s^\prime\right)^2\\
    \leq & \mathbb{E}_{s\sim \rho_n, a\sim \mathcal{U}(\mathcal{A})} \left\|P(\cdot|s, a) - \widehat{P}_n(\cdot|s, a)\right\|_2^2 \left\|V_{P, r}^\pi\right\|_2^2\\
    \leq & 2d\left(1 + \frac{d\gamma^2}{(1-\gamma)^2}\right)\zeta_n,
\end{align*}
where the first inequality is due to the H\"older's inequality and the last inequality is due to Lemma \ref{lem:V_norm}. With the selected hyperparameters and Lemma~\ref{lem:bonus_concentration}, we conclude the proof.
\end{proof}
To further provide the regret bound, we need the following analog of Lemma~\ref{lem:backup_learned}. Note that, here we don't require $\rho_n^\prime$.
\begin{lemma}[One-step back inequality for the true model] \label{lem:backup_true}
Assume $g:\mathcal{S}\times \mathcal{A} \to \mathbb{R}$ satisfies that $\|g\|_{\infty} \leq B_{\infty}$, then we have that
\begin{align*}
    & \left|\mathbb{E}_{(s, a) \sim d_{P}^{\pi_n}}\{g(s, a)\}\right| \leq 
     \sqrt{(1-\gamma)|\mathcal{A}|\mathbb{E}_{s\sim \rho_n, a\sim\mathcal{U}(\mathcal{A})}\{g^2(s, a)\}}\\
     & + \sqrt{n\gamma |\mathcal{A}|\mathbb{E}_{s\sim \rho_n, a\sim \mathcal{U}(\mathcal{A})}\left\{g^2(s, a)\right\}+ \lambda_n \gamma^2 B_{\infty}^2 d} \cdot \mathbb{E}_{(\tilde{s}, \tilde{a})\sim d_{P}^{\pi_n}}\left[\left\|\phi^*(\tilde{s}, \tilde{a})\right\|_{\Sigma_{\rho_n,\phi^*}^{-1}}\right].
\end{align*}
\end{lemma}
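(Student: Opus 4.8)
The plan is to mirror the proof of Lemma~\ref{lem:backup_learned} step for step, but replace the learned pair $(\widehat{\phi}_n,\widehat{\mu}_n)$ and kernel $\widehat{P}_n$ by the true pair $(\phi^*,\mu^*)$ and kernel $P$. The payoff of working with the true model is that the approximation-error term (the $\zeta_n$ piece) and the auxiliary distribution $\rho_n^\prime$ both disappear: pushing $\rho_n$ forward through $P$ returns $\rho_n$ itself through the Bellman-flow identity, so there is nothing to absorb as error. First I would apply the one-step recursion for the state–action visitation distribution to write $\mathbb{E}_{(s,a)\sim d_P^{\pi_n}}\{g(s,a)\} = (1-\gamma)\mathbb{E}_{s\sim\rho,a\sim\pi_n}\{g(s,a)\} + \gamma\,\mathbb{E}_{(\tilde{s},\tilde{a})\sim d_P^{\pi_n},\,s\sim P(\cdot|\tilde{s},\tilde{a}),\,a\sim\pi_n(\cdot|s)}\{g(s,a)\}$, and treat the two summands separately.

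The initial-distribution term is handled exactly as in Lemma~\ref{lem:backup_learned}: using $d_P^{\pi_i}(s)\geq(1-\gamma)\rho(s)$ for every $i$ (hence $\rho(s)\leq\rho_n(s)/(1-\gamma)$), Jensen's inequality, and the importance-weight change from $\pi_n$ to $\mathcal{U}(\mathcal{A})$ (bounding $\pi_n(a|s)\leq 1$ and paying a factor $|\mathcal{A}|$), this term is at most $\sqrt{(1-\gamma)|\mathcal{A}|\,\mathbb{E}_{s\sim\rho_n,a\sim\mathcal{U}}\{g^2\}}$, which is the first summand in the claim.

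For the transition term I would invoke the low-rank factorization $P(s|\tilde{s},\tilde{a})=\phi^*(\tilde{s},\tilde{a})^\top\mu^*(s)$ to pull $\phi^*$ out, then apply the generalized Cauchy--Schwarz inequality in the $\Sigma_{\rho_n,\phi^*}$ geometry, producing the factor $\mathbb{E}_{(\tilde{s},\tilde{a})\sim d_P^{\pi_n}}\|\phi^*(\tilde{s},\tilde{a})\|_{\Sigma_{\rho_n,\phi^*}^{-1}}$ times the dual norm $\|\int\mu^*(s)\pi_n(a|s)g(s,a)\,ds\,da\|_{\Sigma_{\rho_n,\phi^*}}$. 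Expanding the dual norm splits it into an $n$-weighted data-fit piece $n\,\mathbb{E}_{(\tilde{s},\tilde{a})\sim\rho_n}[(\int P(s|\tilde{s},\tilde{a})\pi_n(a|s)g\,ds\,da)^2]$ and a regularization piece $\lambda_n\|\int\mu^*\pi_n g\,ds\,da\|^2$. The regularizer is bounded by $\lambda_n B_\infty^2 d$ via the normalization condition~\eqref{eq:normalization_condition} applied to $h(s):=\int\pi_n(a|s)g(s,a)\,da$, which satisfies $\|h\|_\infty\leq B_\infty$.

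The crux is the data-fit piece, and this is where the true-model structure does the work that $\rho_n^\prime$ did in Lemma~\ref{lem:backup_learned}. By Jensen's inequality it is at most $n\,\mathbb{E}_{(\tilde{s},\tilde{a})\sim\rho_n,\,s\sim P(\cdot|\tilde{s},\tilde{a}),\,a\sim\pi_n(\cdot|s)}\{g^2\}$, and the Bellman-flow identity $\rho_n(s)=(1-\gamma)\rho(s)+\gamma\,\mathbb{E}_{(\tilde{s},\tilde{a})\sim\rho_n}P(s|\tilde{s},\tilde{a})$ gives $\mathbb{E}_{(\tilde{s},\tilde{a})\sim\rho_n}P(s|\tilde{s},\tilde{a})\leq\rho_n(s)/\gamma$, so the pushforward collapses back onto $\rho_n$ and the piece is at most $(n|\mathcal{A}|/\gamma)\,\mathbb{E}_{s\sim\rho_n,a\sim\mathcal{U}}\{g^2\}$ after the $\pi_n\to\mathcal{U}$ change. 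Finally I would pull the outer $\gamma$ from the recursion into the square root, $\gamma\sqrt{(n|\mathcal{A}|/\gamma)\,\mathbb{E}\{g^2\}+\lambda_n B_\infty^2 d}=\sqrt{n\gamma|\mathcal{A}|\,\mathbb{E}\{g^2\}+\gamma^2\lambda_n B_\infty^2 d}$, reproducing the second summand exactly. The one bookkeeping subtlety to get right is the action distribution: the relevant covariance here is $\Sigma_{\rho_n,\phi^*}$ with on-policy actions rather than $\Sigma_{\rho_n\times\mathcal{U},\widehat{\phi}_n}$, precisely because the Bellman flow for $\rho_n$ is written with $P$ and the policies' own actions, which is exactly what removes the need for $\rho_n^\prime$ and the $\zeta_n$ error.
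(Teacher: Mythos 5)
Your proposal is correct and follows essentially the same route as the paper's proof: the same $(1-\gamma)/\gamma$ visitation decomposition, the same importance-weight bound for the initial-distribution term, the generalized Cauchy--Schwarz step in the $\Sigma_{\rho_n,\phi^*}$ geometry, the $\lambda_n B_\infty^2 d$ bound on the regularizer via the normalization condition, and the Jensen-plus-Bellman-flow argument that collapses the pushforward of $\rho_n$ back onto $\rho_n$ at a cost of $1/\gamma$ and $|\mathcal{A}|$. Your observation that the true-model structure is exactly what removes the $\zeta_n$ error term and the auxiliary distribution $\rho_n^\prime$ is the right way to see why this lemma is cleaner than Lemma~\ref{lem:backup_learned}.
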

\begin{proof}
Note that
\begin{align*}
    & \mathbb{E}_{(s, a) \sim d_{P}^{\pi_n}}\{g(s, a)\}\\
    = & \gamma \mathbb{E}_{(\tilde{s}, \tilde{a})\sim d_{P}^{\pi_n}, s\sim P(\cdot|\tilde{s}, \tilde{a}), a\sim \pi(\cdot|s)} \{g(s, a)\} +(1-\gamma) \mathbb{E}_{s\sim \rho, a\sim \pi(\cdot|s)} \{g(s, a)\}.
\end{align*}
For the second term, note that $d_{P}^\pi(s) \geq (1-\gamma) \rho(s)$, hence 
\begin{align*}
    & (1-\gamma)\mathbb{E}_{s\sim \rho, a\sim \pi(\cdot|s)} \{g(s, a)\}\\
    \leq & (1-\gamma)\sqrt{\mathbb{E}_{s\sim \rho, a\sim \pi(\cdot|s)}\{g^2(s, a)\}}\\
    = & (1-\gamma)\sqrt{\mathbb{E}_{s\sim \rho_n, a\sim \mathcal{U}(\mathcal{A})}\left\{\frac{\rho(s)\pi(a|s)|\mathcal{A}|}{\rho_n(s)} g^2(s, a)\right\}}\\
    \leq & \sqrt{(1-\gamma)|\mathcal{A}|\mathbb{E}_{s\sim \rho_n, a\sim \mathcal{U}(\mathcal{A})}\{g^2(s, a)\}}.
\end{align*}
For the first term, we have that
\begin{align*}
    & \mathbb{E}_{(\tilde{s}, \tilde{a})\sim d_{P}^{\pi_n}, s\sim P(\cdot|\tilde{s}, \tilde{a}), a\sim \pi(\cdot|s)}\{g(s, a)\}\\
    = & \mathbb{E}_{(\tilde{s}, \tilde{a})\sim d_{P}^{\pi_n}} \phi^*(\tilde{s}, \tilde{a})^\top \left[\int_{\mathcal{S}\times \mathcal{A}} \mu^*(s) \pi(a|s) g(s, a) \dif s \dif a\right]\\
    \leq & \mathbb{E}_{(\tilde{s}, \tilde{a})\sim d_{P}^{\pi_n}} \left\|\phi^*(\tilde{s}, \tilde{a})\right\|_{\Sigma_{\rho_n, \phi^*}^{-1}} \left\|\int_{\mathcal{S}\times \mathcal{A}} \mu^*(s) \pi(a|s) g(s, a) \dif s \dif a\right\|_{\Sigma_{\rho_n, \phi^*}},
\end{align*}
where for the inequality we use the generalized Cauchy-Schwartz inequality. Note
\begin{align*}
    & \left\|\int_{\mathcal{S}\times \mathcal{A}} \mu^*(s) \pi(a|s) g(s, a) \dif s \dif a\right\|_{\Sigma_{\rho_n, \phi^*}}^2\\
    = & n \mathbb{E}_{(\tilde{s}, \tilde{a})\sim \rho_n} \left[\left(\int_{\mathcal{S}\times \mathcal{A}} P(s|\tilde{s}, \tilde{a})\pi(a|s) g(s, a)\dif s \dif a\right)^2\right] + \lambda_n \left\|\int_{\mathcal{S} \times \mathcal{A}}\mu^*(s) \pi(a|s) g(s, a) \dif s \dif a\right\|^2\\
    \leq & n \mathbb{E}_{(\tilde{s}, \tilde{a})\sim \rho_n, s\sim P(\cdot|s, a), a\sim \pi(\cdot|s)}\{g^2(s, a)\} + \lambda_n B_{\infty}^2 d,
\end{align*}
where in the last inequality we use Jensen's inequality. Note that
\begin{align*}
    & \mathbb{E}_{(\tilde{s}, \tilde{a})\sim \rho_n, s\sim P(\cdot|s, a), a\sim \pi(\cdot|s)}\{g^2(s, a)\}\\
    \leq & \frac{1}{\gamma} \mathbb{E}_{(\tilde{s}, \tilde{a}) \sim \rho_n, s\sim P^*(\cdot|s, a), a\sim \pi(\cdot|s)} \{g^2(s, a)\}\\
    \leq & \frac{|\mathcal{A}|}{\gamma}\mathbb{E}_{s\sim \rho_n, a\sim \mathcal{U}(\mathcal{A})} \{g^2(s, a)\}
\end{align*}
Substituting this back, we obtain the desired result.
\end{proof}
We also need the following properties on the bonus and the value function when we plan on the learned model with the bonus.
\begin{lemma}[Norm of the Bonus] \label{lem:norm_bonus}
We have that
\begin{align*}
   \|b_n(s, a)\|_{\infty} \leq \frac{\alpha_n}{\sqrt{\lambda_n}} \lesssim \frac{\sqrt{d|\mathcal{A}|}}{1-\gamma}, \quad \left\|\int_{\mathcal{A}}b_n(\cdot, a) \dif a\right\| \leq \frac{ \alpha_n \sqrt{d}}{\sqrt{\lambda_n}}\lesssim \frac{d\sqrt{|\mathcal{A}|}}{1-\gamma}.
\end{align*}
\end{lemma}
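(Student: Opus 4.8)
The plan is to derive both inequalities from a single pointwise estimate on the elliptical-potential bonus, namely that $b_n(s,a) = \alpha_n\|\widehat\phi_n(s,a)\|_{\widehat\Sigma_n^{-1}} \le \tfrac{\alpha_n}{\sqrt{\lambda_n}}\|\widehat\phi_n(s,a)\|_2$, and then to feed this into the two normalization assumptions. The key observation is that the regularized empirical covariance satisfies $\widehat\Sigma_n = \sum_i \widehat\phi_n(s_i,a_i)\widehat\phi_n(s_i,a_i)^\top + \lambda_n I \succeq \lambda_n I$, since the sum of outer products is positive semidefinite. Hence $\widehat\Sigma_n^{-1} \preceq \tfrac{1}{\lambda_n} I$, which gives $\widehat\phi_n(s,a)^\top \widehat\Sigma_n^{-1}\widehat\phi_n(s,a) \le \tfrac{1}{\lambda_n}\|\widehat\phi_n(s,a)\|_2^2$ for every $(s,a)$. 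Everything else is bookkeeping built on top of this.

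For the first ($L_\infty$) bound, I would combine the above with the representation normalization $\|\widehat\phi_n(s,a)\|_2 \le 1$ from the low-rank MDP assumption, yielding $b_n(s,a) \le \tfrac{\alpha_n}{\sqrt{\lambda_n}}$ uniformly in $(s,a)$, which is exactly the stated inequality. To obtain the $\lesssim \tfrac{\sqrt{d|\mathcal{A}|}}{1-\gamma}$ rate, I would substitute the prescribed hyperparameters $\alpha_n = \Theta\!\big(\tfrac{d\sqrt{|\mathcal{A}|\, n\zeta_n}}{1-\gamma}\big)$ and $\lambda_n = \Theta(d\log(n|\mathcal{F}|/\delta))$, and use that the generalization bound gives $\zeta_n = \Theta(\tfrac{\log(|\mathcal{F}|/\delta)}{n})$, so that $n\zeta_n = \widetilde\Theta(1)$. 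Then $\tfrac{\alpha_n}{\sqrt{\lambda_n}} = \widetilde\Theta\!\big(\tfrac{d\sqrt{|\mathcal{A}|}}{(1-\gamma)\sqrt{d}}\big) = \widetilde\Theta\!\big(\tfrac{\sqrt{d|\mathcal{A}|}}{1-\gamma}\big)$, where the $\lesssim$ absorbs the logarithmic factors.

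For the second (integrated $L_2$) bound, I would start from the same pointwise estimate to get $\int_{\mathcal{A}} b_n(s,a)\,\mathrm{d}a \le \tfrac{\alpha_n}{\sqrt{\lambda_n}}\int_{\mathcal{A}}\|\widehat\phi_n(s,a)\|_2\,\mathrm{d}a$, square, integrate over $s$, and pull out the constant factor to obtain $\big\|\int_{\mathcal{A}} b_n(\cdot,a)\,\mathrm{d}a\big\|_2^2 \le \tfrac{\alpha_n^2}{\lambda_n}\int_{\mathcal{S}}\big(\int_{\mathcal{A}}\|\widehat\phi_n(s,a)\|_2\,\mathrm{d}a\big)^2\,\mathrm{d}s$. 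Applying Assumption~\ref{assump:repr_norm}, which bounds precisely this last integral by $d$, gives $\big\|\int_{\mathcal{A}} b_n(\cdot,a)\,\mathrm{d}a\big\|_2 \le \tfrac{\alpha_n\sqrt{d}}{\sqrt{\lambda_n}}$, the stated inequality; multiplying the first-part rate by $\sqrt{d}$ then yields $\lesssim \tfrac{d\sqrt{|\mathcal{A}|}}{1-\gamma}$.

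I do not expect a genuine obstacle here: the only nontrivial ingredient is recognizing which assumption supplies each normalization ($\|\widehat\phi_n\|_2\le 1$ for the sup bound, Assumption~\ref{assump:repr_norm} for the integrated bound), and the rest is the semidefinite lower bound $\widehat\Sigma_n\succeq\lambda_n I$ together with Cauchy--Schwarz. The mildly delicate point is the final simplification to the $\widetilde\Theta$ rate, where one must confirm that $n\zeta_n$ is logarithmic so that the $d$ inside $\alpha_n$ and the $\sqrt{d}$ from $\sqrt{\lambda_n}$ combine to give $\sqrt{d}$ rather than $d$; this is the step most worth stating explicitly.
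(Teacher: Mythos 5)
Your proposal is correct and follows essentially the same route as the paper's proof: the pointwise bound via $\widehat\Sigma_n \succeq \lambda_n I$ combined with $\|\widehat\phi_n(s,a)\|_2 \le 1$ for the sup-norm part, and Assumption~\ref{assump:repr_norm} after pulling out $\alpha_n^2/\lambda_n$ for the integrated part, finishing with the substitution $\alpha_n/\sqrt{\lambda_n} = \Theta\bigl(\sqrt{d|\mathcal{A}|}/(1-\gamma)\bigr)$. Your explicit check that $n\zeta_n$ is logarithmic (so the $d$ in $\alpha_n$ and the $\sqrt{d}$ in $\sqrt{\lambda_n}$ combine to $\sqrt{d}$) is a useful elaboration of a step the paper states without justification.
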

\begin{proof}
Note that, $\widehat{\Sigma}_{n, \widehat{\phi}_n} \gtrsim \lambda_n I$, and as a result, we have $\left\|\widehat{\Sigma}_{n, \widehat{\phi}_n}^{-1}\right\|_{\mathrm{op}}\leq \frac{1}{\lambda_n}$. Recall $b_n(s, a) = \alpha_n  \left\|\widehat{\phi}_n(s, a)\right\|_{\widehat{\Sigma}_{n, \widehat{\phi}_n}^{-1}}$, we know
\begin{align*}
    b_n^2(s, a) = \alpha_n^2 \widehat{\phi}_n(s, a) \widehat{\Sigma}_{n, \widehat{\phi}_n}^{-1} \widehat{\phi}_n(s, a) \leq \frac{\alpha_n^2 \|\widehat{\phi}_n(s, a)\|_2^2}{\lambda_n} \leq \frac{\alpha_n^2}{\lambda_n},
\end{align*}
as well as
\begin{align*}
    & \left\|\int_{\mathcal{A}}b_n(\cdot, a) \dif a\right\|^2\\
    = & \alpha_n^2 \left\|\int_{\mathcal{A}}\|\widehat{\phi}_n(\cdot, a)\|_{\widehat{\Sigma}_{n}^{-1}, \widehat{\phi}_n}\dif a\right\|^2\\
    = & \alpha_n^2\int_{\mathcal{S}} \left(\int_{\mathcal{A}}\|\widehat{\phi}_n(\cdot, a)\|_{\widehat{\Sigma}_{n}^{-1}, \widehat{\phi}_n} \dif a\right)^2\dif s\\
    \leq & \frac{\alpha_n^2}{\lambda_n} \int_{\mathcal{S}}\left(\int_{\mathcal{A}}\|\widehat{\phi}_n(s, a)\|\dif a\right)^2 \dif s\\
    \leq & \frac{\alpha_n^2 d}{\lambda_n},
\end{align*}
Combined with the fact that $\frac{\alpha_n}{\sqrt{\lambda_n}} = \Theta\left(\frac{\sqrt{d|\mathcal{A}|}}{1-\gamma}\right)$, we conclude the proof.
\end{proof}
\begin{lemma}[$L_2$ norm of $V_{\widehat{P}_n, r + b_n}^{\pi}$] For any policy $\pi$, we have that
\begin{align*}
    \left\|V_{\widehat{P}_n, r+b_n}^\pi\right\|\leq  \sqrt{3d + \frac{3\alpha_n^2 d}{\lambda_n} + \frac{3d^2 \gamma^2\left(1 + \frac{\alpha_n}{\sqrt{\lambda_n}}\right)^2}{(1-\gamma)^2}}
    \lesssim &\frac{d^{1.5} \sqrt{|\mathcal{A}|}}{(1-\gamma)^2}
\end{align*}
\end{lemma}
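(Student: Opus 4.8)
The plan is to mirror the argument of Lemma~\ref{lem:V_norm}, but now track the extra contribution of the exploration bonus $b_n$ and replace the true kernel and features by their learned counterparts $\widehat{P}_n = \widehat{\phi}_n^\top\widehat{\mu}_n$. Since the analysis assumes $\widehat{P}_n$ is a valid transition kernel and $\widehat{P}_n\in\mathcal{F}$, the learned features inherit the normalization conditions of Assumption~\ref{assump:low_rank_mdp} and Assumption~\ref{assump:repr_norm}, which is exactly what lets the representation-based estimates go through for $\widehat{\phi}_n,\widehat{\mu}_n$ rather than $\phi^*,\mu^*$.

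First I would write the Bellman expansion
\begin{align*}
V_{\widehat{P}_n, r+b_n}^\pi(s) = \int_{\mathcal{A}} \pi(a|s)\left[r(s,a) + b_n(s,a) + \gamma\int_{\mathcal{S}} \widehat{P}_n(s'|s,a)\, V_{\widehat{P}_n, r+b_n}^\pi(s')\dif s'\right]\dif a,
\end{align*}
square it, integrate over $s$, and apply $(x+y+z)^2\le 3(x^2+y^2+z^2)$ together with $\pi(a|s)\le 1$ (exactly as in Lemma~\ref{lem:V_norm}) to split $\|V_{\widehat{P}_n, r+b_n}^\pi\|_2^2$ into a reward part, a bonus part, and a transition part. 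The reward part $\int_{\mathcal{S}}(\int_{\mathcal{A}} r(s,a)\dif a)^2\dif s$ is controlled by Assumption~\ref{assump:reward_norm}, giving the $3d$ summand; the bonus part equals $3\|\int_{\mathcal{A}} b_n(\cdot,a)\dif a\|_2^2$, which Lemma~\ref{lem:norm_bonus} bounds by $3\alpha_n^2 d/\lambda_n$.

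The transition part is where the representation structure enters. Using $\widehat{P}_n(s'|s,a)=\widehat{\phi}_n(s,a)^\top\widehat{\mu}_n(s')$, Cauchy--Schwarz, and the normalization $\|\int_{\mathcal{S}}\widehat{\mu}_n(s')g(s')\dif s'\|_2\le\sqrt{d}\,\|g\|_{\infty}$, I would bound $|\int_{\mathcal{S}}\widehat{P}_n(s'|s,a)V_{\widehat{P}_n,r+b_n}^\pi(s')\dif s'|\le\sqrt{d}\,\|\widehat{\phi}_n(s,a)\|_2\,\|V_{\widehat{P}_n,r+b_n}^\pi\|_{\infty}$. The key auxiliary step is the self-referential control of $\|V_{\widehat{P}_n,r+b_n}^\pi\|_{\infty}$: since $r\in[0,1]$ and $\|b_n\|_{\infty}\le\alpha_n/\sqrt{\lambda_n}$ by Lemma~\ref{lem:norm_bonus}, the geometric series gives $\|V_{\widehat{P}_n,r+b_n}^\pi\|_{\infty}\le(1+\alpha_n/\sqrt{\lambda_n})/(1-\gamma)$. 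Integrating over $s$ and invoking Assumption~\ref{assump:repr_norm} (i.e.\ $\int_{\mathcal{S}}(\int_{\mathcal{A}}\|\widehat{\phi}_n(s,a)\|_2\dif a)^2\dif s\le d$) produces the third summand $3d^2\gamma^2(1+\alpha_n/\sqrt{\lambda_n})^2/(1-\gamma)^2$. Summing the three contributions and taking the square root yields the stated closed-form bound.

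Finally, for the $\lesssim$ estimate I would substitute the scaling $\alpha_n/\sqrt{\lambda_n}\lesssim\sqrt{d|\mathcal{A}|}/(1-\gamma)$ from Lemma~\ref{lem:norm_bonus}: the third (dominant) term becomes of order $d^2\gamma^2\cdot\frac{d|\mathcal{A}|}{(1-\gamma)^2}\cdot\frac{1}{(1-\gamma)^2}=d^3|\mathcal{A}|/(1-\gamma)^4$, whose square root gives $d^{1.5}\sqrt{|\mathcal{A}|}/(1-\gamma)^2$. I expect the only delicate point to be this self-referential $L_\infty$ bound on the bonus-augmented value function: one must verify that the reward-plus-bonus is uniformly bounded \emph{before} invoking the geometric series, which is precisely where validity of $\widehat{P}_n$ as a transition kernel and the $\|b_n\|_{\infty}$ estimate are both needed.
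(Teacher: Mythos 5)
Your proposal is correct and follows essentially the same route as the paper: the Bellman expansion of $V_{\widehat{P}_n,r+b_n}^\pi$, the $(x+y+z)^2\le 3(x^2+y^2+z^2)$ split, Assumption~\ref{assump:reward_norm} for the reward term, Lemma~\ref{lem:norm_bonus} for the bonus term, and the linear/low-rank structure of $\widehat{P}_n$ combined with the $L_\infty$ bound $\|V_{\widehat{P}_n,r+b_n}^\pi\|_\infty\le(1+\alpha_n/\sqrt{\lambda_n})/(1-\gamma)$ and Assumption~\ref{assump:repr_norm} for the transition term. Your explicit treatment of the self-referential $L_\infty$ step (via validity of $\widehat{P}_n$ as a kernel and the $\widehat{\mu}_n$ normalization) is exactly what the paper's bound on the $Q$-term uses implicitly, so there is no substantive difference.
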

\begin{proof}
We have 
\begin{align*}
    & \left\|V_{\widehat{P}_n, r + b_n}^{\pi}\right\|^2\\
    = & \int_{\mathcal{S}} \left(V_{\widehat{P}_n, r + b_n}^\pi(s)\right)^2 \dif s\\
    = & \int_{\mathcal{S}}\left(\int_{\mathcal{A}} \pi(a|s) \left(r(s, a) + b_n(s, a) + \gamma Q_{\widehat{P}_n, r + b_n}^\pi(s, a) \right)\dif a\right)^2 \dif s\\
    \leq & \int_{\mathcal{S}}\left(\int_{\mathcal{A}}\left(r(s, a) + b_n(s, a) + \gamma Q_{\widehat{P}_n, r + b_n}^\pi(s, a)\right) \dif a\right)^2 \dif s\\
    \leq & 3\int_{\mathcal{S}}\left(\int_{\mathcal{A}} \left[r(s, a)\right]^2\dif a\right)\dif s + 3\int_{\mathcal{S}}\left(\int_{\mathcal{A}} \left[\alpha_n\left\|\widehat{\phi}_n(s, a)\right\|_{\Sigma_{\rho_n, \widehat{\phi}_n}^{-1}} \dif a\right]^2\dif a\right)\dif s \\
    & +  3\gamma^2\int_{\mathcal{S}}\left(\int_{\mathcal{A}} \left[Q_{\widehat{P}_n, r+b_n}^\pi(s, a)\right]^2\dif a\right)^2\dif s\\
    \leq & 3d + \frac{3\alpha_n^2 d}{\lambda_n} + \frac{3d^2 \gamma^2\left(1 + \frac{\alpha_n}{\sqrt{\lambda_n}}\right)^2}{(1-\gamma)^2}\\
    \lesssim &\frac{d^3|\mathcal{A}|}{(1-\gamma)^4},
\end{align*}
which concludes the proof.
\end{proof}
Now we are ready to prove the following regret bounds and obtain the final PAC guarantee.
\begin{lemma}[Regret]\label{lem:regret}
With probability at least $1-\delta$, we have that
\begin{align*}
    \sum_{n=1}^N V_{P, r}^{\pi^*} - V_{P, r}^{\pi_n} \lesssim \sqrt{\frac{N d^4|\mathcal{A}|^2 \log(N|\mathcal{F}|/\delta)}{(1-\gamma)^6} \log \left(1 + \frac{N}{d^2 \log(N|\mathcal{F}|/\delta)}\right)}.
\end{align*}
\end{lemma}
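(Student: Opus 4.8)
The plan is to turn each per-episode suboptimality into the expected bonus along the trajectory of the \emph{executed} policy in the \emph{true} environment, and then sum these bonuses with an elliptical-potential argument. First I would invoke optimism (Lemma~\ref{lem:optimism}) to write, for every $n$,
\begin{align*}
V_{P,r}^{\pi^*} - V_{P,r}^{\pi_n} \le \left(V_{\widehat{P}_n, r+b_n}^{\pi_n} - V_{P,r}^{\pi_n}\right) + \Delta_n, \qquad \Delta_n \lesssim \sqrt{\frac{|\mathcal{A}| d^2 \zeta_n}{(1-\gamma)^3}},
\end{align*}
using $\pi_n = \arg\max_\pi V_{\widehat{P}_n, r+b_n}^\pi$ so that $V_{\widehat{P}_n,r+b_n}^{\pi_n} \ge V_{\widehat{P}_n,r+b_n}^{\pi^*}$. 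The key move is to expand the leading term with the variant of the simulation lemma (Lemma~\ref{lem:simulation}) that rolls out along the \emph{true}-model occupancy $d_P^{\pi_n}$ while carrying the learned-model value, giving
\begin{align*}
V_{\widehat{P}_n, r+b_n}^{\pi_n} - V_{P,r}^{\pi_n} = \frac{1}{1-\gamma}\mathbb{E}_{(s,a)\sim d_P^{\pi_n}}\left[b_n(s,a) + \gamma\left(\mathbb{E}_{\widehat{P}_n} - \mathbb{E}_P\right)V_{\widehat{P}_n, r+b_n}^{\pi_n}\right].
\end{align*}
Rolling out under $d_P^{\pi_n}$ (rather than $d_{\widehat{P}_n}^{\pi_n}$) is what ties the bound to the distribution from which Algorithm~\ref{alg:online_algorithm} actually collects data.

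Next I would bound both terms above by applying the true-model one-step-back inequality (Lemma~\ref{lem:backup_true}), which converts an expectation under $d_P^{\pi_n}$ into the population elliptical potential $\mathbb{E}_{(s,a)\sim d_P^{\pi_n}}\|\phi^*(s,a)\|_{\Sigma_{\rho_n,\phi^*}^{-1}}$ of the \emph{true} feature $\phi^*$. For the model-error term I set $g_n = \left(\mathbb{E}_{\widehat{P}_n} - \mathbb{E}_P\right)V_{\widehat{P}_n, r+b_n}^{\pi_n}$; H\"older's inequality together with Lemma~\ref{lem:norm_bonus} and the $L_2$ bound $\|V_{\widehat{P}_n,r+b_n}^{\pi_n}\|_2 \lesssim d^{1.5}\sqrt{|\mathcal{A}|}/(1-\gamma)^2$ give $\mathbb{E}_{\rho_n\times\mathcal{U}}[g_n^2]\lesssim \zeta_n d^3|\mathcal{A}|/(1-\gamma)^4$ and $\|g_n\|_\infty \lesssim \sqrt{d|\mathcal{A}|}/(1-\gamma)^2$ (here $\zeta_n$ is the generalization error from Theorem~\ref{thm:generalization}, with $n\zeta_n \lesssim \log(n|\mathcal{F}|/\delta)$). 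For the bonus term I take $g = b_n$ and use $\mathbb{E}_{\rho_n\times\mathcal{U}}[b_n^2] = \alpha_n^2\,\mathbb{E}_{\rho_n\times\mathcal{U}}\|\widehat{\phi}_n\|_{\widehat{\Sigma}_n^{-1}}^2 \lesssim \alpha_n^2 d/n$, which follows from Lemma~\ref{lem:bonus_concentration} and a trace bound. Plugging $\lambda_n = \Theta(d\log(n|\mathcal{F}|/\delta))$ into Lemma~\ref{lem:backup_true}, both contributions reduce to
\begin{align*}
\frac{1}{1-\gamma}\mathbb{E}_{d_P^{\pi_n}}[b_n] + \frac{\gamma}{1-\gamma}\mathbb{E}_{d_P^{\pi_n}}[g_n] \lesssim \frac{d^{1.5}|\mathcal{A}|\sqrt{\log(n|\mathcal{F}|/\delta)}}{(1-\gamma)^3}\,\mathbb{E}_{(s,a)\sim d_P^{\pi_n}}\|\phi^*(s,a)\|_{\Sigma_{\rho_n,\phi^*}^{-1}}
\end{align*}
up to lower-order additive terms (the $\sqrt{(1-\gamma)|\mathcal{A}|\mathbb{E}[g^2]}$ pieces and $\Delta_n$), with the model-error contribution carrying the larger factor $1/(1-\gamma)^3$ because $V_{\widehat{P}_n,r+b_n}^{\pi_n}$ inherits the enlarged range of the bonus reward.

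Finally I would sum over $n$. Writing $M_n = \mathbb{E}_{(s,a)\sim d_P^{\pi_n}}[\phi^*(s,a)\phi^*(s,a)^\top]$ so that $\Sigma_{\rho_n,\phi^*} = \sum_{i\le n}M_i + \lambda_n I$, the population elliptical potential lemma (Lemma~\ref{lem:potential}) gives $\sum_{n=1}^N \mathbb{E}_{d_P^{\pi_n}}\|\phi^*\|_{\Sigma_{\rho_n,\phi^*}^{-1}}^2 \lesssim d\log\!\left(1 + N/(\lambda_N d)\right)$. Combining Jensen ($\mathbb{E}\|\phi^*\| \le \sqrt{\mathbb{E}\|\phi^*\|^2}$) with Cauchy-Schwarz over the $N$ episodes yields
\begin{align*}
\sum_{n=1}^N\left(V_{P,r}^{\pi^*} - V_{P,r}^{\pi_n}\right) \lesssim \frac{d^{1.5}|\mathcal{A}|\sqrt{\log(N|\mathcal{F}|/\delta)}}{(1-\gamma)^3}\sqrt{N\,d\log\!\left(1+\tfrac{N}{d^2\log(N|\mathcal{F}|/\delta)}\right)},
\end{align*}
which is exactly the claimed bound after collecting the powers of $d$ and $|\mathcal{A}|$. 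The main obstacle is the bonus term: because $\widehat{\phi}_n$ and $\widehat{\Sigma}_n$ are re-estimated every episode, one cannot apply a fixed-feature potential lemma directly, and the argument must route the empirical, learned-feature bonus through Lemma~\ref{lem:bonus_concentration} and the true-model one-step-back inequality so that the whole sum is expressed through the \emph{single} population covariance sequence $\Sigma_{\rho_n,\phi^*}$ of $\phi^*$; verifying that this conversion keeps the bonus term dominated by the model-error term, and that all discarded pieces are genuinely lower order, is the delicate part. A union bound over $n\in[N]$ supplies the $\log(N|\mathcal{F}|/\delta)$ factor throughout.
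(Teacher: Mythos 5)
Your proposal is correct and follows essentially the same route as the paper's proof: optimism plus the greedy choice of $\pi_n$, the simulation lemma rolled out under $d_P^{\pi_n}$, the true-model one-step-back inequality applied separately to the bonus and the model-error term (with the same trace bound $\mathbb{E}_{\rho_n\times\mathcal{U}}\|\widehat{\phi}_n\|^2_{\Sigma^{-1}}\lesssim d/n$ and the same $L_2$ bound on $V^{\pi_n}_{\widehat{P}_n,r+b_n}$), followed by the elliptical potential lemma and Cauchy--Schwarz over episodes. The subtlety you flag --- that the per-episode learned features must be converted to the fixed true feature $\phi^*$ before the potential argument can be applied --- is exactly how the paper handles it.
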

\begin{proof}
Standard decomposition shows
\begin{align*}
    & V_{P, r}^{\pi^*} - V_{P, r}^{\pi_n}\\
    \leq & V_{\widehat{P}_n, r + b_n}^{\pi^*} + \sqrt{\frac{2|\mathcal{A}| d\left(1 + \frac{\gamma^2 d}{(1-\gamma)^2}\right) \zeta_n}{(1-\gamma)}} - V_{P, r}^{\pi_n}\\
    \leq & V_{\widehat{P}_n, r + b_n}^{\pi_n} - V_{P, r}^{\pi_n} + \sqrt{\frac{2|\mathcal{A}| d\left(1 + \frac{\gamma^2 d}{(1-\gamma)^2}\right) \zeta_n}{(1-\gamma) }} \\
    \leq & \frac{1}{1-\gamma}\mathbb{E}_{(s, a)\sim d_{P}^{\pi_n}}\left[b_n(s, a) + \gamma\mathbb{E}_{\widehat{P}_n(s^\prime|s, a)}[V_{\widehat{P}_n, r + b_n}^{{\pi_n}}(s^\prime)] - \gamma\mathbb{E}_{P(s^\prime|s, a)}[V_{\widehat{P}_n, r + b_n}^{\pi_n}(s^\prime)]\right]\\
    & + \sqrt{\frac{2|\mathcal{A}| d\left(1 + \frac{\gamma^2 d}{(1-\gamma)^2}\right) \zeta_n}{(1-\gamma)}} .
\end{align*}

Applying Lemma~\ref{lem:backup_true} to $\mathbb{E}_{(s, a)\sim d_{P}^{\pi_n}}\{b_n(s, a)\}$, we have that
\begin{align*}
    & \mathbb{E}_{(s, a)\sim d_{P}^{\pi_n}}\{b_n(s, a)\} \\
    \leq & \sqrt{(1-\gamma)|\mathcal{A}|\mathbb{E}_{s\sim \rho_n, a\sim\mathcal{U}(\mathcal{A})}\{b_n^2(s, a)\}} \\
    & + \sqrt{n \gamma |\mathcal{A}| \mathbb{E}_{s\sim \rho_n, a\sim \mathcal{U}(\mathcal{A})}\{b_n^2(s, a)\} + \gamma^2 \alpha_n^2 d} \mathbb{E}_{(\tilde{s}, \tilde{a}) \sim d_{P}^{\pi_n}}\left[\left\|\phi^*(\tilde{s}, \tilde{a})\right\|_{\Sigma_{\rho_n, \phi^*}^{-1}}\right].
\end{align*}
Note that
\begin{align*}
    & \mathbb{E}_{s\sim \rho_n, a\sim \mathcal{U}(\mathcal{A})} \left\|\widehat{\phi}_n(s, a)\right\|_{\Sigma_{\rho_n, \widehat{\phi}_n}^{-1}}^2\\
    = & \mathbb{E}_{s\sim \rho_n, a\sim\mathcal{U}(\mathcal{A})} \left[\widehat{\phi}_n(s, a)^\top \Sigma_{\rho_n, \widehat{\phi}_n}^{-1}\widehat{\phi}_n(s, a)\right]\\
    = & \text{Tr}\left(\mathbb{E}_{s\sim \rho_n, a\sim \mathcal{U}(\mathcal{A})}\left[\widehat{\phi}_n(s, a)\widehat{\phi}_n(s, a)^\top\right]\left(n\mathbb{E}_{s\sim \rho_n, a\sim \mathcal{U}(\mathcal{A})}\left[\widehat{\phi}_n(s, a)\widehat{\phi}_n(s, a)^\top\right] + \lambda_n I\right)^{-1}\right)\\
    \leq & \frac{d}{n},
\end{align*}
hence, with the concentration of the bonus, we have
\begin{align*}
    \mathbb{E}_{(s, a)\sim d_{P}^{\pi_n}}\{b_n(s, a)\} \lesssim \sqrt{\frac{(1-\gamma)\alpha_n^2 d|\mathcal{A}|}{n}} + \sqrt{\gamma \alpha_n^2 d |\mathcal{A}| + \gamma^2 \alpha_n^2 d}\cdot \mathbb{E}_{(\tilde{s}, \tilde{a}) \sim d_{P}^{\pi_n}}\left[\left\|\phi^*(\tilde{s}, \tilde{a})\right\|_{\Sigma_{\rho_n, \phi^*}^{-1}}\right].
\end{align*}
We then consider the remaining term. With a slightly abuse of notation, define $g(s, a) := \left|\mathbb{E}_{\widehat{P}_n(s^\prime|s, a)} V_{\widehat{P}_n, r + b_n}^{\pi_n}(s^\prime) - \mathbb{E}_{P(s^\prime|s, a)} V_{\widehat{P}_n, r + b_n}^{\pi_n}(s^\prime)\right|$.
With H\"older's inequality, we know that $\|g(s, a)\|_{\infty} \leq 2\left\|V_{\widehat{P}_n, r+b_n}^{\pi_n}\right\|_{\infty}\leq \frac{2\left(1 + \frac{\alpha_n}{\sqrt{\lambda_n}}\right)}{1-\gamma} \lesssim \frac{\sqrt{d|\mathcal{A}|}}{(1-\gamma)^2}$. Applying Lemma~\ref{lem:backup_true} to $\mathbb{E}_{(s, a)\sim d_{P}^{\pi_n}}\{g(s, a)\}$,
we have that
\begin{align*}
    & \mathbb{E}_{(s, a) \sim d_{P}^{\pi_n}}\{g(s, a)\}\\
    \leq & \sqrt{(1-\gamma)|\mathcal{A}|\mathbb{E}_{s\sim \rho_n, a\sim\mathcal{U}(\mathcal{A})}\{g^2(s, a)\}} \\
    & + \sqrt{n \gamma |\mathcal{A}| \mathbb{E}_{s\sim \rho_n, a\sim \mathcal{U}(\mathcal{A})}\{g^2(s, a)\} + \frac{4 \gamma^2 d\left(\sqrt{\lambda_n} + \alpha_n\right)^2}{(1-\gamma)^2}} \mathbb{E}_{(\tilde{s}, \tilde{a}) \sim d_{P}^{\pi_n}}\left[\left\|\phi^*(\tilde{s}, \tilde{a})\right\|_{\Sigma_{\rho_n, \phi^*}^{-1}}\right].
\end{align*}
Note that
\begin{align*}
     & \mathbb{E}_{s\sim \rho_n, a\sim \mathcal{U}(\mathcal{A})}\{g^2(s, a)\}\\
     = &\mathbb{E}_{s\sim \rho_n, a\sim \mathcal{U}(\mathcal{A})}\left[\left(\int_{\mathcal{S}}\left(\widehat{P}_n(s^\prime|s, a) - P(s^\prime|s, a)\right) V_{\widehat{P}_n, r + b_n}^{\pi_n}(s^\prime)\right)^2\right]\\
     \leq & \mathbb{E}_{s\sim \rho_n, a\sim \mathcal{U}(\mathcal{A})}\left[\left\|\widehat{P}_n(\cdot|s, a) - P(\cdot|s, a)\right\|^2\left\|V_{\widehat{P}_n, r+b_n}^{\pi_n}\right\|^2\right]\\
     \leq & 3d\left(1 + \frac{\alpha_n^2}{\lambda_n} + \frac{d\gamma^2\left(1 + \frac{\alpha_n}{\sqrt{\lambda_n}}\right)^2}{(1-\gamma)^2}\right)\zeta_n
     \lesssim \frac{d^3 |\mathcal{A}|\zeta_n}{(1-\gamma)^4 }
\end{align*}
Hence,
\begin{align*}
    & \mathbb{E}_{(s, a) \sim d_{P}^{\pi_n}}\{g(s, a)\} 
    \lesssim \sqrt{(1-\gamma)d^3|\mathcal{A}|^2  \zeta_n }\\
    & + \sqrt{\frac{d^3 |\mathcal{A}|^2 n\zeta_n}{(1-\gamma)^4} + \frac{d^3 |\mathcal{A}| n \zeta_n}{(1-\gamma)^4}}\mathbb{E}_{(\tilde{s}, \tilde{a}) \sim d_{P}^{\pi_n}}\left[\left\|\phi^*(\tilde{s}, \tilde{a})\right\|_{\Sigma_{\rho_n, \phi^*}^{-1}}\right].
\end{align*}
Finally, with Lemma~\ref{lem:potential} and notice that $\lambda_1 \leq \lambda_2 \leq \cdots\leq \lambda_N$, we have that
\begin{align*}
    & \sum_{n=1}^N \mathbb{E}_{(\tilde{s}, \tilde{a})\sim d_{P}^{\pi_n}}\left\|\phi^*(\tilde{s}, \tilde{a})\right\|_{\Sigma_{\rho_n, \phi^*}^{-1}}
    \\
    \leq & \sqrt{N \mathrm{Tr}\left(\left(\mathbb{E}_{(\tilde{s}, \tilde{a})\sim d_{P}^{\pi_n}}\phi^*(\tilde{s}, \tilde{a})\left(\phi^*(\tilde{s}, \tilde{a})\right)^\top\right){\Sigma_{\rho_n, \phi^*}^{-1}}\right)}\\
    \leq & \sqrt{N d \log \frac{\lambda_N + N}{\lambda_1}}
\end{align*}
Combine the previous terms and take the dominating terms out, we have that
\begin{align*}
    \sum_{n=1}^N V_{P, r}^{\pi^*} - V_{P, r}^{\pi_n} \lesssim \sqrt{\frac{N d^4|\mathcal{A}|^2 \log (N|\mathcal{F}|/\delta)}{(1-\gamma)^6} \log \left(1 + \frac{N}{d^2 \log (N|\mathcal{F}|/\delta)}\right)},
\end{align*}
which concludes the proof.
\end{proof}
\begin{theorem}[PAC Guarantee]
After interacting with the environments for $N = \widetilde{\Theta}\left(\frac{d^4|\mathcal{A}|^2 }{(1-\gamma)^6 \epsilon^2}\right)$ episodes, we can obtain an $\epsilon$-optimal policy with high probability. Furthermore, with high probability, for each episode, we can terminate within $\widetilde{\Theta}(1/(1-\gamma))$ steps.
\label{thm:pac_online}
\end{theorem}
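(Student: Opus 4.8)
The plan is to convert the cumulative regret bound of Lemma~\ref{lem:regret} into a PAC guarantee via the standard online-to-batch argument, and then to verify the per-episode termination claim separately through a geometric tail bound.

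First I would start from the regret bound already established in Lemma~\ref{lem:regret}, which states that with probability at least $1-\delta$,
\[
\sum_{n=1}^N \left(V_{P, r}^{\pi^*} - V_{P, r}^{\pi_n}\right) \lesssim \sqrt{\frac{N d^4|\mathcal{A}|^2 \log(N|\mathcal{F}|/\delta)}{(1-\gamma)^6} \log\left(1 + \frac{N}{d^2 \log(N|\mathcal{F}|/\delta)}\right)}.
\]
Dividing by $N$, the average suboptimality of the produced policies decays as $\widetilde{O}\!\left(\sqrt{d^4|\mathcal{A}|^2 / (N(1-\gamma)^6)}\right)$, where $\widetilde{O}$ hides the logarithmic factors. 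Since the minimum of a set is no larger than its average, the best policy among $\{\pi_1,\dots,\pi_N\}$ (equivalently, a uniformly random choice among the $N$ episodes in expectation) satisfies $V_{P, r}^{\pi^*} - V_{P, r}^{\pi} \le \frac{1}{N}\sum_{n=1}^N\left(V_{P, r}^{\pi^*} - V_{P, r}^{\pi_n}\right)$. Setting this right-hand side equal to $\epsilon$ and solving for $N$ produces the claimed $N = \widetilde{\Theta}\!\left(\frac{d^4|\mathcal{A}|^2}{(1-\gamma)^6\epsilon^2}\right)$, where the $\widetilde{\Theta}$ absorbs the $\log(N|\mathcal{F}|/\delta)$ and double-logarithmic factors by solving the resulting transcendental inequality in $N$ up to logarithmic corrections.

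For the termination claim, I would invoke the standard equivalence between the $\gamma$-discounted state-visitation distribution $d_{P}^\pi$ and the process that at each step terminates with probability $1-\gamma$ and otherwise transitions according to $P^\star$; the state at the random termination time is then distributed exactly as $d_{P}^\pi$, which is precisely what the data-collection step (Line~\ref{line:data_collection}) requires. The episode length is then geometric with parameter $1-\gamma$, so the probability that it exceeds $t$ steps is $\gamma^t$. Choosing $t = \Theta\!\left(\frac{1}{1-\gamma}\log\frac{N}{\delta}\right)$ drives this below $\delta/N$, and a union bound over the $N$ episodes guarantees that with probability $1-\delta$ every episode terminates within $\widetilde{\Theta}(1/(1-\gamma))$ steps.

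The main obstacle is not in this final conversion, which is routine, but in the regret bound of Lemma~\ref{lem:regret} on which it rests, whose proof chains together the optimism guarantee (Lemma~\ref{lem:optimism}), the two one-step-back inequalities (Lemmas~\ref{lem:backup_learned} and~\ref{lem:backup_true}), and the elliptical potential lemma (Lemma~\ref{lem:potential}). Given those, the only care needed at this stage is to confirm that the hyperparameter schedules $\alpha_n,\lambda_n$ used in Lemma~\ref{lem:optimism} are applied consistently across all $n \le N$ so that a single union bound over the generalization event (Theorem~\ref{thm:generalization}) and the bonus-concentration event (Lemma~\ref{lem:bonus_concentration}) suffices, and to track that the per-episode generalization error $\zeta_n = \widetilde{O}(1/n)$ feeds correctly into the stated rate.
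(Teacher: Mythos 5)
Your proposal is correct and follows exactly the route the paper takes: the paper's own proof of Theorem~\ref{thm:pac_online} simply invokes the standard regret-to-PAC reduction applied to Lemma~\ref{lem:regret} (citing \citet{jin2018q, uehara2021representation} for details), which is precisely the online-to-batch conversion you spell out, and the termination claim via the geometric tail of the $(1-\gamma)$-stopping process matches the paper's remark as well. Your version is simply a more explicit write-up of the same argument.
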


\begin{proof}
It directly follows from the standard regret to PAC reduction. See \citet{jin2018q, uehara2021representation} for the detail.
\end{proof}
\subsection{PAC bounds for Offline Reinforcement Learning}
\paragraph{Proof Sketch} Similar to the online counterpart, our proof for offline setting is organized as follows:
\begin{itemize}
    \item We show the policy obtained by planning on the learned model with additional penalty lower bound the optimal value up to some error term (Lemma~\ref{lem:pessimism}), with the help of an analog of the one-step back inequality for the learned model in the offline setting (Lemma~\ref{lem:backup_learned_offline}) based on Theorem~\ref{thm:generalization}.
    \item We then show the PAC guarantee (Theorem~\ref{thm:pac_offline}) with an analog of the one-step back inequality for the true model in the offline setting (Lemma~\ref{lem:backup_true_offline}).
\end{itemize}

We first prove the analog of Lemma~\ref{lem:backup_learned} in the offline setting.
\label{sec:offline}
\begin{lemma}[One-step back inequality for the learned model in the offline setting] \label{lem:backup_learned_offline}
Let $\omega = \max_{s, a} \{1/\pi_{b}(a|s)\}$. Assume $g:\mathcal{S}\times \mathcal{
A}\to \mathbb{R}$ satisfies that $\|g\|_{\infty}\leq B_{\infty}$, $\|\int_{\mathcal
A} g(\cdot, a)\dif a\|_2 \leq B_2$, then we have that
\begin{align*}
    & \left|\mathbb{E}_{(s, a)\sim d_{\widehat{P}}^\pi}\{g(s, a)\}\right| \leq \sqrt{(1-\gamma) \omega \mathbb{E}_{(s, a)\sim \rho_b}\{g^2(s, a)\}}\\
    & + \gamma \sqrt{n \omega \mathbb{E}_{(s, a)\sim \rho_b}\{g^2(s, a)\} + B_2^2 n\zeta_n + \lambda_n B_{\infty}^2 d} \cdot \mathbb{E}_{\tilde{s}, \tilde{a}\sim d_{\widehat{P}}^\pi}\left[\left\|\widehat{\phi}(\tilde{s}, \tilde{a})\right\|_{\Sigma_{\rho_b, \phi}^{-1}}\right].
\end{align*}
\end{lemma}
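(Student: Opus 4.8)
The plan is to mirror the proof of the online counterpart, Lemma~\ref{lem:backup_learned}, replacing the exploratory distribution $\rho_n\times\mathcal{U}(\mathcal{A})$ by the offline behavior distribution $\rho_b$ throughout, and tracking how the change of measure now produces the factor $\omega=\max_{s,a}1/\pi_b(a\mid s)$ in place of $|\mathcal{A}|$. First I would expand the discounted occupancy of the learned model by one step, writing $\mathbb{E}_{(s,a)\sim d_{\widehat P}^\pi}\{g(s,a)\}=(1-\gamma)\mathbb{E}_{s\sim\rho,a\sim\pi}\{g(s,a)\}+\gamma\,\mathbb{E}_{(\tilde s,\tilde a)\sim d_{\widehat P}^\pi,\,s\sim\widehat P(\cdot\mid\tilde s,\tilde a),\,a\sim\pi}\{g(s,a)\}$, exactly as in the online case, and then bound the two pieces separately.

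For the initial-distribution term I would apply Jensen/Cauchy--Schwarz to pass to $\sqrt{\mathbb{E}_{s\sim\rho,a\sim\pi}\{g^2\}}$ and then perform a change of measure to $\rho_b$. The Radon--Nikodym factor is $\rho(s)\pi(a\mid s)/(\rho_b(s)\pi_b(a\mid s))$; bounding $\pi(a\mid s)/\pi_b(a\mid s)\le\omega$ and using the coverage property $\rho_b(s)\ge(1-\gamma)\rho(s)$ gives the first term $\sqrt{(1-\gamma)\omega\,\mathbb{E}_{(s,a)\sim\rho_b}\{g^2\}}$. This is the step where $\omega$ enters in place of the $|\mathcal{A}|$ that arose from reweighting against the uniform action distribution in the online proof.

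For the transition term I would insert the low-rank factorization $\widehat P(s\mid\tilde s,\tilde a)=\widehat\phi(\tilde s,\tilde a)^\top\widehat\mu(s)$, so that the inner expectation becomes $\widehat\phi(\tilde s,\tilde a)^\top\int\widehat\mu(s)\pi(a\mid s)g(s,a)\,ds\,da$, and apply the generalized Cauchy--Schwarz inequality with the matrix $\Sigma_{\rho_b,\phi}$ to factor out $\mathbb{E}_{(\tilde s,\tilde a)\sim d_{\widehat P}^\pi}\|\widehat\phi(\tilde s,\tilde a)\|_{\Sigma_{\rho_b,\phi}^{-1}}$. It then remains to control the dual norm $\|\int\widehat\mu\,\pi g\|_{\Sigma_{\rho_b,\phi}}^2$. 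Expanding the definition of $\Sigma_{\rho_b,\phi}$ splits this into a data term $n\,\mathbb{E}_{(\tilde s,\tilde a)\sim\rho_b}[(\int\widehat P\,\pi g)^2]$ and a ridge term $\lambda_n\|\int\widehat\mu\,\pi g\|_2^2$. The ridge term is at most $\lambda_n B_\infty^2 d$ by the $\mu$-normalization in Assumption~\ref{assump:low_rank_mdp}. For the data term I would write $\widehat P=P+(\widehat P-P)$: the $P$-part is bounded by Jensen's inequality followed by the pushforward/invariance of $\rho_b$ under the behavior dynamics and a final $\pi/\pi_b\le\omega$ reweighting, yielding $n\omega\,\mathbb{E}_{(s,a)\sim\rho_b}\{g^2\}$; the $(\widehat P-P)$-part is bounded by Cauchy--Schwarz in $L_2(\mathcal{S})$, by $\|\int_\mathcal{A}\pi(a\mid\cdot)g(\cdot,a)\,da\|_2\le B_2$, and by the offline generalization guarantee $\mathbb{E}_{(s,a)\sim\rho_b}\|\widehat P-P\|_2^2\le\zeta_n$ from Theorem~\ref{thm:generalization}, giving $B_2^2 n\zeta_n$. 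Collecting the three contributions reproduces the claimed bound.

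The main obstacle is the transition term's change of measure: unlike the online proof, which has access to the purpose-built second-step distribution $\rho_n'$, here everything must be referred back to the single fixed distribution $\rho_b$. The key observation making this possible is that the one-step pushforward of $\rho_b$ under the behavior dynamics is controlled by $\rho_b$ itself (it is the invariant/occupancy distribution of $\pi_b$), while the discrepancy between the target policy $\pi$ appearing in $g$ and the behavior policy $\pi_b$ defining $\rho_b$ is exactly what is absorbed by the concentrability factor $\omega$; getting both reweightings to land simultaneously on $\mathbb{E}_{(s,a)\sim\rho_b}\{g^2\}$, rather than on some mismatched pushforward measure, is the delicate point of the argument.
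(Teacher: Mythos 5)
Your proposal follows essentially the same route as the paper's proof: the same one-step occupancy decomposition, the same change of measure to $\rho_b$ with $\omega$ absorbing the policy mismatch for the initial-distribution term, and the same generalized Cauchy--Schwarz with $\Sigma_{\rho_b,\widehat\phi}$ followed by the $P$ versus $\widehat P - P$ split (Jensen plus the occupancy recursion $\rho_b \succeq \gamma\,\rho_b P^{\pi_b}$ for the former, Theorem~\ref{thm:generalization} for the latter) for the transition term. The argument is correct; if anything, your write-up of the initial-distribution step is cleaner than the paper's, which contains a typo in the Radon--Nikodym ratio and a stray $|\mathcal{A}|$ factor not present in the lemma statement.
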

\begin{proof}
Note that
\begin{align*}
    \mathbb{E}_{(s, a) \sim d_{\widehat{P}}^\pi}\{g(s, a)\}
    = \gamma \mathbb{E}_{(\tilde{s}, \tilde{a})\sim d_{\widehat{P}}^\pi, s\sim \widehat{P}(\cdot|\tilde{s}, \tilde{a}), a\sim \pi(\cdot|s)} \{g(s, a)\} +(1-\gamma) \mathbb{E}_{s\sim \rho, a\sim \pi(\cdot|s)} \{g(s, a)\}.
\end{align*}
For the second term, we have that
\begin{align*}
    & (1-\gamma) \mathbb{E}_{s\sim \rho, a\sim \pi(\cdot|s)}\{g(s, a)\}\\
    \leq & (1-\gamma) \sqrt{\mathbb{E}_{s\sim \rho, a\sim \pi(\cdot|s)}\{g^2(s, a)\}}\\
    = & (1-\gamma) \sqrt{\mathbb{E}_{s\sim \rho_b, a\sim \pi_b(\cdot|s)}\left\{\frac{\rho(s) \pi(a|s)}{\rho_b(s) \pi(a|s)}g^2(s, a)\right\}}\\
    \leq & \sqrt{\omega(1-\gamma)|\mathcal{A}|\mathbb{E}_{s\sim \rho_b, a\sim \pi_b(\cdot|s)} g^2(s, a)}.
\end{align*}
For the first term, we have that
\begin{align*}
    & \mathbb{E}_{(\tilde{s}, \tilde{a})\sim d_{\widehat{P}}^\pi, s\sim \widehat{P}(\cdot|\tilde{s}, \tilde{a}), a\sim \pi(\cdot|s)} \{g(s, a)\} \\
    = & \mathbb{E}_{(\tilde{s}, \tilde{a})\sim d_{\widehat{P}}^\pi} \widehat{\phi}(\tilde{s}, \tilde{a})^\top \left[\int_{\mathcal{S}\times \mathcal{A}}\widehat{\mu}(s)\pi(a|s)g(s, a)\right]\\
    \leq & \mathbb{E}_{(\tilde{s}, \tilde{a})\sim d_{\widehat{P}}^\pi} \left\|\widehat{\phi}(\tilde{s}, \tilde{a})\right\|_{\Sigma_{\rho_b, \widehat{\phi}}^{-1}}\left\|\int_{\mathcal{S}\times \mathcal{A}} \widehat{\mu}(s)\pi(a|s) g(s, a) \dif s \dif a\right\|_{\Sigma_{\rho_b, \widehat{\phi}}}.
\end{align*}
Note that
\begin{align*}
    & \left\|\int_{\mathcal{S}\times \mathcal{A}} \widehat{\mu}(s)\pi(a|s) g(s, a) \dif s \dif a\right\|_{\Sigma_{\rho_b, \widehat{\phi}}}^2 \\
    = & n \mathbb{E}_{(\tilde{s}, \tilde{a})\sim \rho_b}\left[\left(\int_{\mathcal{S}\times \mathcal{A}}\widehat{P}_n(s|\tilde{s}, \tilde{a}) \pi(a|s) g(s, a)\dif s\dif a\right)^2\right] + \lambda \left\|\int_{\mathcal{S}\times \mathcal{A}} \widehat{\mu}(s)\pi(a|s) g(s, a)\right\|^2\\
    \leq & 2n \mathbb{E}_{(\tilde{s}, \tilde{a}) \sim \rho_b} \left[\left(\int_{\mathcal{S}\times \mathcal{A}}P(s|\tilde{s}, \tilde{a})\pi(a|s)g(s, a) \dif s \dif a\right)^2\right]\\
    & + 2n \mathbb{E}_{(\tilde{s}, \tilde{a}) \sim \rho_b} \left[\left(\int_{\mathcal{S}\times \mathcal{A}}\left(\widehat{P}(s|\tilde{s}, \tilde{a}) - P(s|\tilde{s}, \tilde{a})\right)\pi(a|s)g(s, a) \dif s \dif a\right)^2\right] + \lambda B_{\infty}^2 d.
\end{align*}
With Jensen's inequality, we have
\begin{align*}
    & \mathbb{E}_{(\tilde{s}, \tilde{a}) \sim \rho_b} \left[\left(\int_{\mathcal{S}\times \mathcal{A}}P(s|\tilde{s}, \tilde{a})\pi(a|s)g(s, a) \dif s \dif a\right)^2\right]\\
    \leq & \mathbb{E}_{(\tilde{s}, \tilde{a})\sim \rho_b, s\sim P(\cdot|s, a), a\sim \pi(\cdot|s)} \{g^2(s, a)\} \\
    \leq & \frac{\omega}{\gamma} \mathbb{E}_{(s, a)\sim \rho_b} \{g^2(s, a)\}.
\end{align*}
On the other hand,
\begin{align*}
    & \mathbb{E}_{(\tilde{s}, \tilde{a}) \sim \rho_b} \left[\left(\int_{\mathcal{S}\times \mathcal{A}}\left(\widehat{P}(s|\tilde{s}, \tilde{a}) - P(s|\tilde{s}, \tilde{a})\right)\pi(a|s)g(s, a) \dif s \dif a\right)^2\right] \\
    \leq & \mathbb{E}_{(\tilde{s}, \tilde{a})\sim \rho_b} \left[\left\|\widehat{P}(\cdot|\tilde{s}, \tilde{a})-P(\cdot|\tilde{s}, \tilde{a})\right\|_2^2 \left\|\int_{\mathcal{A}}\pi(a|\cdot) g(\cdot, a)\dif a\right\|_2^2\right]\\
    \leq & \mathbb{E}_{(\tilde{s}, \tilde{a})\sim \rho_b} \left[\left\|\widehat{P}(\cdot|\tilde{s}, \tilde{a})-P(\cdot|\tilde{s}, \tilde{a})\right\|_2^2\left\|\int_{\mathcal{A}} g(\cdot, a)\dif a\right\|_2^2\right]\\
    \leq & B_2^2\zeta_n,
\end{align*}
where the last inequality is due to Theorem~\ref{thm:generalization}. Substituting this back, we obtain the desired result.
\end{proof}
\begin{lemma}[Pessimism]
Let $\omega = \max_{s, a}\{\pi_b(a|s)\}$, 
\begin{align*}
    \alpha_n = & \Theta\left(\frac{d\sqrt{\omega \zeta_n}}{1-\gamma}\right),\\
    \lambda = & \Theta(d \log(|\mathcal{F}|/\delta)),
\end{align*}
then we have
\begin{align*}
    V_{\widehat{P}, r-b}^{\pi}\leq V_{P, r}^{\pi} + \sqrt{\frac{2\omega d\left(1 + \frac{\gamma^2 d}{(1-\gamma)^2} \right)\zeta_n}{(1-\gamma)}}.
\end{align*}
\label{lem:pessimism}
\end{lemma}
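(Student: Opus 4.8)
The plan is to mirror the proof of the Optimism lemma (Lemma~\ref{lem:optimism}), flipping the sign of the bonus and replacing the online one-step back inequality and bonus concentration with their offline analogs. First I would invoke the simulation lemma (Lemma~\ref{lem:simulation}) to write
\begin{align*}
    V_{\widehat{P}, r-b}^\pi - V_{P, r}^\pi = \frac{1}{1-\gamma}\mathbb{E}_{(s,a)\sim d_{\widehat{P}}^\pi}\left[-b(s,a) + \gamma\mathbb{E}_{\widehat{P}(s'|s,a)}[V_{P,r}^\pi(s')] - \gamma\mathbb{E}_{P(s'|s,a)}[V_{P,r}^\pi(s')]\right],
\end{align*}
so that the model-misestimation error enters only through $g(s,a):=\left|\mathbb{E}_{P(s'|s,a)}[V_{P,r}^\pi(s')] - \mathbb{E}_{\widehat{P}(s'|s,a)}[V_{P,r}^\pi(s')]\right|$. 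As in the optimism argument, H\"older's inequality gives $\|g\|_\infty \leq \frac{2}{1-\gamma}$, and using Assumption~\ref{assump:repr_norm} together with $\|V_{P,r}^\pi\|_\infty \leq \frac{1}{1-\gamma}$ gives $\left\|\int_{\mathcal{A}} g(\cdot,a)\dif a\right\|_2^2 \leq \frac{4d^2}{(1-\gamma)^2}$; these supply the constants $B_\infty,B_2$ needed to apply Lemma~\ref{lem:backup_learned_offline}.

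Next I would control $\mathbb{E}_{(s,a)\sim\rho_b}\{g^2(s,a)\}$. By Cauchy--Schwarz, $g^2(s,a)\leq \|P(\cdot|s,a)-\widehat{P}(\cdot|s,a)\|_2^2\,\|V_{P,r}^\pi\|_2^2$, so combining Lemma~\ref{lem:V_norm} with the offline generalization bound $\mathbb{E}_{\rho_b}\|P-\widehat{P}\|_2^2\leq \zeta_n$ (Theorem~\ref{thm:generalization}) yields $\mathbb{E}_{\rho_b}\{g^2\}\leq 2d\left(1+\frac{d\gamma^2}{(1-\gamma)^2}\right)\zeta_n$. Substituting into the first term of Lemma~\ref{lem:backup_learned_offline}, applied to $\gamma\,\mathbb{E}_{d_{\widehat{P}}^\pi}[g]$, and dividing by $(1-\gamma)$ produces exactly the claimed ``diameter'' term $\gamma\sqrt{\frac{2\omega d(1+d\gamma^2/(1-\gamma)^2)\zeta_n}{1-\gamma}}\leq \sqrt{\frac{2\omega d(1+\gamma^2 d/(1-\gamma)^2)\zeta_n}{1-\gamma}}$, using $\gamma<1$.

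The remaining task is to show the penalty absorbs the elliptical-potential contribution $\frac{\gamma^2}{1-\gamma}\sqrt{n\omega\mathbb{E}_{\rho_b}\{g^2\}+B_2^2 n\zeta_n+\lambda B_\infty^2 d}\cdot\mathbb{E}_{d_{\widehat{P}}^\pi}\big[\|\widehat{\phi}\|_{\Sigma_{\rho_b,\widehat{\phi}}^{-1}}\big]$ coming from the second part of Lemma~\ref{lem:backup_learned_offline}. Since $b(s,a)=\alpha_n\|\widehat{\phi}(s,a)\|_{\widehat{\Sigma}^{-1}}$ is measured in the empirical-covariance norm while the backup bound uses the population norm $\Sigma_{\rho_b,\widehat{\phi}}^{-1}$, I would invoke the offline analog of the bonus-concentration result (Lemma~\ref{lem:bonus_concentration}) to relate the two up to absolute constants. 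With $\lambda=\Theta(d\log(|\mathcal{F}|/\delta))$, the $\lambda B_\infty^2 d$ and $B_2^2 n\zeta_n$ terms under the square root are dominated by $n\omega\mathbb{E}_{\rho_b}\{g^2\}\sim n\omega d^2\zeta_n/(1-\gamma)^2$, so choosing the penalty coefficient $\alpha_n$ as stated (matching the scaling of the online bonus coefficient, with $|\mathcal{A}|$ replaced by the concentrability factor $\omega$) makes the net contribution $-\frac{1}{1-\gamma}\mathbb{E}_{d_{\widehat{P}}^\pi}[b] + \frac{\gamma^2}{1-\gamma}(\text{coeff})\,\mathbb{E}_{d_{\widehat{P}}^\pi}[\|\widehat{\phi}\|_{\Sigma_{\rho_b,\widehat{\phi}}^{-1}}]\leq 0$. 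The main obstacle is precisely this cancellation: one must ensure the concentration of $\widehat{\Sigma}$ around $\Sigma_{\rho_b,\widehat{\phi}}$ holds uniformly over the model class so that it applies to the \emph{learned} $\widehat{\phi}$, which is what forces the $\Theta(d\log(|\mathcal{F}|/\delta))$ regularizer and makes the tuning of $\alpha_n$ the delicate step.
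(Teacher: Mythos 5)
Your proposal is correct and follows essentially the same route as the paper's proof: apply the simulation lemma with the penalized reward, bound the model-error term $g$ via the offline one-step back inequality (Lemma~\ref{lem:backup_learned_offline}), control $\mathbb{E}_{\rho_b}[g^2]$ using Lemma~\ref{lem:V_norm} and the generalization bound to obtain the additive error term, and use the bonus-concentration result with the stated choice of $\alpha_n$ and $\lambda$ so that the penalty absorbs the elliptical-potential contribution. The constants and intermediate bounds you derive ($B_\infty$, $B_2$, $\mathbb{E}_{\rho_b}[g^2]\leq 2d(1+d\gamma^2/(1-\gamma)^2)\zeta_n$) match those in the paper.
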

\begin{proof}
With the simulation Lemma (\ie, Lemma~\ref{lem:simulation}), we have
\begin{align*}
    & V_{\widehat{P}_n, r-b}^\pi - V_{P, r}^\pi\\
    = & \frac{1}{1-\gamma} \mathbb{E}_{(s, a)\sim d_{\widehat{P}_n}^\pi}\left[-b(s, a) + \gamma \left[\mathbb{E}_{\widehat{P}_n(s^\prime|s, a)}\left[V_{P, r}^{\pi}(s^\prime)\right] - \mathbb{E}_{P(s^\prime|s, a)}\left[V_{P, r}^\pi(s^\prime)\right]\right]\right].
\end{align*}
Consider $g(s, a) := \left|\left[\mathbb{E}_{\widehat{P}_n(s^\prime|s, a)}\left[V_{P, r}^{\pi}(s^\prime)\right] - \mathbb{E}_{P(s^\prime|s, a)}\left[V_{P, r}^\pi(s^\prime)\right]\right]\right|$. With H\"older's inequality, $\|g\|_{\infty}\leq \frac{2}{1-\gamma}$. Furthermore, with the derivation in the proof of Lemma~\ref{lem:optimism}, we know $\left\|\int_{\mathcal{A}} g(\cdot, a)\dif a\right\|_2 \leq \frac{\sqrt{2} d}{1-\gamma}$. Applying Lemma~\ref{lem:backup_learned_offline}, we have that
\begin{align*}
    & \mathbb{E}_{(s, a)\sim d_{\widehat{P}}^\pi}\{g(s, a)\} \leq \sqrt{(1-\gamma) \omega \mathbb{E}_{(s, a)\sim \rho_b}\{g^2(s, a)\}}\\
    & + \gamma \sqrt{n \omega \mathbb{E}_{(s, a)\sim \rho_b} \{g^2(s, a)\} + \frac{2d^2}{(1-\gamma)^2}\log (|\mathcal{F}|/\delta) + \frac{4\lambda_n d}{(1-\gamma)^2}} \cdot \mathbb{E}_{(\tilde{s}, \tilde{a})\sim d_{\widehat{P}}^\pi} \left[\left\|\widehat{\phi}(\tilde{s}, \tilde{a})\right\|_{\Sigma_{\rho_b, \widehat{\phi}}^{-1}}\right].
\end{align*}
With Lemma~\ref{lem:V_norm}, we know
\begin{align*}
    \mathbb{E}_{(s, a)\sim \rho_b}\{g^2(s, a)\} \leq 2d(1 + \frac{d\gamma^2}{(1-\gamma)^2})\zeta_n.
\end{align*}
Then, with the selected hyperparameters and Lemma~\ref{lem:bonus_concentration}, we conclude the proof.
\end{proof}
\begin{lemma}[One-step back inequality for the true model in the offline setting] \label{lem:backup_true_offline}
Let $\omega = \max_{s, a}\{\pi_b(a|s)\}$, assume $g:\mathcal{S}\times \mathcal{A} \to \mathbb{R}$ satisfies $\|g\|_{\infty} \leq B_{\infty}$, then we have
\begin{align*}
    & \left|\mathbb{E}_{(s, a)\sim d_{P}^{\pi}}\{g(s, a)\}\right| \leq \sqrt{(1-\gamma)\omega \mathbb{E}_{(s, a)\sim \rho_b} \{g^2(s, a)\}}\\
    & + \sqrt{n \gamma \omega \mathbb{E}_{(s, a)\sim \rho_b}\{g^2(s, a)\} + \lambda \gamma^2 B_{\infty}^2 d} \cdot \mathbb{E}_{(\tilde{s}, \tilde{a})\sim d_{P}^\pi}\left[\left\|\phi^*(\tilde{s}, \tilde{a})\right\|_{\Sigma_{\rho_b, \sigma^*}^{-1}}\right].
\end{align*}
\end{lemma}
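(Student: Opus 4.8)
The plan is to mirror the proof of Lemma~\ref{lem:backup_true}, replacing the online change of measure (to $\rho_n$ together with the uniform action distribution) by an offline change of measure (to the behavior state--action distribution $\rho_b$), so that the action-coverage factor $|\mathcal{A}|$ of the online bound is replaced by the density-ratio factor $\omega$. First I would unfold one step of the discounted occupancy via the identity $d_P^\pi(s) = (1-\gamma)\rho(s) + \gamma\,\mathbb{E}_{\tilde s\sim d_P^\pi,\,\tilde a\sim\pi(\cdot|\tilde s)}[P(s\mid\tilde s,\tilde a)]$, which splits $\mathbb{E}_{(s,a)\sim d_P^\pi}\{g(s,a)\}$ into an initial-distribution term carrying the factor $(1-\gamma)$ and a one-step transition term carrying the factor $\gamma$.

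For the initial-distribution term I would apply Jensen's inequality to pass to $\sqrt{\mathbb{E}_{s\sim\rho,\,a\sim\pi}\{g^2\}}$ and then reweight to $\rho_b$ and $\pi_b$. Bounding the three ratios $\rho(s)/\rho_b(s)\leq 1/(1-\gamma)$ (from $\rho_b=d_P^{\pi_b}\geq(1-\gamma)\rho$), $\pi(a|s)\leq 1$, and $1/\pi_b(a|s)\leq\omega$ gives the first summand $\sqrt{(1-\gamma)\,\omega\,\mathbb{E}_{(s,a)\sim\rho_b}\{g^2\}}$.

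The core of the argument is the transition term. I would invoke the low-rank factorization $P(s\mid\tilde s,\tilde a)=\phi^*(\tilde s,\tilde a)^\top\mu^*(s)$ to write its inner conditional expectation as $\phi^*(\tilde s,\tilde a)^\top v$, where $v:=\int\mu^*(s)\pi(a\mid s)g(s,a)\,ds\,da$, and then apply the generalized Cauchy--Schwarz inequality in the $\Sigma_{\rho_b,\phi^*}$ geometry to peel off the elliptical-potential factor $\mathbb{E}_{(\tilde s,\tilde a)\sim d_P^\pi}\|\phi^*(\tilde s,\tilde a)\|_{\Sigma_{\rho_b,\phi^*}^{-1}}$ from $\|v\|_{\Sigma_{\rho_b,\phi^*}}$. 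To control $\|v\|_{\Sigma_{\rho_b,\phi^*}}^2$, I would expand it into the data term $n\,\mathbb{E}_{(\tilde s,\tilde a)\sim\rho_b}[(\phi^*(\tilde s,\tilde a)^\top v)^2]$ plus the ridge term $\lambda\|v\|^2$. A Jensen step (using that $P(\cdot\mid\tilde s,\tilde a)\pi(\cdot\mid s)$ is a joint probability density) turns the data term into $n\,\mathbb{E}_{(\tilde s,\tilde a)\sim\rho_b,\,s\sim P,\,a\sim\pi}\{g^2\}$; a one-step contraction $\gamma\,\mathbb{E}_{\tilde s\sim\rho_b,\,\tilde a\sim\pi_b}[P(\cdot\mid\tilde s,\tilde a)]\leq\rho_b$ (again from the occupancy recursion for $\pi_b$) followed by the $\pi\to\pi_b$ reweighting bounds this by $\tfrac{\omega}{\gamma}\,\mathbb{E}_{(s,a)\sim\rho_b}\{g^2\}$, while the ridge term is at most $\lambda B_\infty^2 d$ by the normalization condition in Assumption~\ref{assump:low_rank_mdp} applied to the averaged map $\bar g(s)=\mathbb{E}_{a\sim\pi}[g(s,a)]$, whose sup-norm is at most $B_\infty$. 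Pulling the outer $\gamma$ inside the square root then yields the second summand exactly.

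The main obstacle I anticipate is keeping the change-of-measure bookkeeping clean so that a single factor $\omega$ suffices: the behavior policy enters in two distinct places---the initial-distribution reweighting and the transition reweighting---and one must check that these reductions cost $\omega$ rather than $\omega^2$, and that the occupancy recursion $\gamma\,\mathbb{E}_{\rho_b,\pi_b}[P]\leq\rho_b$ is applied with the behavior policy $\pi_b$ for the sampled $\tilde a$ while the action appearing inside $g$ is still integrated against the evaluation policy $\pi$. As in Lemma~\ref{lem:backup_true}, the step avoids any appeal to Theorem~\ref{thm:generalization}, since the true model is used directly; this is precisely what removes the $\zeta_n$ term and the factor-of-two splitting present in the learned-model analog (Lemma~\ref{lem:backup_learned_offline}).
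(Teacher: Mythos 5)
Your proposal is correct and follows essentially the same route as the paper, which proves this lemma by pointing to the one-step-back argument of Lemma~\ref{lem:backup_true} combined with the offline change of measure already used in Lemma~\ref{lem:backup_learned_offline}; you simply write out the details that the paper's one-line cross-reference leaves implicit (occupancy unfolding, Jensen plus $\rho_b\geq(1-\gamma)\rho$ and $1/\pi_b\leq\omega$ for the initial term, and generalized Cauchy--Schwarz in the $\Sigma_{\rho_b,\phi^*}$ geometry with the ridge term controlled by the $\mu^*$ normalization for the transition term). The bookkeeping you describe reproduces the stated constants exactly, including the single factor of $\omega$ and the $\gamma^2$ multiplying the ridge term.
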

\begin{proof}
The proof is identical to the proof of Lemma~\ref{lem:backup_learned}.
\end{proof}
We now provide the PAC guarantee for the offline setting.
\begin{theorem}[PAC Guarantee]
With probability $1-\delta$, $\forall$ baseline policy $\pi$ including history-dependent non-Markovian policies, we have that
\begin{align*}
    V_{P, r}^\pi - V_{P, r}^{\widehat{\pi}}\lesssim \sqrt{\frac{\omega^2 d^4 C_{\pi}^*\log (|\mathcal{F}|/\delta)}{(1-\gamma)^6}},
\end{align*}
where $C_{\pi}^*$ is the relative conditional number under $\phi^*$, defined as
\begin{align*}
    C_{\pi}^* := \sup_{x\in\mathbb{R}^d}\frac{x^{\top}\mathbb{E}_{(s, a)\sim d_{P}^{\pi}}[\phi^*(s, a)\phi^*(s, a)^\top] x}{x^{\top}\mathbb{E}_{(s, a)\sim \rho_b}[\phi^*(s, a) \phi^*(s, a)^\top]x}.
\end{align*}
\label{thm:pac_offline}
\end{theorem}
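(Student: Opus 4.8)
The plan is to run the standard pessimism argument, isolating a comparator term that is controlled by the coverage coefficient $C_\pi^*$. Let $\widehat\pi = \arg\max_{\pi'} V_{\widehat P, r-b}^{\pi'}$ be the output of Algorithm~\ref{alg:offline_algorithm}, where $b$ is the elliptical penalty built from the learned $\widehat\phi$. For an arbitrary comparator $\pi$ I would decompose
\[
V_{P,r}^\pi - V_{P,r}^{\widehat\pi} = \underbrace{\left(V_{P,r}^\pi - V_{\widehat P, r-b}^\pi\right)}_{(\mathrm{I})} + \underbrace{\left(V_{\widehat P, r-b}^\pi - V_{\widehat P, r-b}^{\widehat\pi}\right)}_{(\mathrm{II})} + \underbrace{\left(V_{\widehat P, r-b}^{\widehat\pi} - V_{P,r}^{\widehat\pi}\right)}_{(\mathrm{III})}.
\]
Term $(\mathrm{II}) \le 0$, since in the learned MDP the greedy $\widehat\pi$ attains the maximum value over \emph{all} policies, including history-dependent non-Markovian ones; this is precisely what lets the guarantee hold for arbitrary comparator $\pi$. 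Term $(\mathrm{III})$ is controlled by the Pessimism lemma (Lemma~\ref{lem:pessimism}) applied to $\widehat\pi$, giving $(\mathrm{III}) \lesssim \sqrt{\omega d^2 \zeta_n/(1-\gamma)^3}$ after invoking Lemma~\ref{lem:V_norm} for $\|V\|_2$. The whole difficulty is therefore term $(\mathrm{I})$, the evaluation gap of the comparator.

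For $(\mathrm{I})$ I would expand via the simulation lemma (Lemma~\ref{lem:simulation}) \emph{over the true visitation} $d_P^\pi$, so that the true feature $\phi^*$ (and hence $C_\pi^*$) appears rather than the learned $\widehat\phi$:
\[
V_{P,r}^\pi - V_{\widehat P, r-b}^\pi = \frac{1}{1-\gamma}\mathbb{E}_{(s,a)\sim d_P^\pi}\left[b(s,a) + \gamma\left(\mathbb{E}_{P}V_{\widehat P, r-b}^\pi - \mathbb{E}_{\widehat P}V_{\widehat P, r-b}^\pi\right)\right].
\]
I would then apply the true-model one-step-back inequality (Lemma~\ref{lem:backup_true_offline}) separately to $g = b$ and to the model-prediction error $g(s,a) = \left|\mathbb{E}_{\widehat P}V_{\widehat P, r-b}^\pi - \mathbb{E}_P V_{\widehat P, r-b}^\pi\right|$. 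This converts each $d_P^\pi$-expectation into, on one hand, an $\mathbb{E}_{\rho_b}$ term controlled by the generalization bound $\zeta_n$ of Theorem~\ref{thm:generalization} together with $\|V\|_2 \lesssim d/(1-\gamma)$ (Lemma~\ref{lem:V_norm}), and on the other hand a feature-norm term $\mathbb{E}_{(s,a)\sim d_P^\pi}\big[\|\phi^*(s,a)\|_{\Sigma_{\rho_b,\phi^*}^{-1}}\big]$, exactly mirroring the online regret proof (Lemma~\ref{lem:regret}). The $\ell_\infty$ and $L_2$ norms of $V_{\widehat P, r-b}^\pi$ needed as the inputs $B_\infty, B_2$ are bounded by the same computation used online for $V_{\widehat P_n, r+b_n}^\pi$.

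The key step is to turn the feature-norm term into $C_\pi^*$. By Jensen and the trace identity,
\[
\mathbb{E}_{d_P^\pi}\!\left[\|\phi^*\|_{\Sigma_{\rho_b,\phi^*}^{-1}}\right] \le \sqrt{\mathrm{Tr}\!\left(\mathbb{E}_{d_P^\pi}[\phi^*(\phi^*)^\top]\,\Sigma_{\rho_b,\phi^*}^{-1}\right)} \le \sqrt{C_\pi^*\,\mathrm{Tr}\!\left(\mathbb{E}_{\rho_b}[\phi^*(\phi^*)^\top]\,\Sigma_{\rho_b,\phi^*}^{-1}\right)} \le \sqrt{C_\pi^* d/n},
\]
where the middle inequality is the definition of $C_\pi^*$ (which gives $\mathbb{E}_{d_P^\pi}[\phi^*(\phi^*)^\top] \preceq C_\pi^*\,\mathbb{E}_{\rho_b}[\phi^*(\phi^*)^\top]$) and the last is the elliptical/trace bound with $\Sigma_{\rho_b,\phi^*} = n\mathbb{E}_{\rho_b}[\phi^*(\phi^*)^\top]+\lambda I$. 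Substituting this, the generalization bound $\zeta_n$ with $n\zeta_n \lesssim \log(|\mathcal F|/\delta)$, and the chosen $\alpha_n = \Theta(d\sqrt{\omega\zeta_n}/(1-\gamma))$, $\lambda = \Theta(d\log(|\mathcal F|/\delta))$ into the two applications of Lemma~\ref{lem:backup_true_offline}, and collecting the dominant term, yields a bound of the claimed form $\sqrt{\omega^2 d^4 C_\pi^*\log(|\mathcal F|/\delta)/(1-\gamma)^6}$.

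The main obstacle is reconciling the two feature systems: the penalty $b$ is built from the \emph{learned} $\widehat\phi$ and the empirical covariance $\widehat\Sigma$, whereas the target coverage coefficient $C_\pi^*$ is defined through the \emph{true} $\phi^*$ and the population covariance $\Sigma_{\rho_b,\phi^*}$. Bridging them requires (a) expanding $(\mathrm{I})$ over the true visitation $d_P^\pi$ so that $\phi^*$, not $\widehat\phi$, governs the elliptical potential; (b) a concentration argument of the type in Lemma~\ref{lem:bonus_concentration} to pass between $\widehat\Sigma$ and the population $\Sigma_{\rho_b,\widehat\phi}$; and (c) showing the model-prediction error is dominated by $b$ up to the $L_2$ generalization error $\zeta_n$ — this last point deserves care because the available control is only in $L_2$ (weaker than the $L_1$/KL control MLE would give), so one must verify that the $L_2$ bound is enough after the Cauchy–Schwarz steps that produce the $\|V\|_2$ factor.
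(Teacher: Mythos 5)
Your proposal matches the paper's proof essentially step for step: the same pessimism-plus-optimality decomposition (your terms (II) and (III) are the paper's first two inequalities), the same expansion of the comparator gap via the simulation lemma over $d_P^\pi$, the same two applications of Lemma~\ref{lem:backup_true_offline} to the penalty and the model-prediction error, and the identical Jensen/trace argument converting $\mathbb{E}_{d_P^\pi}\bigl[\|\phi^*\|_{\Sigma_{\rho_b,\phi^*}^{-1}}\bigr]$ into $\sqrt{C_\pi^* d/n}$. The concerns you flag about reconciling $\widehat\phi$ with $\phi^*$ are handled in the paper exactly as you anticipate, via the true-model one-step-back inequality and the covariance concentration of Lemma~\ref{lem:bonus_concentration}.
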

\begin{proof}
Standard decomposition shows
\begin{align*}
    & V_{P, r}^{\pi} - V_{P, r}^{\widehat{\pi}}\\
    \leq & V_{P, r}^\pi - V_{\widehat{P}, r-b}^{\widehat{\pi}} + \sqrt{\frac{2\omega d\left(1 + \frac{\gamma^2 d}{(1-\gamma^2)}\right)\zeta_n)}{(1-\gamma)}}\\
    \leq & V_{P, r}^{\pi} - V_{\widehat{P}, r-b}^{\pi} + \sqrt{\frac{2\omega d\left(1 + \frac{\gamma^2 d}{(1-\gamma^2)}\right)\zeta_n}{(1-\gamma)}}\\
    = & \mathbb{E}_{(s, a)\sim d_{P}^\pi}\left[b(s, a) + \gamma \mathbb{E}_{P(s^\prime|s, a)} \left[V_{\widehat{P}, r-b}^\pi(s^\prime)\right] - \gamma \mathbb{E}_{\widehat{P}(s^\prime|s, a)}\left[V_{\widehat{P}, r-b}^\pi(s^\prime)\right]\right]\\
    & + \sqrt{\frac{2\omega d\left(1 + \frac{\gamma^2 d}{(1-\gamma^2)}\right)\zeta_n}{(1-\gamma)}}.
\end{align*}
With Lemma~\ref{lem:backup_true_offline} and the identical method used in the proof of Lemma~\ref{lem:regret}, we have that
\begin{align*}
    \mathbb{E}_{(s, a)\sim d_{P}^\pi}\{b_n(s, a)\} \leq \sqrt{\frac{(1-\gamma) \alpha_n^2 d\omega}{n}} + \sqrt{\gamma \alpha_n^2 d \omega + \gamma^2 \alpha_n^2 d} \cdot \mathbb{E}_{(\tilde{s}, \tilde{a})\sim d_{P}^{\pi}}\left[\|\phi^*(\tilde{s}, \tilde{a})\|_{\Sigma_{\rho_b}, \phi^*}\right].
\end{align*}
Furthermore, define $g(s, a) := \left|\mathbb{E}_{\widehat{P}_n(s^\prime|s, a)} V_{\widehat{P}_n, r + b_n}^{\pi_n}(s^\prime) - \mathbb{E}_{P(s^\prime|s, a)} V_{\widehat{P}_n, r + b_n}^{\pi_n}(s^\prime)\right|$. With Lemma~\ref{lem:backup_true_offline} and the identical method used in the proof of Lemma~\ref{lem:regret}, we can obtain
\begin{align*}
    & \mathbb{E}_{(s, a) \sim d_{P}^{\pi}\{g(s, a)\}}\lesssim \sqrt{(1-\gamma)d^3\omega^2 \zeta_n} + \sqrt{\frac{d^3 \omega^2 n\zeta_n}{(1-\gamma)^4}}\mathbb{E}_{(\tilde{s}, \tilde{a})\sim d_{P}^\pi}\left[\|\phi^*(\tilde{s},\tilde{a})\|_{\Sigma_{\rho_b, \phi^*}^{-1}}\right].
\end{align*}
Finally, by the definition of $C^*$, we have that
\begin{align*}
    \mathbb{E}_{(\tilde{s}, \tilde{a})\sim d_{P}^\pi}\left[\|\phi^*(\tilde{s},\tilde{a})\|_{\Sigma_{\rho_b, \phi^*}^{-1}}\right]\leq & \sqrt{\mathbb{E}_{(\tilde{s}, \tilde{a})\sim d_{P}^\pi}\left[\|\phi^*(\tilde{s},\tilde{a})\|_{\Sigma_{\rho_b, \phi^*}^{-1}}^2\right]} \\
    \leq & \sqrt{C^* \mathbb{E}_{(\tilde{s}, \tilde{a})\sim \rho_b}\left[\|\phi^*(\tilde{s},\tilde{a})\|_{\Sigma_{\rho_b, \phi^*}^{-1}}\right]} \leq \sqrt{\frac{C^*d}{n}}.
\end{align*}
Combining the previous terms and taking the dominating terms out, we conclude the proof.
\end{proof}
\section{Technical Lemmas}
\begin{lemma}[Simulation Lemma] \label{lem:simulation}
With a slightly abuse of notation, we have
\begin{align*}
    V_{\widehat{P}_n, r + b}^\pi - V_{P, r}^\pi = & \frac{1}{1-\gamma}\mathbb{E}_{(s, a) \sim d_{P}^\pi}\left[b(s, a) + \gamma\left[\mathbb{E}_{\widehat{P}_n(s^\prime|s, a)}[V_{\widehat{P}_n, r + b}^{\pi}(s^\prime)] - \mathbb{E}_{P(s^\prime|s, a)}[V_{\widehat{P}_n, r + b}^\pi(s^\prime)]\right]\right],\\
    V_{\widehat{P}_n, r + b}^\pi - V_{P, r}^\pi = & \frac{1}{1-\gamma}\mathbb{E}_{(s, a) \sim d_{\widehat{P}_n}^\pi}\left[b(s, a) + \gamma\left[\mathbb{E}_{\widehat{P}_n(s^\prime|s, a)}[V_{P, r}^{\pi}(s^\prime)] - \mathbb{E}_{P(s^\prime|s, a)}[V_{P, r}^\pi(s^\prime)]\right]\right].
\end{align*}
\end{lemma}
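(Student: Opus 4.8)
The plan is to establish both identities via the standard value-difference (telescoping) argument, applied in two complementary directions. Throughout, read $V^\pi_{\widehat{P}_n, r+b}$ and $V^\pi_{P,r}$ on the left-hand sides as the scalar values $\mathbb{E}_{s_0\sim\rho}[V^\pi_{\widehat{P}_n, r+b}(s_0)]$ and $\mathbb{E}_{s_0\sim\rho}[V^\pi_{P,r}(s_0)]$ (this is the abuse of notation in the statement), while inside the right-hand sides they denote the corresponding value \emph{functions}. For a kernel $\widetilde{P}$ and reward $\widetilde{r}$, write the policy Bellman operator $(\mathcal{T}^\pi_{\widetilde{P},\widetilde{r}}V)(s):=\mathbb{E}_{a\sim\pi(\cdot|s)}[\widetilde{r}(s,a)+\gamma\,\mathbb{E}_{s'\sim\widetilde{P}(\cdot|s,a)}[V(s')]]$, so that $V^\pi_{\widehat{P}_n, r+b}$ is the fixed point of $\mathcal{T}^\pi_{\widehat{P}_n, r+b}$ and $V^\pi_{P,r}$ is the fixed point of $\mathcal{T}^\pi_{P,r}$. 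I will use the identity $\sum_{t\ge0}\gamma^t\,\mathbb{E}_{s_0\sim\rho,\pi,\widetilde{P}}[f(s_t)]=\tfrac{1}{1-\gamma}\,\mathbb{E}_{s\sim d^\pi_{\widetilde{P}}}[f(s)]$, which is immediate from the definition of $d^\pi_{\widetilde{P}}$ in Section~\ref{sec:prelim}.

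For the first identity I would expand the learned-model value function $V^\pi_{\widehat{P}_n, r+b}$ and measure its Bellman residual under the \emph{true} operator $\mathcal{T}^\pi_{P,r}$. Writing $\Delta(s):=V^\pi_{\widehat{P}_n, r+b}(s)-V^\pi_{P,r}(s)$ and inserting $\mathcal{T}^\pi_{P,r}V^\pi_{\widehat{P}_n, r+b}$, the reward term $r$ cancels in $\mathcal{T}^\pi_{P,r}V^\pi_{\widehat{P}_n,r+b}-\mathcal{T}^\pi_{P,r}V^\pi_{P,r}$ (because $V^\pi_{P,r}$ is the fixed point of $\mathcal{T}^\pi_{P,r}$), leaving the one-step recursion $\Delta(s)=\delta(s)+\gamma\,\mathbb{E}_{a\sim\pi(\cdot|s),\,s'\sim P(\cdot|s,a)}[\Delta(s')]$, with residual $\delta(s)=\mathbb{E}_{a\sim\pi(\cdot|s)}[b(s,a)+\gamma(\mathbb{E}_{\widehat{P}_n(s'|s,a)}[V^\pi_{\widehat{P}_n,r+b}(s')]-\mathbb{E}_{P(s'|s,a)}[V^\pi_{\widehat{P}_n,r+b}(s')])]$. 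Unrolling along trajectories generated by $\pi$ and the \emph{true} kernel $P$ gives $\Delta(s_0)=\sum_{t\ge0}\gamma^t\,\mathbb{E}[\delta(s_t)]$; taking $s_0\sim\rho$ and applying the visitation identity yields the first claimed equation, with the expectation under $d^\pi_P$.

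The second identity is the dual: I would instead expand the \emph{true}-model value function $V^\pi_{P,r}$ and measure its residual under the learned operator $\mathcal{T}^\pi_{\widehat{P}_n, r+b}$. Inserting $\mathcal{T}^\pi_{\widehat{P}_n,r+b}V^\pi_{P,r}$ and using that $V^\pi_{\widehat{P}_n,r+b}$ is the fixed point of $\mathcal{T}^\pi_{\widehat{P}_n,r+b}$, the same cancellation produces $\Delta(s)=\widetilde{\delta}(s)+\gamma\,\mathbb{E}_{a\sim\pi(\cdot|s),\,s'\sim\widehat{P}_n(\cdot|s,a)}[\Delta(s')]$ with $\widetilde{\delta}(s)=\mathbb{E}_{a\sim\pi(\cdot|s)}[b(s,a)+\gamma(\mathbb{E}_{\widehat{P}_n(s'|s,a)}[V^\pi_{P,r}(s')]-\mathbb{E}_{P(s'|s,a)}[V^\pi_{P,r}(s')])]$. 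Now the recursion is unrolled along trajectories generated by $\pi$ and the \emph{learned} kernel $\widehat{P}_n$, so the visitation identity produces an expectation under $d^\pi_{\widehat{P}_n}$, matching the second claimed equation.

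The computation is routine, so the only real obstacle is bookkeeping: one must pair each direction correctly — in the first identity the residual of the \emph{learned}-model value is integrated against the \emph{true}-model visitation $d^\pi_P$, whereas in the second the residual of the \emph{true}-model value is integrated against the \emph{learned}-model visitation $d^\pi_{\widehat{P}_n}$. The cancellation of the reward relies in each case on the fixed-point property of the value function that is \emph{not} being expanded, while the recursion is driven by the kernel whose operator \emph{is} used for the expansion; swapping either is the easy mistake to make. No further assumptions (e.g.\ validity of $\widehat{P}_n$ as a probability measure) are needed for these algebraic identities.
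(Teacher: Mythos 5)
Your proof is correct and takes essentially the same route as the paper: the paper's own argument applies the occupancy-measure flow identity $\mathbb{E}_{s\sim d^{\pi}_{\widetilde{P}},\,a\sim\pi}[f(s,a)] = (1-\gamma)\,\mathbb{E}_{s\sim\rho,\,a\sim\pi}[f(s,a)] + \gamma\,\mathbb{E}_{(\tilde{s},\tilde{a})\sim d^{\pi}_{\widetilde{P}},\, s\sim\widetilde{P}(\cdot|\tilde{s},\tilde{a}),\,a\sim\pi}[f(s,a)]$ to $f = Q^{\pi}_{\widehat{P}_n, r+b}$ with $\widetilde{P}=P$ (and symmetrically for the second identity), which is precisely the closed form of your telescoped Bellman-residual recursion, and your pairing of the learned-model values with $d^{\pi}_{P}$ and the true-model values with $d^{\pi}_{\widehat{P}_n}$ matches the statement. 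The only quibble is your closing remark: for $V^{\pi}_{\widehat{P}_n, r+b}$, $d^{\pi}_{\widehat{P}_n}$, and the convergence of the unrolled geometric series to be well defined, one does need $\widehat{P}_n$ to be a valid (sub)probability kernel, which the paper assumes explicitly before invoking this lemma.
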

\begin{proof}
Note that
\begin{align*}
    & \mathbb{E}_{s\sim d_{P}^\pi, a\sim \pi(\cdot|s)} [f(s, a)] \\
    = & (1-\gamma)\mathbb{E}_{s\sim \rho, a\sim \pi(\cdot|s)} [f(s, a)] +  \gamma \mathbb{E}_{\tilde{s}\sim d_{P}^\pi, \tilde{a}\sim \pi(\cdot|\tilde{s}), s \sim P(\cdot|\tilde{s}, \tilde{a}), \tilde{a} \sim \pi(\cdot|\tilde{s})} [f(s, a)].
\end{align*}
Take $f = Q_{\widehat{P}_n, r + b}^\pi$, we have that
\begin{align*}
    V_{\widehat{P}_n, r+b}^\pi =&  \mathbb{E}_{s\sim \rho, a\sim \pi(\cdot|s)}\left[ Q_{\widehat{P}_n, r+b}^\pi(s, a)\right]\\
    = & \frac{1}{1-\gamma} \left(\mathbb{E}_{s \sim d_{P}^\pi, a\sim \pi(\cdot|s)}\left[Q_{\widehat{P}_n, r+b}^\pi(s, a)\right] - \gamma \mathbb{E}_{\tilde{s}\sim d_{P}^\pi, \tilde{a}\sim \pi(\cdot|\tilde{s}), s \sim P(\cdot|\tilde{s}, \tilde{a}), \tilde{a} \sim \pi(\cdot|\tilde{s})} \left[Q_{\widehat{P}_n, r+b}^\pi(s, a)\right]\right)\\
    = & \frac{1}{1-\gamma} \mathbb{E}_{s\sim d_{P}^\pi, a\sim \pi(\cdot|s)}\left[Q_{\widehat{P}_n, r+b}^\pi(s, a) - \gamma \mathbb{E}_{s^\prime\sim P(\cdot|s, a), a^\prime \sim \pi(\cdot|s^\prime)}\left[Q_{\widehat{P}_n, r+b}^\pi(s^\prime, a^\prime)\right] \right].
\end{align*}
Substitute back, we have that
\begin{align*}
    & V_{\widehat{P}_n, r+b}^\pi - V_{P, r}^\pi \\
    = & \frac{1}{1-\gamma}\mathbb{E}_{s\sim d_{P}^\pi, a\sim \pi(\cdot|s)}\left[Q_{\widehat{P}_n, r+b}^\pi(s, a) - \gamma \mathbb{E}_{s^\prime\sim P(\cdot|s, a), a^\prime \sim \pi(\cdot|s^\prime)}\left[Q_{\widehat{P}_n, r+b}^\pi(s^\prime, a^\prime)\right] - r(s, a) \right]\\
    = & \frac{1}{1-\gamma} \mathbb{E}_{s\sim d_{P}^\pi, a\sim \pi(\cdot|s)}\left[b(s, a) + \gamma\left[\mathbb{E}_{s^\prime \sim \widehat{P}_n(\cdot|s, a)} [V_{\widehat{P}, r+b}^\pi(s^\prime)] - \mathbb{E}_{s^\prime \sim P(\cdot|s, a)}[V_{\widehat{P}_n, r+b}^\pi(s^\prime)] \right]\right].
\end{align*}
The second equation can be obtained with a similar method, which concludes the proof.
\end{proof}

\begin{lemma}[Elliptical Potential Lemma] \label{lem:potential}
Let $M_0 = \lambda I_{d\times d}$, $M_n = M_{n-1} + G_n$ where $G_n$ is a symmetric positive definite matrix with $\|G_n\|_{op}\leq c$, then we have that
\begin{align*}
    \sum_{n=1}^N \mathrm{Tr}(G_n M_{n}^{-1}) \leq \log \mathrm{det}(M_N) - 2d \log\lambda \leq d \log \left(1 + \frac{Nc}{\lambda}\right).
\end{align*}
\end{lemma}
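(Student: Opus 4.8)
The plan is to establish the result in two stages: a per-step bound that converts each trace term into an increment of $\log\det M_n$, which then telescopes to give the first inequality, followed by a determinant--trace (AM--GM) estimate for the second inequality.

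For the per-step bound I would show that
\begin{align*}
    \mathrm{Tr}(G_n M_n^{-1}) \leq \log\det M_n - \log\det M_{n-1}.
\end{align*}
Writing $M_{n-1} = M_n - G_n$ and factoring $M_n^{1/2}$ out of the determinant,
\begin{align*}
    \log\det M_n - \log\det M_{n-1} = -\log\det\left(I - M_n^{-1/2} G_n M_n^{-1/2}\right).
\end{align*}
Set $A := M_n^{-1/2} G_n M_n^{-1/2}$. Since $M_0 = \lambda I \succ 0$ and every $G_k \succeq 0$, we have $M_{n-1} \succeq \lambda I \succ 0$, hence $0 \preceq G_n \preceq M_n$ and the eigenvalues of $A$ lie in $[0,1)$. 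Applying the scalar inequality $-\log(1-x) \geq x$ valid for $x\in[0,1)$ to each eigenvalue $\lambda_i(A)$ gives $-\log\det(I-A) = \sum_i -\log(1-\lambda_i(A)) \geq \sum_i \lambda_i(A) = \mathrm{Tr}(A) = \mathrm{Tr}(G_n M_n^{-1})$, where the last equality uses cyclicity of the trace. This is exactly the per-step claim.

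Telescoping over $n = 1,\dots,N$ then yields
\begin{align*}
    \sum_{n=1}^N \mathrm{Tr}(G_n M_n^{-1}) \leq \log\det M_N - \log\det M_0 = \log\det M_N - d\log\lambda,
\end{align*}
using $\det M_0 = \lambda^d$; this establishes the first inequality (the argument gives the sharp constant $d\log\lambda$, and the $2d$ appearing in the statement is a harmless loosening that does not affect the downstream bound). For the second inequality I would invoke the AM--GM determinant bound $\det M_N \leq (\mathrm{Tr}(M_N)/d)^d$ together with $\mathrm{Tr}(M_N) = d\lambda + \sum_{n=1}^N \mathrm{Tr}(G_n) \leq d\lambda + Ndc$, where $\mathrm{Tr}(G_n) \leq d\,\|G_n\|_{\mathrm{op}} \leq dc$. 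Taking logarithms gives $\log\det M_N \leq d\log(\lambda + Nc)$, so that $\log\det M_N - d\log\lambda \leq d\log\bigl(1 + Nc/\lambda\bigr)$, which closes the chain.

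The main obstacle is the per-step matrix step. Because the trace involves $M_n^{-1}$, the \emph{updated} matrix, rather than $M_{n-1}^{-1}$, one must use the subtractive form $-\log\det(I-A)$ and the bound $-\log(1-x)\geq x$, rather than the more familiar $\log\det(I + M_{n-1}^{-1/2}G_n M_{n-1}^{-1/2})$ expansion; and one must verify that the spectrum of $A$ stays strictly below $1$, which relies precisely on $M_{n-1}$ being strictly positive definite through the $\lambda I$ regularization. The remaining steps---telescoping and the AM--GM bound on the determinant---are routine.
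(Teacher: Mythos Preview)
Your proof is correct and follows essentially the same route as the paper: a per-step inequality $\mathrm{Tr}(G_n M_n^{-1}) \leq \log\det M_n - \log\det M_{n-1}$ that telescopes, followed by the AM--GM bound $\log\det M_N \leq d\log(\mathrm{Tr}(M_N)/d)$ for the second inequality. The only stylistic difference is that the paper obtains the per-step bound in one line from the first-order concavity inequality for $\log\det(\cdot)$ evaluated at $M_n$ (using $\nabla\log\det X = X^{-1}$), whereas you derive the same bound by diagonalizing $M_n^{-1/2}G_n M_n^{-1/2}$ and applying the scalar inequality $-\log(1-x)\ge x$; these are equivalent expressions of the same concavity, and your observation about the $d\log\lambda$ versus $2d\log\lambda$ constant is correct.
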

\begin{proof}
By the concavity of $\log \mathrm{det}(\cdot)$ function and $\frac{\dif \log \mathrm{det}(X)}{\dif X} = (X^\top)^{-1}$, we know
\begin{align*}
    \log\mathrm{det}(M_{n-1}) \leq & \log \mathrm{det}(M_{n}) + \mathrm{Tr}(M_{n}^{-1} (M_{n-1} - M_{n}))\\
    = & \log \mathrm{det}(M_{n}) - \mathrm{Tr}(M_{n}^{-1} G_n).
\end{align*}
Telescoping, we can obtain the first inequality. For the second inequality, note that, with Jensen's inequality, we have
\begin{align*}
    \log \mathrm{det}(M_n) = \sum_{i=1}^d \log \sigma_i\leq d \log \frac{\mathrm{Tr}(M_n)}{d} \leq d \log (\lambda + Nc)
\end{align*}
where $\sigma_i$ is the $i$-th eigenvalue of $M_n$. 
\end{proof}

\section{Experiment Details}
\label{appendix:experiments}
\subsection{Online Setting}
We list all the hyperparameter and network architecture we use for our experiments. For online MuJoCo and DM Control tasks, the hyperparameters can be found at Table~\ref{tab:hyper_online}. Therefore, we set bonus scaling term to 0 for MuJoCo tasks. However, this bonus is critical to the success of DM Control Suite (especially sparse reward environments). Note that we use exactly the same actor and critic network architecture for all the algorithms in the DM Control Suite experiment.

{\color{black} For evaluation in Mujoco, in each evaluation (every 5K steps) we test our algorithm for 10 episodes. We average the results over the last 4 evaluations and 4 random seeds. For Dreamer and Proto-RL, we change their network from CNN to 3-layer MLP and disable the image data augmentation part (since we test on the state space). We tried to tune some of their hyperparameter (e.g., exploration steps in Proto-RL) and report the best number across our runs. However, due to the short time, it is also possible that we didn't tune the hyperparameter enough. }

\begin{table*}[h]
\caption{Hyperparameters used for \algabb in all the environments in MuJoCo and DM Control Suite.}
\vspace{-10pt}
\footnotesize
\setlength\tabcolsep{3.5pt}
\label{tab:hyper_online}
\centering
\begin{tabular}{p{6cm}p{3cm}p{5cm}p{2.5cm}p{2.5cm}p{2cm}p{2cm}}
\toprule
& Hyperparameter Value \\ 
\midrule
C & 1.0 \\
regularization coef & 1.0 \\
Bonus Coefficient (MuJoCo) & 0.0 \\
Bonus Coefficient (DM Control) & 5.0 \\
Actor lr & 0.0003 \\
Model lr & 0.0003 \\
Actor Network Size (MuJoCo) & (256, 256) \\
Actor Network Size (DM Control) & (1024, 1024) \\
SVD Embedding Network Size (MuJoCo) & (1024, 1024, 1024) \\
SVD Embedding Network Size (DM Control) & (1024, 1024, 1024) \\
Critic Network Size (MuJoCo) & (1024, 1) \\
Critic Network Size (DM Control) & (1024, 1) \\
Discount & 0.99\\
Target Update Tau & 0.005 \\
Model Update Tau & 0.005 \\
Batch Size & 256 \\
\bottomrule 
\end{tabular}
\vspace{-10pt}
\end{table*}

\subsection{Performance Curves}
We provide the performance curves for online DM Control Suite experiments in~\figref{fig:dm_control}. As we can see in the figures, the proposed~\algabb converges faster and achieve the state-of-the-art performances in most of the environments, demonstrating the sample efficiency and the ability to balance of exploration vs. exploitation of~\algabb. 

\vspace{-10pt}
\begin{figure}[hb]
    \centering
    \includegraphics[width=\textwidth]{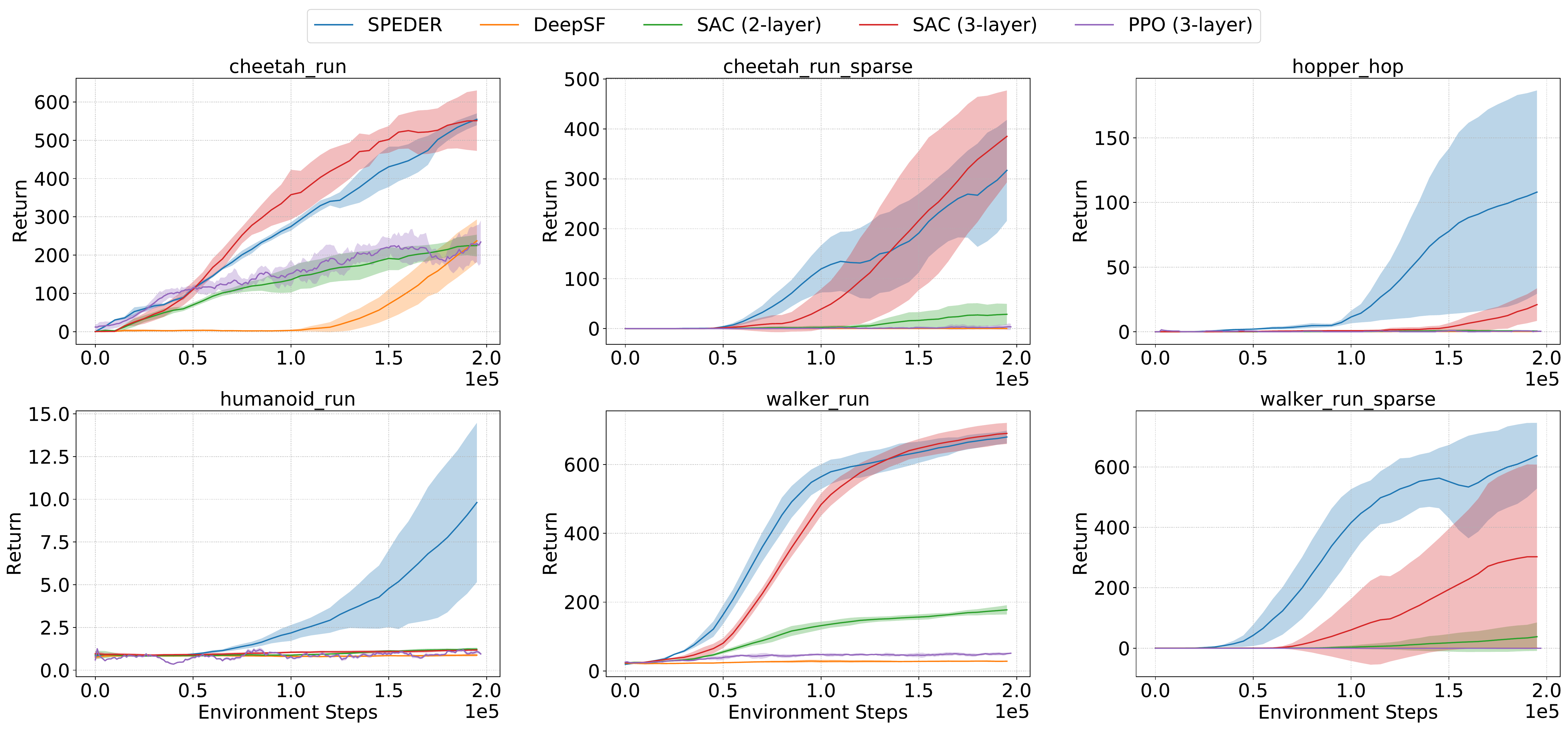}
    \caption{Performance Curves for online DM Control Suite.}
    \label{fig:dm_control}
    \vspace{-15pt}
\end{figure}

\subsection{Transition Estimation via Spectral Decomposition}

\begin{wrapfigure}{r}{0.4\textwidth}
\vspace{-15pt} 
\begin{center}   
    \includegraphics[width=0.2\textwidth]{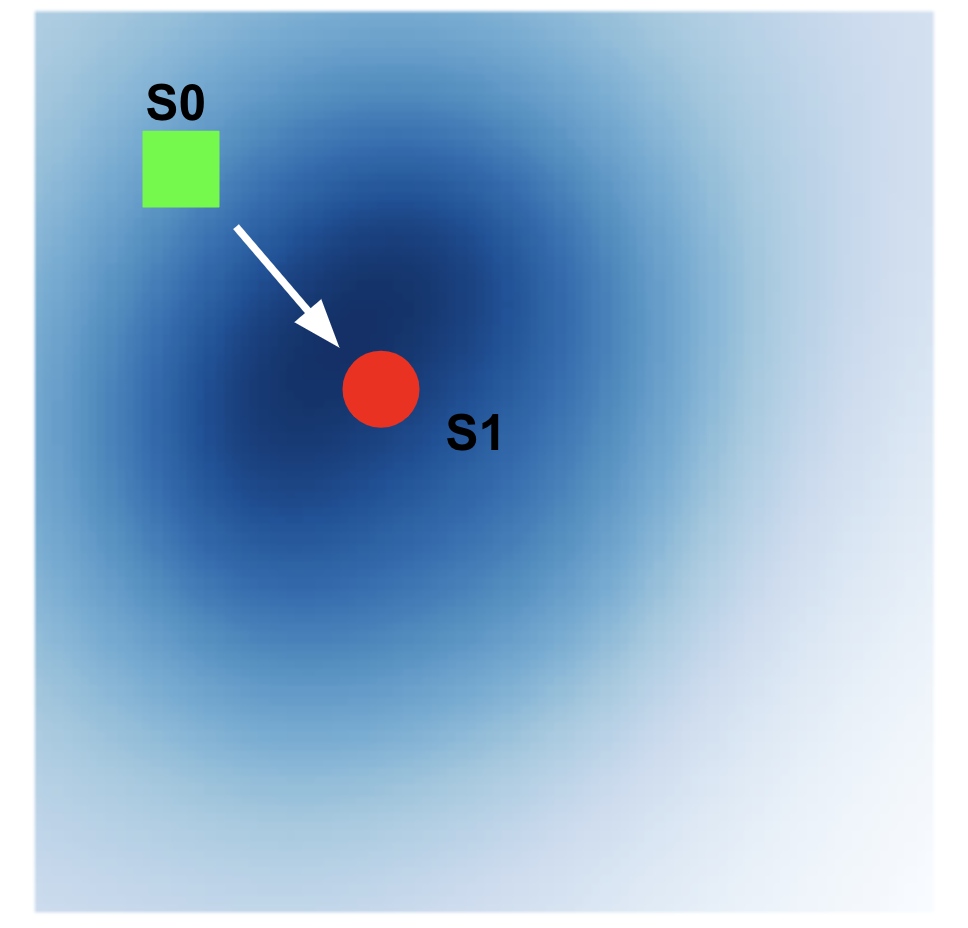}
    \vspace{-5pt}
    \caption{\color{black} Estimated Transition via \algabb.}
    \label{fig:heatmap}
\end{center}
\vspace{-13pt}
\end{wrapfigure}
We show that the \algabb objective can learn valid transitions of the environment. We use a empty-room maze environment, where the state is the position of the agent and the action is the velocity. The transition can be expressed as $s^\prime = s + a t + \epsilon$, where $t$ is a fixed time interval and $\epsilon \sim \mathcal{N}(0, I)$. We run \algabb for $100K$ steps and the learned transition heatmap is visualized in~\Figref{fig:heatmap}. The blue region is the heatmap estimation via spectral decomposition and S1 is the target position of the agent. The high density region is centered around the red dot (target position S1), which means the representation learned by our objective captures the environment transition. This shows the spectral decomposition can learn a good transition function.

\subsection{Imitation Learning}


\begin{wrapfigure}{r}{0.4\textwidth}
\vspace{-7pt} 
\begin{center}
    \includegraphics[height=50pt]{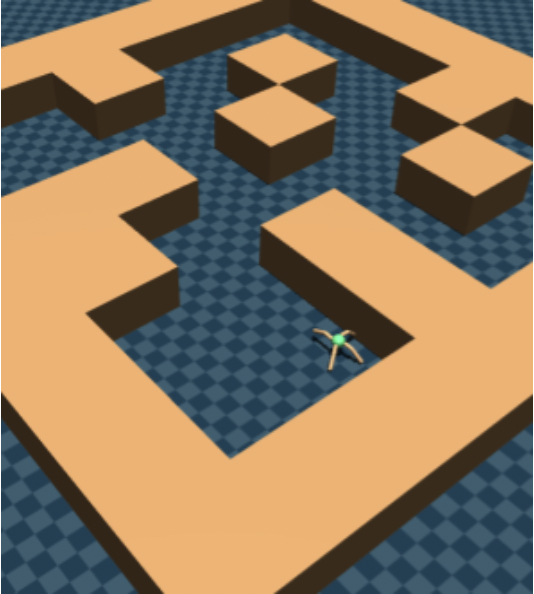}
    \hspace{5pt}
    \includegraphics[height=50pt]{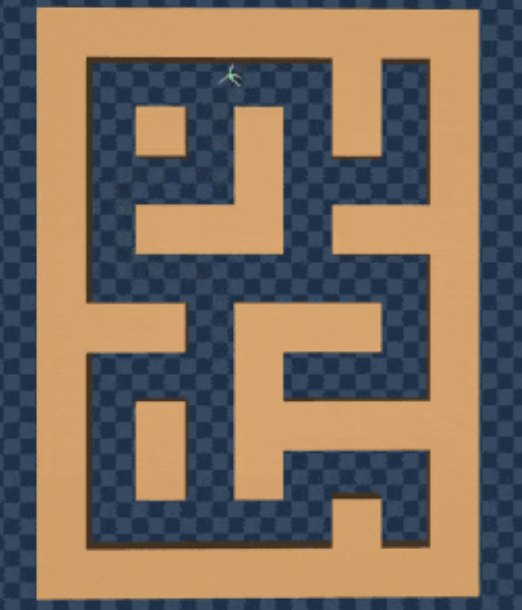}
    \caption{AntMaze navigation domains in mazes of medium (left) and large (right) sizes.}
    \label{fig:antmaze_envs}
\end{center}
\vspace{-13pt}
\end{wrapfigure}

For all methods, we use latent behavioral cloning as described in~\Secref{sec:speder_bc} to pre-train representations on a suboptimal dataset $\mathcal{D}^\textrm{off}$, then finetune on the expert dataset $\mathcal{D}^{\pi^*}$ for downstream imitation learning. We also compare with baseline behavioral cloning (BC)~\citep{pomerleau1998autonomous}, which directly learns a policy from the expert dataset (without latent representations) by maximizing the log-likelihood objective, $\mathbb{E}_{(s,a) \sim \textrm{Pr}(\mathcal{D}^{\pi^*})}\left[ - \log \pi(a \mid s) \right]$.

We report the average return on AntMaze tasks, and observe that SPEDER achieves comparable performance as other state-of-the-art representations on downstream imitation learning (Figure~\ref{fig:bc_antmaze_barplot}).
We also observe that the normalized marginalization regularization~\eqref{eq:speder-normalization-regularizer} helps performance (Figure~\ref{fig:bc_antmaze_barplot_ablation_normalization_regularizer}). We provide the performance curves for imitation learning in Figures~\ref{fig:antmaze_lineplot} and~\ref{fig:antmaze_ablation_lineplot}.

\begin{figure}[hb]
    \centering
    \includegraphics[width=\textwidth]{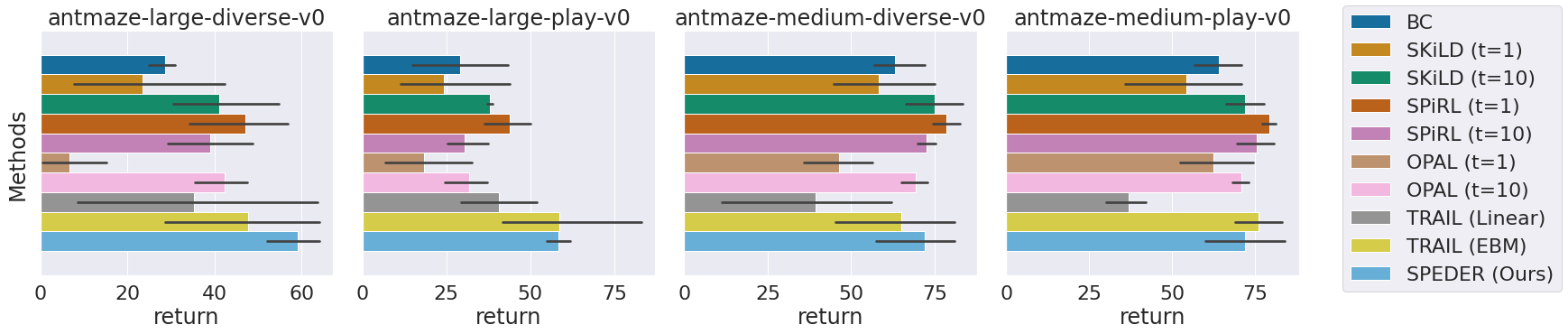}
    \caption{Average return on imitation learning tasks from D4RL AntMaze~\citep{fu2020d4rl}. BC corresponds to behavioral cloning on the expert dataset without latent representations. All other methods pre-train representations on a suboptimal dataset, and then finetune on an expert dataset. 
    }
    \label{fig:bc_antmaze_barplot}
\end{figure}
\begin{figure}[hb!]
    \centering
    \includegraphics[width=\textwidth]{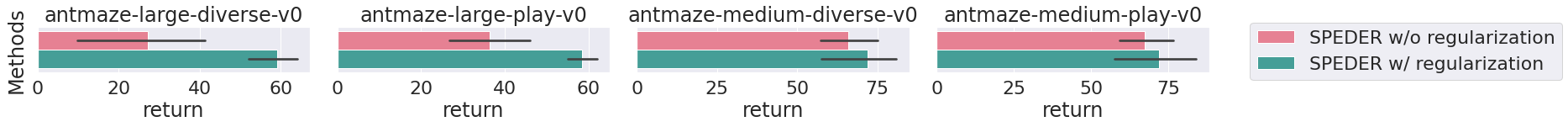}
    \vspace{-15pt}
    \caption{Ablation of SPEDER with vs. without normalized marginalization regularization~\eqref{eq:speder-normalization-regularizer}.}
    \label{fig:bc_antmaze_barplot_ablation_normalization_regularizer}
\end{figure}

\begin{figure}[t]
    \centering
    \includegraphics[width=\textwidth]{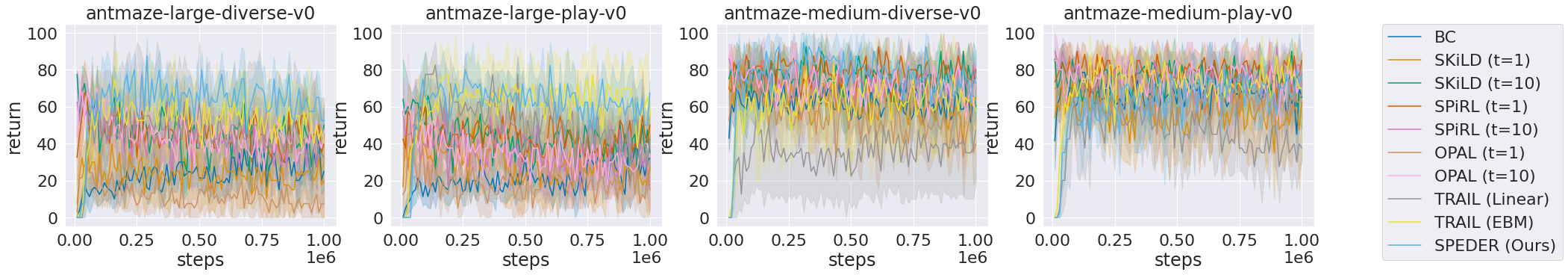}
    \caption{After pre-training, we train latent behavioral cloning on top of the learned representations for 1M iterations. BC refers to direct behavioral cloning on the expert dataset without latent representations. The corresponding barplot of the final performance is provided in Figure~\ref{fig:bc_antmaze_barplot}.}
    \label{fig:antmaze_lineplot}
\end{figure}

\begin{figure}[t]
    \centering
    \includegraphics[width=\textwidth]{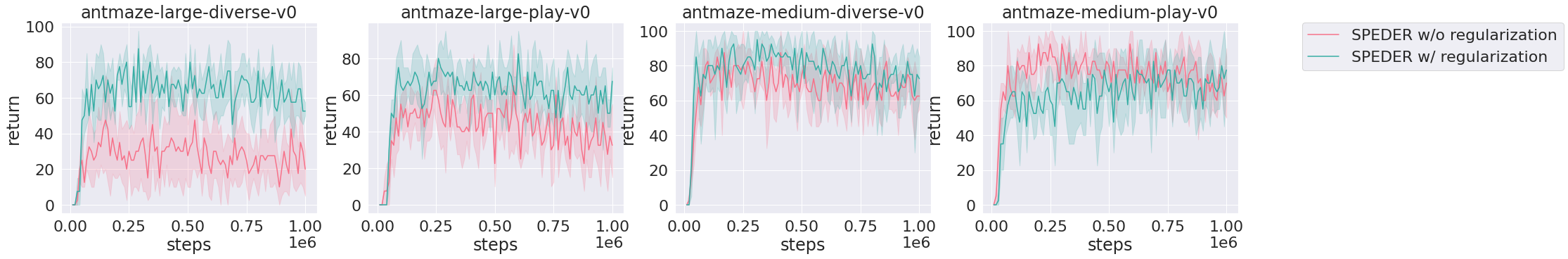}
    \caption{Performance curve of downstream behavioral cloning for SPEDER with vs. without normalized marginalization regularization~\eqref{eq:speder-normalization-regularizer}. The corresponding barplot of the final performance is provided in Figure~\ref{fig:bc_antmaze_barplot_ablation_normalization_regularizer}.}
    \label{fig:antmaze_ablation_lineplot}
\end{figure}

For TRAIL, OPAL, SPiRL, SKiLD, and BC, we used the same hyperparameters as reported in~\citet{yang2021trail}. For all methods, we pre-trained the representations for 200K steps using Adam optimizer with a learning rate of 3e-4 and batch size 256. For latent behavioral cloning, we train the latent policy $\pi_Z$ for 1M iterations using a learning rate of 1e-4 for BC, SPiRL, SkiLD, and OPAL, and 3e-5 for SPEDER and TRAIL (both EBM and Linear).  We found that decaying the BC learning rate helped prevent overfitting for all methods. We evaluate the policy every 10K iterations by rolling out the policy in the environment for 10 episodes, and recording the average return. The representations $\phi$ and action decoder $\pi_\alpha$ were frozen during downstream behavioral cloning. All imitation learning results are reported over 4 seeds.

Both the action decoder $\pi_\alpha$ and the latent policy $\pi_Z$ are parameterized as a multivariate Gaussian distribution, with the mean and variance approximated using a two-layer MLP network with hidden layer size 256. 

For \algabb and TRAIL, $\phi$ and $\mu$ are parameterized as a 2-layer MLP with hidden layer size 256, and a Swish activation function~\citep{ramachandran2017searching} at the end of each hidden layer. We ran a sweep of embedding dimensions $d \in \{64, 256\}$ and found that $d=64$ worked best for TRAIL, and $d=256$ worked best for \algabb. For SPEDER, we ran a sweep of coefficients for each loss term in~\eqref{eq:speder-objective}, and summarize the coefficients used in Table~\ref{tab:speder-bc-coefficients}. For TRAIL Linear, we used a Fourier dimension of 8192, which has been provided more preference, while still performing worse.

For SPiRL, SkiLD and OPAL, we used an embedding dimension of 8, which was reported to work best~\citep{yang2021trail}. The trajectory encoder is parameterized as a bidirectional RNN, and the skill prior is parameterized as a Gaussian network following~\citep{ajay2020opal}. SPiRL and SkiLD are adapted for downstream behavioral cloning by minimizing the KL divergence between the latent policy and the skill prior.

\begin{table}[t!]
    \caption{Loss coefficients used for SPEDER~\eqref{eq:speder-objective}. We denote the loss coefficients as $a_1$ for the $\mathbb{E}_{p(s')}\left[ \mu(s')^\top \mu(s') \right]/(2d)$ term; $a_2$ for the $\mathbb{E}_{(s,a)\sim \rho_0}\left[ \phi(s,a) \phi(s,a)^\top \right] = I_d/d$ term; and $a_3$ for the additional normalization regularization term in~\eqref{eq:speder-normalization-regularizer}.}
    \centering
    \footnotesize
    \begin{tabular}{c|cc|ccc}
    & \multicolumn{2}{c}{SPEDER w/o normalization} & \multicolumn{2}{c}{SPEDER w/ normalization~\eqref{eq:speder-normalization-regularizer}}\\
    \toprule
    Domain & $a_1$ & $a_2$ & $a_1$ & $a_2$ & $a_3$ \\
    \midrule
    antmaze-large-diverse & 0.1 & 0.1 & 0.01 & 0.01 & 1.\\
    antmaze-large-play & 0.01 & 0.01 & 1. & 0.01 & 1.\\
    antmaze-medium-diverse & 1. & 0.01 & 0.1 & 1. & 0.1\\
    antmaze-medium-play& 1. & 0.01 & 1. & 0.1 & 1.\\
        \bottomrule
    \end{tabular}
    \label{tab:speder-bc-coefficients}
\end{table}

\end{document}